\title{A Sharp KL-Convergence Analysis for Diffusion Models under Minimal Assumptions}
\def\1{\bm{1}}
\def\vw{{\bm{w}}}
\DeclareMathAlphabet{\mathsfit}{\encodingdefault}{\sfdefault}{m}{sl}
\SetMathAlphabet{\mathsfit}{bold}{\encodingdefault}{\sfdefault}{bx}{n}
\newcommand{\E}{\mathbb{E}}
\newcommand{\KL}[2]{\mathrm{KL}\left(#1 \big\| #2\right)}
\def\thickhline{%
  \noalign{\ifnum0=`}\fi\hrule \@height \thickarrayrulewidth \futurelet
   \reserved@a\@xthickhline}
\def\@xthickhline{\ifx\reserved@a\thickhline
               \vskip\doublerulesep
               \vskip-\thickarrayrulewidth
             \fi
      \ifnum0=`{\fi}}
\newlength{\thickarrayrulewidth}
\newtheorem{lemma}{Lemma}[section]
\newtheorem{corollary}[lemma]{Corollary}
\newtheorem{assumption}{Assumption}[section]
\author{
  Nishant Jain\thanks{Mail to {nj27@illinois.edu}}   \quad
  Tong Zhang \\
  University of Illinois Urbana-Champaign \\
}
\begin{document}

\date{}

\maketitle

\begin{abstract}
    Diffusion-based generative models have emerged as highly effective methods for synthesizing high-quality samples. Recent works have focused on analyzing the convergence of their generation process with minimal assumptions, either through reverse SDEs or Probability Flow ODEs. The best known guarantees, without any smoothness assumptions, for the KL divergence so far achieve a linear dependence on the data dimension $d$ and an inverse quadratic dependence on $\varepsilon$. In this work, we present a refined analysis that improves the dependence on $\varepsilon$. We model the generation process as a composition of two steps: a reverse ODE step, followed by a smaller noising step along the forward process. This design leverages the fact that the ODE step enables control in Wasserstein-type error, which can then be converted into a KL divergence bound via noise addition, leading to a better dependence on the discretization step size. We further provide a novel analysis to achieve the linear $d$-dependence for the error due to discretizing this Probability Flow ODE in absence of any smoothness assumptions. We show that $\tilde{O}\left(\tfrac{d\log^{3/2}(\frac{1}{\delta})}{\varepsilon}\right)$ steps suffice to approximate the target distribution corrupted with Gaussian noise of variance $\delta$ within $O(\varepsilon^2)$ in KL divergence, improving upon the previous best result, requiring $\tilde{O}\left(\tfrac{d\log^2(\frac{1}{\delta})}{\varepsilon^2}\right)$ steps.
\end{abstract}

\section{Introduction}

\noindent
Recently, diffusion based models have picked up momentum for the trending generative modelling scenario. They are being extensively used  for image-generation \cite{song2019generative, croitoru2023diffusion, lugmayr2022repaint, song2020denoising, nichol2021glide, song2021maximum, ho2020denoising}, video-generation \cite{epstein2023diffusion, chen2023control}, semantic editing \cite{lugmayr2022repaint}, generating text \cite{li2022diffusion} or audio signals \cite{liu2023audioldm}, protein design analysis \cite{gruver2023protein, guo2024diffusion} and many other areas. The success of these diffusion models largely stems from their ability to generate high-quality samples using a denoising mechanism. This is achieved by defining a forward noising process that gradually perturbs data from the target distribution, during which a \textit{score} function is learned. New samples are then generated by iteratively applying the reverse of this process, guided by the learned score function. The forward process can be modeled as a Stochastic Differential Equation (SDE) \cite{song2020score}, and consequently the generation can be carried out by simulating its reverse-time SDE through discretization. Corresponding to this reverse-time SDE, there also exists a Probability Flow Ordinary Differential Equation (ODE) \cite{song2020score}, which shares the same marginal distributions at all times. Consequently, two main approaches have emerged for sample generation: simulating the reverse SDE \cite{song2020score, chen2022sampling} and simulating this Probability Flow ODE \cite{xu2024provably}. \\

\noindent
Several works \cite{chen2022sampling, chen2022improved, holzmuller2023convergence, benton2023nearly, li2024d} have targetted the theoretical underpinnings behind the working of these diffusion models, under various assumptions. These have established polynomial convergence w.r.t. data dimension under accurate score estimation, smoothness of true score at all times, bounded support assumptions. A recent focus \cite{chen2023improved, li2024d, benton2023nearly} is on further minimizing the assumptions and achieving good convergence just using the estimated score with error $\tilde{O}(\varepsilon)$. The best recent result \cite{li2024d} is achieving $O(d/T)$ convergence rate w.r.t TV distance for the DDPM diffusion model \cite{ho2020denoising}.
A recent work \cite{benton2023nearly} also achieved linear dependence on the data dimension for convergence w.r.t. KL divergence and requires $\tilde{O}(\frac{d}{\varepsilon^2})$ steps to achieve KL divergence within $\varepsilon^2$ w.r.t. to Gaussian perturbation of the true data distribution. Since TV is bounded by square root of the KL divergence, it is an important issue to investigate whether a better convergence rate is achievable in the KL-divergence. Even though the linear dependence on $d$ seems fine still the dependence on $\frac{1}{\varepsilon^2}$ might not be optimal. In this work, we are interested in investigating the following question: 

\begin{quote}
    \emph{Can we improve the dependence of the KL-divergence error on the $\varepsilon$ and thereby arrive at the improved convergence rate for diffusion models?}
\end{quote}

\noindent
To achieve this goal, we explore the line of works \cite{xu2024provably, gao2024convergence, li2024unified} which consider the reverse Probability Flow ODE. This is motivated from the finding that we can have better discretization dependence for each interval when analysing the Wasserstein type error directly using the ODEs \cite{chen2023probability}.
However, aggregating and bounding the error across all the intervals just based on this ODE requires additional assumptions either related to the error in Divergence \cite{li2024unified} or the Jacobi \cite{li2024d} between the true and approximated score. Therefore, \cite{chen2023probability} instead considers smoothness of true and approximate score function at all times to bound the Wasserstein error in each interval using the reverse ODE and adds a noising step utilizing Langevin dynamics to then convert the Wasserstein error to TV. However, this noising via Langevin finally results in a suboptimal dependence on $\varepsilon$. Given the improved dependence on step size the reverse Probability Flow ODE can offer, we also consider using it but instead of additional assumptions or the Langevin dynamics, we just consider taking a smaller step in the forward (noising) direction. 
This way we are able to consider the error due to discretization on the reverse ODE and then convert it into KL error via the noise addition with a better dependence on step size for each interval, which can then be aggregated across all the intervals. 
This ODE step and a smaller noise step can be interpreted as an alternative simulation of the reverse SDE based generation process. The idea of noise addition along the forward process to convert the Wasserstein type error to KL is inspired from a recent work \cite{jain2025multi} which provided convergence analysis for the consistency model framework \cite{song2023consistency}. \\
Unlike \cite{chen2023probability}, we consider the minimal assumptions scenario  \cite{chen2023improved, benton2023nearly} and just consider the accurate score estimation assumption. To achieve an optimal $d$-dependence under this setup for our considered ODE path, we take inspiration from \cite{benton2023nearly} (which considers the reverse SDE and by establishing equivalence to stochastic localization directly picks up a known result from the literature) and investigate additional relations between the score function and its Jacobian. Furthermore, as discussed in the paper, the analysis along this ODE path involves terms containing Laplacian of the score function along with the terms containing both score function and its Jacobian which makes the analysis more complicated than the SDE counterpart.
By establishing the required novel relations between the score function and its higher order gradient terms, we are able to achieve the linear dependence on data dimension $d$ for the ODE, matching the previous best result \cite{benton2023nearly}. Due to our improved dependence on discretization step size, this results in a new \textit{state-of-the-art} guarantee for the KL convergence requiring $\tilde{O}(d/\varepsilon)$ iterations to achieve $\varepsilon^2-$KL divergence improving up the previous best result of $\tilde{O}(\tfrac{d}{\varepsilon^2})$ \cite{benton2023nearly}. Also, since the TV distance is upper bounded by the square root of KL-divergence, this becomes the \textit{state-of-the-art} convergence guarantee for diffusion models as against the TV convergence guarantee provided in \cite{li2024d}.

\subsection{Related Work}
\noindent
Here we provide a literature review of the works targetting diffusion based generation broadly categorized into whether they consider the reverse SDE or the Probability Flow ODE. 
\paragraph{SDE-Based generation.} The effectivenss of this forward noising and the corresponding denoising process for generation was first majorly advocated by 
 Diffusion Probabilistic Models (DDPM) framework introduced \cite{ho2020denoising}, which utilised the Gaussian transition kernels at each step for noising and showed that the corresponding denoising kernels are also Gaussian, which can be estimated by denoising score matching during training \cite{ho2020denoising}. Going further it was shown that this forward noising in DDPMs can be seen as an SDE \cite{song2020score} and the generation process then correspond to the reverse SDE, if tractable. Since then there have been various works \cite{chen2022sampling,li2023towards, li2024towards, lee2022convergence} targetting the convergence of this generation process. To advocate for their usability in the real world, some recent works \cite{chen2023improved, benton2023nearly, li2024d} have also targetted setups for the SDE based generation methods requiring minimal assumptions (just the bound on score estimation during training via denoising score matching) and have achieved state-of-the-art convergence guarantees under various metrics. Specifically, \cite{benton2023nearly} shows only $O(d/\varepsilon^2)$ steps are required to be $\varepsilon^2$-close in KL w.r.t a Gaussian perturbation of the target distribution. On the other hand, \cite{li2024d} considers the TV-distance and shows $O(d/\varepsilon)$ steps are required to achieve $\varepsilon-$close TV of the perturbed data distribution.

\paragraph{ODE based generation.} The seminal work \cite{song2020score} which showed that the forward noising process in the diffusion based generation and the corresponding denoising process can be modelled as SDEs, also highlighted that there exists a Probability Flow ODE which achieves has the same marginal distribution at any time. It also advocated that this Probability Flow ODE can lead to faster sampling using the ODE solvers. Taking inspiration, a following work \cite{song2020denoising} then proposed a deterministic counterpart of the DDPM sampler and since then various works have attempted to investigate the convergence of these deterministic samplers \cite{li2024sharp, gao2024convergence, huang2025convergence, li2023towards, li2024unified} under various additional assumptions. The current best result \cite{li2024sharp} achieves a  TV distance of $\varepsilon$ (w.r.t. perturbation of the true data distribution) in $O(\frac{d}{\varepsilon})$ steps under score estimation and an additional assumption on the Jacobi of the estimated score. Another work \cite{li2024unified} requires a weaker assumption on the Divergence of the estimated score but achieves sub-optimal results. These works have also argued that under just the score estimation assumption, the TV-distance for these  deterministic samplers is lower bounded unlike SDE and thus, such additional assumptions are required. Another line of work is based on the predictor-corrector sampling \cite{song2020score, chen2023probability} which utilised both the ODE step and addition of small noise using langevin dynamics for smoothening the trajectory to avoid the error blow-up due to ODE. For this scenario, the convergence can be achieved \cite{chen2023probability} under standard assumptions on score estimation and the smoothness of the data distribution. Instead of utilising the Langevin dynamics for noise addition, we just directly add the stochastic noise along the forward process after the ODE step and are able to achieve the state-of-the-art convergence guarantee under just the score estimation assumption.

\section{Preliminaries and Setup}

\noindent
We now discuss the formulation behind diffusion models in detail, including both ODE and SDE-based generation. Following this, we discuss the assumptions used to achieve the results provided in the next section.

\paragraph{SDE considered and its discretization.} As discussed previously, diffusion models are based on a forward noising process and the reverse generation process. The forward process for $d-$dimensional setup can be seen as taking the given samples and gradually corrupting them using the SDE of the following form \cite{song2020score}:
\begin{align*}
    dx(t) = -\mu  (x(t),t)dt + g(t)dw_t
\end{align*}
where $x(0)=y\sim p_{data}$, $x(t)\in \mathbb{R}^d$, $\mu$ and $g$ correspond to the drift and diffusion coefficients, $w_t$ is the $d-$dimensional Brownian Motion. Following the popular choice of the OU process, we consider the following SDE:
\[
dx(t) = -x(t)dt + \sqrt{2}d{w}_{t}
\]
The corresponding OU process would be:
\begin{align}
\label{fwd_process}
x(t) = e^{-t}y + \sqrt{1-e^{-2t}}\cdot \epsilon, \qquad \epsilon\sim \mathcal{N}(0,I_d)
\end{align}
where $p_t$ denotes the law at time $t$, $x(t)\sim p_t$ and $y\sim p_{data}$. Also, the joint distribution of the random variables generated via this process at time-stamps corresponding to a sequence $\{t_1,..,t_K\}$: $(x_{t_1},...,x_{t_K})$ is denoted as $p_{t_1,...,t_K}$.
The resulting reverse SDE \cite{song2020score} for generation will be:
\begin{align}
\label{reverse_true_sde}
dx(t) = -x(t)dt - 2\nabla \ln p_{t}(x(t))dt + \sqrt{2}d\bar{w}_t   
\end{align}
where $\nabla \ln p_{t}(x(t))$ is referred to as the \textit{score} function. If the forward process is run from time $T$ then initializing from $p_T$ and going along this reverse process for a time $T-t$ will result in the marginal $p_t$.
For the corresponding probability flow ODE \cite{song2020score}, we have the following equation:
\begin{equation}
dx(t) = -x(t)dt - s(t,x(t))dt
\label{eq:fode}
\end{equation}
where we denote $s(t,\cdot)=\nabla \ln p_t(\cdot)$. Using the Exponential Integrator discretization \cite{chen2023improved} where we divide the overall generation time into intervals $[t_i,t_{i+1}]$ and fix the input to the score function for each interval to be the value at the start ($x_{{k}}$), leads to the following ODE for the interval $[t_{k-1},t_{k}]$:
 \begin{align*}
 dx(t) = -x(t)dt -s_{}(t_{k}, x_{{k}})dt
\end{align*}
\paragraph{Empirical Counterpart.} Practically, we donot have the true score function $s(t,\cdot)$ and instead during training it is approximated via \textit{denoising score matching} \cite{song2020denoising}. Denoting that approximated score function as $\hat{s}(t,\cdot)$, we have the following empirical version (discretized and using the approximate score) of the true ODE:
\begin{align}
\label{discretized_em_ode}
 d\hat{x}(t) = -\hat{x}(t) dt -\hat{s}({t_k},\hat{x}_{k})dt
\end{align}
where we denote the law of this empirical process at time $t$ as $\hat{p}_t$. For any particular discretization $\{t_k\}^N_{k=1}$ of the reverse process, the joint distribution for  the true ($x_{t_1},...,x_{t_N}$), reverse ($\hat{x}_{t_1},....,\hat{x}_{t_N}$) processes are denoted as $p_{t_1,...., t_N}$, $\hat{p}_{t_1,...,t_N}$. Also the conditional distribution at $t-1$ conditioned on $t$ is denoted as $p_{t-1|t}$, $\hat{p}_{t-1,t}$ for the true and empirical processes respectively. 
We now discuss the assumptions used in our theoretical framework. \\

\paragraph{Assumptions.} As discussed in the introduction, for our theoretical analysis, we take inspiration from the line of works operating under minimal assumptions \cite{benton2023nearly, chen2023improved, li2024d}, and just use the following standard assumptions: 
\begin{assumption}
\label{assumption_score_est}
    \textit{For the discretization sequence $\{t_{k}\}^{K+1}_{k=1}$ discussed in the next section (and used in the Inference Algorithm \ref{alg:1}), the score function estimate} $\{\hat{s}(t,\cdot)\}_{1\leq t \leq T}$ satisfies:
    \begin{equation}
    \label{eq_score_est}
     \frac{1}{T}\sum_{k=1}^{K+1}h_k\mathbb{E}_{x \sim p_{t_k}}\left [\|\hat{s}(t_k,x)- s(t_k,x)\|^2 \right ] \leq \varepsilon^2_{score}.
\end{equation}
where $h_k=t_{k}-t_{k-1}$ corresponds to the step size of the discretization.

\end{assumption}

\begin{assumption}
\label{finite_moment}
    The data distribution $p_{data}$ has finite second order moment $\mathbb{E}_{x_0 \sim p_{\text{data}}} \left[ \|x_0\|_2^2 \right] = m_2 < \infty$.
\end{assumption}

\subsection{Notations}
\label{notations}
\noindent
 As discussed above, $y$ corresponds to the data distribution $p_{data}$, $x(t)$ (with its law denoted by $p_t$ and score function as $s(\cdot)$) corresponds to the forward OU process and $z(t)$ (with its law denoted by $q_t$ and score function as $s_r(\cdot)$) is a rescaled version of the forward process discussed in Appendix. $\tilde{x}(t)$ corresponds to the discretized version of the true reverse process. $\hat{x}'_k$ denotes the sequence of random variables generated by our algorithm for a  discretization sequence $\{t_k\}$ and  their law is denoted by $\hat{p}_{t_k}$. The step size $t_k-t_{k-1}$ is denoted as $h_k$. $x_k$ corresponds to the random variables generated by the forward process for this time sequence. $\tilde{x}_{k-1}$, $\hat{x}_{k-1}$ corresponds to random variables generated by running our proposed scheme (with True, Empirical Probability Flow ODE respectively) for a single interval $[t_{k-1},t_k]$ starting from $x_k$ at $t_k$. 
 $\tilde{x}_{k-0.5}$,$\hat{x}_{k-0.5}$ corresponds to the random variable generated by taking two steps along the discretized (Empirical, True respectively) probability flow ODE in reverse direction  starting from ${x}_{k}$ and $x_{k-0.5}$ denotes two steps of Probability Flow ODE from $x_k$.
$\nabla s(t,x)$ denotes the Jacobian of the score and $\partial_t$ corresponds to the partial derivative w.r.t. time $t$. We further define $\partial_i$ as the partial derivative w.r.t. $i^{th}$ coordinate of the spatial variable x/z (can be interpreted as $\partial_{x_i}/\partial_{z_i}$). We also define Laplacian operator $\Delta = \sum_{i=1}^d\partial_i\partial_i$. The $i^{th}$ element of the score vector $s(\cdot)$ is denoted by $s(\cdot)_i$. $\mathcal{N}(0,I_d)$ denotes the d-dimensional standard Normal distribution.
For two terms $P, Q$ $P\lesssim Q$ means there exist an absolute constant $C_1$ such that $P \leq C_1 Q$.

\section{Main Results}
\noindent
{As discussed in previous works \cite{chen2023probability}, based on the ODE, we can just analyze the Wasserstein error by using the Young's and Grownwall's inequality, and the aggregation leads to blow-up in the error and thus, some noise is added  via Langevin dynamics based corrector after a small ODE step to instead convert into the TV and then aggregate it. Here, we instead take a more simplistic perspective for the diffusion use-case, inspired from a recent work \cite{jain2025multi}. We consider first taking a step along the reverse ODE to bound the Wasserstein distance and then taking a partial step in the noising (forward) direction to convert the Wasserstein to KL. \\
\noindent
We now consider a discretization sequence $0<\delta=t_0<t_1< t_2 <...<t_K<t_{K+1}=T$, where  $T$ denotes the total time for the generation process initializing from $\mathcal{N}(0,I_d)$. Denoting $h_{k} = t_{k}-t_{k-1}$, we provide the inference procedure in Algorithm \ref{alg:1}. It is based on the Exponential Integrator discretization of the Empirical Probability Flow ODE (Eq. \ref{discretized_em_ode}) in the step 4 followed by noise addition along the forward process in step 6. The step along the ODE can be used to control the Wasserstein error \cite{chen2023probability} and then the noise addition can convert this into KL. This is discussed further in the next section and in detail (along with technical lemmas) in Appendix \ref{wass_kl_subsec}. We denote the generated sequence from our algorithm by random variables $\hat{x}'_k$ (corresponding to time $t_k$) and their law  by $\hat{p}_{t_k}$ and the joint distribution for the complete sequence as $\hat{p}_{t_1,...,t_K, t_{K+1}}$. Similarly, corresponding to the sequence generated by the forward process along these time-stamps, we will have the join distribution as ${p}_{t_1,...,t_K,t_{K+1}}$.
Figure \ref{fig_alg} shows this algorithm/generation process. }
\noindent
\begin{algorithm}[H]
\caption{Inference Algorithm For Diffusion Models}
\label{alg:1}
\begin{algorithmic}[1]
\STATE \textbf{Given:} Discretizing sequence $\{t_1,..,t_K\}$, $h_k=t_k-t_{k-1}$, $\hat{p}'_{t_K}$ as the Normal distribution $\mathcal{N}(0,I_d)$
\STATE Sample $\hat{x}'_K \sim \hat{p}_{t_K}$
\FOR{$k = K, K-1, \ldots, 1$}
    \STATE $\hat{x}'_{k-0.5} =  e^{ h_k +h_{k-1}} \hat{x}'_{k} +
 (e^{h_k+h_{k-1}}-1) 
 \hat{s}(t_{k},\hat{x}'_{{k}})$ \\
    \STATE Sample $\eta_k \sim \mathcal{N}(0, I_d)$
    \STATE $\hat{x}'_{k-1} = e^{-h_{k-1}} \hat{x}'_{k-0.5} + \sqrt{1 - e^{-2h_{k-1}}}\, \eta_k$ 
\ENDFOR
\STATE \textbf{Output} $\hat{x}'_1$
\STATE $\hat{p}_{t_1}$ denotes the density of $\hat{x}'_1$
\STATE $\hat{p}'_{t_1,.., t_{K+1}}$ denotes joint density of $(\hat{x}'_1, \hat{x}'_2, \ldots, \hat{x}'_{K+1})$
\end{algorithmic}
\end{algorithm}
\noindent
{We now provide the guarantee for the distribution generated by Algorithm \ref{alg:1} in terms of KL divergence w.r.t. perturbation of the true data distribution (law of the forward process at $t_1$) $p_{t_1}$. This is the early stopping scenario (since $t_1>\delta>0$)\cite{benton2023nearly, chen2023improved} as we have not used any assumption regarding the smoothness of the data distribution.}

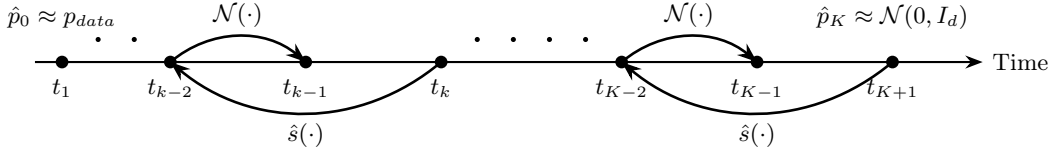
\begin{figure}
\centering
\begin{tikzpicture}[scale=1.2, >=Stealth, 
    thick, 
    every node/.style={font=\small}
  ]

  \draw[->] (-0.5,0) -- (10,0) node[right] {Time};

  \node at (-0.2,0.5) {$\hat{p}_0 \approx p_{data}$};
  \fill (-0.2,0) circle (2pt) node[below=3pt] {$t_{1}$};
  \fill (0.2,0.25) circle (0.8pt) node[below=3pt] {};
  \fill (0.6,0.25) circle (0.8pt) node[below=3pt] {};
  \fill (1,0) circle (2pt) node[below=3pt] {$t_{k-2}$};
  \fill (2.5,0) circle (2pt) node[below=3pt] {$t_{k-1}$};
  \fill (4,0) circle (2pt) node[below=3pt] {$t_k$};
  \fill (4.4,0.25) circle (0.8pt) node[below=3pt] {};
  \fill (4.8,0.25) circle (0.8pt) node[below=3pt] {};
  \fill (5.2,0.25) circle (0.8pt) node[below=3pt] {};
  \fill (5.6,0.25) circle (0.8pt) node[below=3pt] {};
  \fill (6,0) circle (2pt) node[below=3pt] {$t_{K-2}$};
  \fill (7.5,0) circle (2pt) node[below=3pt] {$t_{K-1}$};
  \fill (9,0) circle (2pt) node[below=3pt] {$t_{K+1}$};
  \node at (9,0.5) {$\hat{p}_K \approx \mathcal{N}(0,I_d)$};

  \draw[->, bend left=40, line width=1pt] (4,0) to node[below] {$\hat{s}(\cdot)$} (1,0); 
  \draw[->, bend left=40, line width=1pt] (1,0) to node[above] {$\mathcal{N}(\cdot)$ } (2.5,0);
  \draw[->, bend left=40, line width=1pt] (9,0) to node[below] {$\hat{s}(\cdot)$} (6,0); 
  \draw[->, bend left=40, line width=1pt] (6,0) to node[above] {$\mathcal{N}(\cdot )$ } (7.5,0);

\end{tikzpicture}
\caption{Demonstrating the two updates: (a) along the generation process using $\hat{s}(\cdot)$ and (b) the forward noising process $(\mathcal{N}(\cdot))$, of our proposed scheme.} 
\label{fig_alg}

\end{figure}

\begin{restatable}{theorem}{maintheorem}
\label{main_theorem_non_smooth}
    Under assumptions \ref{assumption_score_est} and \ref{finite_moment}, for the generation process provided in Algorithm \ref{alg:1}
    given the discretization sequence $0<\delta=t_0<t_1<t_2<...<t_{K+1}=T$, with step size denoted $h_k = t_k-t_{k-1}$. For $T\geq 1$, $K > d(\log(\frac{1}{\delta}) + T)$ there will exist a $c$  such that when $h_k = c \min\{1,t_k\}$, we will have (denoting $\hat{p}_{t_k}$ as the marginal at $t_k$ for this algorithm, and $p_t$ as the distribution of the forward process (Eq. \ref{fwd_process}) at time $t$):
     \begin{align}
     \label{main_thm_eq}
         \KL{p_{t_1}}{\hat{p}_{t_1}}
         &\lesssim (d+m_2)e^{-T} + d^2c^3K + T \varepsilon^2_{score} 
    \end{align}
where recall $x\lesssim y$ means there is some absolute constant $C$ such that $x \leq Cy$.
\end{restatable}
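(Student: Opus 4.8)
The plan is to decompose the KL divergence $\KL{p_{t_1}}{\hat p_{t_1}}$ along the discretization grid using a chain-rule/data-processing argument, and then bound the one-step discrepancy on each interval $[t_{k-1},t_k]$ via the two-stage structure of Algorithm~\ref{alg:1}: first a reverse-ODE step that is analyzed in a Wasserstein-type (squared $L^2$) sense, then a Gaussian noising step of size $h_{k-1}$ that contracts/converts that Wasserstein error into a KL bound with a favorable dependence on the step size. Concretely, I would first invoke the standard decomposition $\KL{p_{t_1}}{\hat p_{t_1}} \le \KL{p_{t_1,\dots,t_{K+1}}}{\hat p'_{t_1,\dots,t_{K+1}}}$ followed by a telescoping over the conditionals $p_{t_{k-1}\mid t_k}$ vs.\ $\hat p_{t_{k-1}\mid t_k}$, plus the initialization term $\KL{p_T}{\mathcal N(0,I_d)} \lesssim (d+m_2)e^{-2T}$ which is bounded using Assumption~\ref{finite_moment} and the explicit OU marginals in \eqref{fwd_process}. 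This produces the $(d+m_2)e^{-T}$ term and reduces the problem to controlling $\sum_k \EE\big[\KL{p_{t_{k-1}\mid t_k}(\cdot\mid x_k)}{\hat p_{t_{k-1}\mid t_k}(\cdot\mid x_k)}\big]$.

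Next, for a fixed interval I would analyze the per-interval error in two parts. The ODE sub-step: comparing $\hat x_{k-0.5}$ (two discretized empirical-ODE steps from $x_k$) with $x_{k-0.5}$ (the true probability-flow ODE from $x_k$), I expect a Wasserstein/$L^2$ bound of the form $\EE\|\hat x_{k-0.5}-x_{k-0.5}\|^2 \lesssim (\text{discretization error}) + (\text{score error on this interval})$, where the score-error piece aggregates to $T\varepsilon^2_{\text{score}}$ via Assumption~\ref{assumption_score_est} and the Grönwall constant stays bounded because $h_k=c\min\{1,t_k\}$. The discretization error here is where the Laplacian-of-score and Jacobian-times-score terms enter (from Taylor-expanding the ODE drift $-x-s(t,x)$ and tracking the $O(h^2)$ remainder), and this is exactly the place where the novel score/Jacobian/Laplacian relations alluded to in the introduction must be used to extract a term that is $d^2 c^3$ per unit time rather than, say, $d^3$ or worse — summing over $\sim T/c$ intervals then gives $d^2 c^3 K$ (using $K \asymp T/c$). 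The noising sub-step: given two laws within squared-Wasserstein $W$, adding independent Gaussian noise of variance $1-e^{-2h_{k-1}}\asymp h_{k-1}$ gives a KL bound $\lesssim W/h_{k-1}$ (a standard Gaussian-smoothing estimate), and since $W$ itself carries a factor controlled by $h_{k-1}$ from the ODE step, the $1/h_{k-1}$ is absorbed and one obtains a clean per-interval KL bound whose leading discretization term is $\lesssim d^2 h_k^3$ — this is the key mechanism behind the improved $\varepsilon$-dependence.

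The main obstacle I anticipate is the ODE-discretization term: without any smoothness assumption on the true score, bounding $\EE\big[\|\Delta s(t,x_t)\|\big]$ and $\EE\big[\|\nabla s(t,x_t)\, s(t,x_t)\|\big]$ (and cross terms) by something like $d^2/t^{\,O(1)}$ uniformly in $t$, using only the structure of the OU semigroup and integration-by-parts identities relating $\nabla\ln p_t$, $\nabla^2\ln p_t$ and $\nabla^3\ln p_t$ to posterior covariances/third cumulants of $p_{0\mid t}$. This is the step that replaces the "pick a known stochastic-localization bound off the shelf" move available in the SDE analysis of \cite{benton2023nearly}, and it is genuinely more delicate because of the extra Laplacian term that has no SDE analogue. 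I would handle it by expressing these quantities as moments of the denoising posterior (via Tweedie-type formulas), bounding those moments by the conditional variance (which is $\le 1-e^{-2t}\le 1$ coordinatewise by the forward dynamics plus a comparison inequality), and being careful that the resulting bound integrates to something finite near $t=\delta$ after multiplying by the $h_k^3\asymp c^3\min\{1,t_k\}^3$ weights; the assumption $K>d(\log(1/\delta)+T)$ is what guarantees $c$ is small enough for the Grönwall/Taylor remainders to be controlled and for the $d^2c^3K$ term to dominate the higher-order leftovers.
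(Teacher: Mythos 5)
Your high-level architecture is the paper's: data-processing/chain-rule decomposition of $\KL{p_{t_1}}{\hat p_{t_1}}$ into an initialization term plus per-interval conditional KLs; exact Gaussian-conditional computation (rather than an approximate smoothing estimate) converting the squared $L^2$ gap $\|x_{k-0.5}-\hat x_{k-0.5}\|^2$ into a per-interval KL with a $1/(1-e^{-2h_{k-1}})$ prefactor; a triangle split of that gap into a discretization piece ($T_{\mathrm{dis}}$) and a score-estimation piece ($T_{\mathrm{est}}$); and an integral Taylor-remainder bound on $T_{\mathrm{dis}}$ that reduces everything to controlling $\int e^{4t}\E[\|s_r'(t,z(t))\|^2]\,dt$ over the interval.

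The genuine gap is in the last and hardest step, and you flag it yourself but then propose a mechanism that cannot close it. You write that you would express $\Delta s_r$ and $\nabla s_r^\top s_r$ as posterior (Tweedie) moments and bound them by the conditional variance. That is precisely the ``naive'' route the paper discusses and discards: Jensen plus Gaussian moment bounds yield $\E[\|\nabla s_r\|_F^2]\lesssim d^2/(e^{2t}-1)^2$, $\E[\|\Delta s_r\|^2]\lesssim d^3/(e^{2t}-1)^3$, and $\E[\|\nabla s_r^\top s_r\|^2]\lesssim d^3/(e^{2t}-1)^3$; when plugged into the Taylor remainder this gives a $d^3 c^3 K$ discretization term, i.e.\ $d^{3/2}$ in KL, which is strictly worse than what the theorem claims. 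The ingredient you are missing is not a sharper pointwise posterior-moment bound (no such bound is available under only a second-moment assumption), but a \emph{telescoping} one: the paper establishes, via the score Fokker--Planck identity $\partial_t s_r = e^{2t}\Delta s_r + 2e^{2t}\nabla s_r^\top s_r$ and repeated integration by parts against $q_t$, exact identities of the form
\[
\frac{d}{dt}\E_{q_t}\big[\|s_r(t,z)\|_2^m\big] = -m\,e^{2t}\,\E_{q_t}\big[\|s_r\|^{m-2}\|\nabla s_r\|_F^2\big] - \tfrac{m(m-2)}{4}e^{2t}\,\E_{q_t}\big[\|s_r\|^{m-4}\big\|\nabla\|s_r\|^2\big\|^2\big],
\]
together with an analogous expansion of $\frac{d}{dt}\E_{q_t}[\|\nabla s_r\|_F^2]$ that isolates $\E_{q_t}[\|\Delta s_r\|^2]$. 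These express the troublesome $d^3$-type terms as a clean $d^2/(e^{2t}-1)^3$ constant \emph{plus total time derivatives} of score-moment functionals. Upon integrating over each interval and summing in $k$, those total-derivative boundary terms approximately telescope (this is exactly where the utility lemma on $f(x)-f((1-c)^2x)\lesssim c x^2$ enters and where the choice $h_k=c\min\{1,t_k\}$ is used), so their net contribution is again only $d^2c^3K$. Without this FPE-plus-IBP telescoping machinery, the $d^2$ claim in \eqref{main_thm_eq} does not follow.

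Two smaller inaccuracies: (i) the proof does not use a Gr\"onwall argument anywhere --- the Exponential Integrator structure together with the exact Gaussian conditionals makes Gr\"onwall unnecessary, and in fact a Gr\"onwall-type accumulation on the reverse ODE is what the noise-addition step is specifically designed to avoid; (ii) the posterior covariance of $y$ given $z(t)$ is \emph{not} bounded by $1-e^{-2t}$ (or $e^{2t}-1$ for the rescaled process) coordinatewise under Assumption~\ref{finite_moment} alone --- the paper only controls \emph{expectations} of posterior moments via Jensen and the fact that $\tfrac{y-z}{\sqrt{e^{2t}-1}}\sim\mathcal N(0,I_d)$, which is weaker but sufficient.
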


\noindent
We provide the proof of this theorem in the Appendix (\ref{final_theorem_proof}) and discuss a sketch of the complete proof in the next section. The first term corresponds to the error due to initializing the algorithm from $\mathcal{N}(0,I_d)$, second term corresponds to the error due to discretization and the third term is due to error in score estimation (Assumption \ref{assumption_score_est}). 
From the definition of $c$, it can be observed that (discussed in the proof as well) $c^3$ should be $O\left(\frac{(\log \frac{1}{\delta}+T)^3}{K^3}\right)$. Also we can observe that $T$ is required to just have a logarithmic dependence on $d$ and thus, the second term corresponding to the discretization error will be $\tilde{O}\left(\frac{d^2}{K^2}\right)$. 
We formalize this in the following corollary discussing the iteration complexity.

\begin{corollary}
\label{main_corollary}
    Under assumptions \ref{assumption_score_est}, \ref{finite_moment} running the update scheme in Algorithm \ref{alg:1} for the SDE based generation via diffusion models for a total time $T=\log\left(\frac{d+m_2}{\varepsilon_{score}}\right)$ with an exponentially decaying step size sequence $h_k = t_k-t_{k-1} = c\min\{t_k,1\}$ where $c=\Theta\left(\frac{\log(\frac{1}{\delta})+T}{K}\right)$  achieves a $\mathrm{KL}$-divergence error of $\tilde{O}(\varepsilon^2_{score})$ with an iteration complexity $K= \Theta\left(\frac{d\left(\log (\frac{1}{\delta})^{3/2}\right)}{\varepsilon_{score}}\right)$, improving upon the previous best complexity \cite{benton2023nearly} of $\Theta\left(\frac{d \log^2(\frac{1}{\delta})}{\varepsilon^2_{score}}\right)$ for the $\mathrm{KL}$-divergence.
\end{corollary}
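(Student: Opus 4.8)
The plan is to simply instantiate the bound of Theorem~\ref{main_theorem_non_smooth} with the schedule in the corollary and solve for $K$; no new analysis is needed beyond bookkeeping. Recall the theorem gives, whenever $T\ge 1$, $K > d(\log(\tfrac1\delta)+T)$, and $h_k = c\min\{1,t_k\}$,
\[
\KL{p_{t_1}}{\hat p_{t_1}} \;\lesssim\; (d+m_2)e^{-T} + d^2c^3K + T\varepsilon^2_{score}.
\]
First I would dispose of the initialization term: taking $T = \Theta\!\big(\log(\tfrac{d+m_2}{\varepsilon_{score}})\big)$ with a large enough absolute constant forces $(d+m_2)e^{-T}\le \varepsilon^2_{score}$, and this $T$ is only logarithmic in $d$, $m_2$, and $1/\varepsilon_{score}$. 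Consequently the score term $T\varepsilon^2_{score}$ is automatically $\tilde O(\varepsilon^2_{score})$, since $T=\tilde O(1)$.

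Next I would pin down $c$ as a function of $K$ via a counting argument for the exponentially decaying schedule $h_k=c\min\{1,t_k\}$ started at $t_0=\delta$. On the range $t_k\le 1$ the recursion $t_k-t_{k-1}=ct_k$ gives $t_{k-1}=(1-c)t_k$, so $t_k=\delta(1-c)^{-k}$ grows geometrically and reaching $t_k\approx 1$ costs $\Theta(\log(\tfrac1\delta)/c)$ steps; on the range $t_k>1$ the step size is the constant $c$, so reaching $T$ costs a further $\Theta(T/c)$ steps. Hence $K=\Theta\!\big((\log(\tfrac1\delta)+T)/c\big)$, i.e.\ $c=\Theta\!\big((\log(\tfrac1\delta)+T)/K\big)$, which is exactly the choice made in the corollary. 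Substituting this back, the discretization term becomes $d^2c^3K = \Theta\!\big(d^2(\log(\tfrac1\delta)+T)^3/K^2\big)$.

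It then remains to pick $K$ so that this last quantity is $O(\varepsilon^2_{score})$: solving $d^2(\log(\tfrac1\delta)+T)^3/K^2 \lesssim \varepsilon^2_{score}$ yields $K=\Theta\!\big(d(\log(\tfrac1\delta)+T)^{3/2}/\varepsilon_{score}\big)$, and since $T=\tilde O(1)$ — with $\log(\tfrac1\delta)$ the dominant term in the regime of interest — this is $\tilde\Theta\!\big(d(\log(\tfrac1\delta))^{3/2}/\varepsilon_{score}\big)$, as claimed. Before concluding I would check the standing hypotheses of the theorem are met for this choice: $T\ge 1$ holds once $\varepsilon_{score}$ is small, and $K>d(\log(\tfrac1\delta)+T)$ holds because $K/d=\tilde\Theta\!\big((\log(\tfrac1\delta))^{3/2}/\varepsilon_{score}\big)$ eventually exceeds $\log(\tfrac1\delta)+T=\tilde\Theta(\log(\tfrac1\delta))$ for $\varepsilon_{score}$ small enough. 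Adding the three contributions gives $\KL{p_{t_1}}{\hat p_{t_1}}=\tilde O(\varepsilon^2_{score})$ at the stated $K$, and the comparison with the $\tilde\Theta\!\big(d\log^2(\tfrac1\delta)/\varepsilon^2_{score}\big)$ complexity of \cite{benton2023nearly} is then immediate.

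The computation is otherwise routine; the one place that genuinely needs care — and the only real obstacle — is the bookkeeping linking $c$, $K$, and the $\log(\tfrac1\delta)$ factor: one must confirm that the geometric refinement of the step sizes near $t_0=\delta$ contributes $\Theta(\log(\tfrac1\delta)/c)$ steps (and not, say, $\Theta(1/(\delta c))$), and that the resulting $K$ is consistent with the hypothesis $K>d(\log(\tfrac1\delta)+T)$ so that Theorem~\ref{main_theorem_non_smooth} is actually applicable along the way.
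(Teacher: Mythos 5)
Your proof is correct and follows essentially the same route the paper uses (embedded in the proof of Theorem~\ref{main_theorem_non_smooth} and the proof sketch): instantiate the theorem, derive $c = \Theta\big((\log(1/\delta)+T)/K\big)$ from the geometric step count via $\delta=(1-c)^M$, $T-1=c(K+1-M)$, and then solve the discretization term $d^2c^3K$ for $K$. You also, rightly, take $T=\Theta\big(\log((d+m_2)/\varepsilon_{score})\big)$ with a constant large enough to force $(d+m_2)e^{-T}\le\varepsilon^2_{score}$, which is genuinely needed: the literal choice $T=\log((d+m_2)/\varepsilon_{score})$ in the corollary statement only makes that term $\varepsilon_{score}$, not $\varepsilon^2_{score}$.
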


\section{Proof Sketch}
\noindent
We now provide a brief sketch of the proof for Theorem \ref{main_theorem_non_smooth} and the complete details are provided in the Appendix. We begin by first discussing the decomposition of KL divergence into the Wasserstein type error aggregated in each interval. Since the ODE can result in a better dependence on the discretization step size \cite{chen2023probability}, this serves as the main motivation of our Algorithm \ref{alg:1} proposed with the aim of improving the step size dependence on the KL. Then, we discuss bounding the discretization error along this ODE path in the non-smooth scenario. Finally, we discuss on how the optimal dependence on $d$ can be achieved for this non-smooth setup leading to state of the art convergence guarantee for the KL divergence.
 
{
\paragraph{KL control for diffusion via Wasserstein type error.} We can first decompose the KL between between the generation process and the forward process at $t_1$: $\KL{p_{t_1}}{\hat{p}_{t_1}}$ using the data processing inequality and chain rule as follows  (Lemma \ref{lemma_second_KL_data_processing_ineq}):
\begin{align*}
   \KL{p_{t_1}}{\hat{p}_{t_1}} \leq  \KL{{p}_{t_{K+1}}}{\hat{p}_{t_{K+}}} + 
   \E_{{p}_{t_1,..,t_{K+1}}} \left[ \sum_{k=2}^{K+1}
       \KL{{p}_{t_{k-1}|t_k}(\cdot|{x}_{k})}{\hat{p}_{t_{k-1}|t_k}(\cdot|{x}_k)}\right]
\end{align*}
where $p_{t_{k-1}|t_k}$ denotes the conditional distribution of the true process at $t_{k-1}$ given $x_k$ at $t_k$ and similarly $\hat{p}_{t_{k-1}|t_k}$ for the generation process. 
The first term on the RHS is just the initialization error (the error by using the Normal distribution for initialization as against the distribution of the forward process after time $T$) and can be bounded following previous works \cite{chen2022sampling, chen2023improved} as $(d+m_2)e^{-T}$. The second term denotes the summation of the KL error aggregated in each interval $[t_{k-1},t_k]$ when the true and the generation process start from the same point ($x_k$). Now, to calculate this term, we will consider the following update for the interval $[t_{k-1},t_{k}]$ starting from $x_k$ using the empirical ODE and noise:
\begin{align}
  \hat{x}_{k-0.5} = &
 e^{h_k + h_{k-1}} x_{k} +
 (e^{h_k + h_{k-1}}-1) 
 \hat{s}(t_{k},{x}_{{k}}) \label{eq_empirical_gen_first}\\
  \hat{x}_{k-1} 
  =& e^{-h_{k-1}} \hat{x}_{k-0.5} 
 + \sqrt{1-e^{-2 h_{k-1}}} \epsilon_k , \qquad \epsilon_k \sim N(0,I). \label{eq_empirical_gen_second}
\end{align}
Based on this, the $\KL{{p}_{t_{k-1}|t_k}(\cdot|{x}_{k})}{\hat{p}_{t_{k-1}|t_k}(\cdot|{x}_k)}$ term can be written as (Lemma \ref{wass_kl_subsec}):
\begin{align*}
    \KL{{p}_{t_{k-1}|t_k}(\cdot|{x}_{k})}{\hat{p}_{t_{k-1}|t_k}(\cdot|{x}_k)} =  e^{-2h_{{k-1}}}
    \frac{\|
      x_{k-0.5}-\hat{x}_{k-0.5}\|_2^2}
    {2 (1  -e^{-2h_{k-1}})}    
\end{align*}
where $x_{k-0.5}$ denotes the random variable on the true reverse process at time $t_k-h_k - h_{k-1}$. This is inspired from a recent work \cite{jain2025multi} which instead considered the multi-step sampling scheme for ODE based consistency models \cite{song2023consistency}.
}
\subsection{Bounding $\E\|x_{k-0.5}-\hat{x}_{k-0.5}\|^2$}
\noindent
{For error control of this term, we define an additional process for the interval $[t_{k-1},t_k]$ (starting from $x_k$ and governed by Exponential Integrator discretization of the true Probability Flow ODE): $\tilde{x}_{k}$:
 \begin{align}
  \tilde{x}_{k-0.5} = &
 e^{h_k + h_{k-1}} x_{k} +
 (e^{h_k + h_{k-1}}-1) 
 s(t_{k},x_{{k}}) \label{discretized_true_first}\\
  \tilde{x}_{k-1} 
  =& e^{-h_{k-1}} \tilde{x}_{k-0.5} 
 + \sqrt{1-e^{-2 h_{k-1}}} \epsilon_k , \quad \epsilon_k \sim N(0,I) \label{discretized_true_second} .
\end{align}}

\noindent
Now, we decompose the target term corresponding to our scheme $\mathbb{E} \|\hat{x}_{k-0.5}-{x}_{k-0.5}\|_2^2 $ for each interval as follows:
\begin{align}
\label{error_decom}
    \sqrt{\mathbb{E}[\|x_{k-0.5}- \hat{x}_{k-0.5}\|^2_2]} \leq \underbrace{\sqrt{\mathbb{E}[\|x_{k-0.5}- \tilde{x}_{k-0.5}\|^2_2]}}_{T_{\text{dis}}} + \underbrace{\sqrt{\mathbb{E}[\|\tilde{x}_{k-0.5}- \hat{x}_{k-0.5}\|^2_2]}}_{T_{\text{est}}}
\end{align}
where $T_{est}$ is the error due to score estimation and $T_{dis}$ is the error due to the discretization of the true process. This score estimation error term can be written as $(e^{h_k+h_{k-1}}-1)^2\mathbb{E}[\|s(t_k,x_{k})- \hat{s}(t_k, x_{k}\|^2]$ (Lemma \ref{overall_w_2_error_decomp}) and aggregated across all the intervals can be bounded as ${O}(h_k\varepsilon^2_{score})$ using Assumption \ref{assumption_score_est} (further discussed in the proof of Theorem \ref{main_theorem_non_smooth} in Section \ref{final_theorem_proof}). To bound the discretization error, we first define a rescaled version of the original process as $z(t) = e^tx(t)$ (Section \ref{rescaling_and_discretizaton_error_analysis}) with the law at time denoted by $q_t$, score function denoted as $s_r(t,z(t))$. 
Using the ODE path of Eq. \ref{eq:fode} and the Integral Remainder form of the Taylor Expansion, we bound this discretization error (Lemma \ref{discretization_error_rescaled}):
\begin{align*}
\mathbb{E}\left[\|z_{k-0.5}-\tilde{z}_{k-0.5}\|_2^2\right] \leq \frac{1}{2}(h_k+h_{k-1})^3  \int_{t_{k-2}}^{t_k}e^{4t}\mathbb{E}\left[\left\|s'_r(t,z(t)) \right\|_2^2\right]dt
\end{align*}
where the derivative of the score $s_r'(t,z(t))$ can be calculated using Taylor Theorem as $s_r'(t,z(t)) = \frac{d}{dt}s_r(t_,z(t)) = {\frac{\partial s_r(t,z)}{\partial t}}
+ {{\frac{\partial s_r(t,z)}{\partial z}}{\frac{d z(t)}{d t}} \Big|_{z=z(t)}}$. {This is different from previous works \cite{benton2023nearly, chen2023improved} which instead consider the reverse SDE in Eq. \ref{reverse_true_sde} and thereby resulting in discretization error contribution to the KL for each interval as: $\int^{t_{k}}_{t_{k-1}}\mathbb{E}\left[\|s(t_k,x_k) - s(t,x(t))\|^2\right]dt$, which has a worse dependence on $h_k$ ($O(h^2_k)$) and bounded using the Jacobian of the score. Specifically, the best bound is achieved in \cite{benton2023nearly} which directly expresses the derivative of $\mathbb{E}\left[\|s(t_k,x_k) - s(t,x(t))\|^2\right]$ term w.r.t. $t$ in terms of $\mathbb{E}_{p_t}\left[\|\nabla s(t,x)\|\right]$ and then integrates the bound on this term to bound the discretization error.

\noindent
{Now, we discuss in detail on how to bound $\int^{t_k}_{t_{k-2}}\mathbb{E}\left[\left\|s_r'(t,z(t)) \right\|_2^2\right]dt$ following Eq \ref{eq:fode}. Using the trick proposed in \cite{chen2023improved}, we can write $z(t)$ as a Gaussian perturbation in $y\sim p_{data}$, thereby rewriting score function at time $t$: $s_r(t,z)$ as $\mathbb{E}_{y|z}\frac{y-z}{1-e^{-2t}}$ (Lemma \ref{score_calc}). A straightforward option is to then calculate the Jacobian $(\nabla s_r(t,z)$, partial derivative w.r.t. time $(\partial_t s_r(t,z))$ and use the Taylor Theorem to get an expression for $\mathbb{E}\left[\left\|s_r'(t,z(t)) \right\|_2^2\right]$. This can then be upper bounded using the fact that $\frac{y-z}{\sqrt{e^{2t}-1}}\sim \mathcal{N}(0,I_d)$ \cite{chen2023improved} (also discussed in Lemma \ref{score_bound}). This leads to a better dependence on $h_k$ than \cite{benton2023nearly} but will lead to $d^3$ dependence of our target term $\left(\int_{t_{k-2}}^{t_{k}}\mathbb{E}\left[\left\|s'(t,x(t)) \right\|_2^2\right]dt\right)$ and thus, a $d^{3/2}$ dependence for $\KL{p_{t_1}}{\hat{p}_{t_1}}$, which is worse than the $d$-dependence achieved for KL in \cite{benton2023nearly}.}\\

\subsubsection{Achieving the optimal $d$-dependence for ODE}

\noindent
 Lemma \ref{score_bound} shows that $\mathbb{E}\left[\left\|s_r(t,z)\right\|^2\right]$ can be bounded as $O(\frac{d}{e^{2t}-1})$ as against $O(\frac{d^2}{(e^{2t}-1)^2})$ for $\mathbb{E}\left[\left\|\nabla s_r(t,z)\right\|_F^2\right]$. {Therefore to have the $d-$dependence, we take inspiration from \cite{benton2023nearly} which first  establishes the equivalence of reverse SDE based to Stochastic Localization and then exploits a well known result from the Stochastic Localization literature (Lemma 1 in the paper). As we are considering the ODE path, instead of directly utilising such result, we begin by first establishing $\frac{d}{dt}\mathbb{E}_{q_t}[\|s_r(t,z)\|^2] = -2e^{2t}\mathbb{E}_{q_t}[\|\nabla s_r(t,z)\|^2_F]$ in Lemma \ref{spatial_time_relation}. Since our target term for the discretization error $\left(\int^{t_k}_{t_{k-2}}\mathbb{E}_{q_t}\left[\left\|s_r'(t,z(t)) \right\|_2^2\right]dt\right)$ depends on the integral (w.r.t. time) of the Jacobian term, using $\frac{d}{dt}\mathbb{E}_{q_t}[\|s_r(t,z)\|^2]$ can improve the $d^2$ contribution from this term to $d$. This serves as the motivation for the remaining sketch. As discussed above, since \cite{benton2023nearly} involves the reverse SDE, it just requires bounding $\mathbb{E}_{q_t}[\|\nabla s_r(t,z)\|^2_F]$ term. However for our considered Probability Flow ODE path, as discussed above, we need to bound the overall derivative term: $\mathbb{E}\left[\left\|s'_r(t,z)\right\|^2\right]$ which includes partial derivative w.r.t. time $\partial_t s_r(t,z)$ making the analysis much more complicated as discussed next.} \\

\noindent
We first convert the time-derivative to spatial derivatives using the the Fokker Plank equation for the score function (Lemma \ref{score_fpe}):
\begin{align*}
    \partial_t s_r(t,z) = e^{2t} \Delta s_r(t,z) + 2e^{2t}\nabla s_r(t,z)^\top s_r(t,z)
\end{align*}
where recall from Section \ref{notations} that $\Delta$ denotes the Laplacian of the score $s_r$.
This, results in the overall derivative term being represented only in terms of spatial derivative as follows (Lemma \ref{overall_to_spatial}):
\begin{align}
\label{overall_derivative_eq_proof_sketch}
    \mathbb{E}_{q_t}\left[\|s_r'(t,z)\|^2\right] &= e^{4t}{\mathbb{E}_{q_t}\left[{ \| \Delta s_r(t,z) \|^2_2} +{\left\|\nabla s_r(t,z)^\top s_r(t,z)\right\|_2^2} + (\Delta s_r(t,z))^\top\left(\nabla s_r(t,z)^\top s_r(t,z)\right)\right]} 
\end{align}
Since this overall derivative involves a term containing both score, its Jacobian and a term containing the Laplacian of the score, bounding this involves more complex analysis as compared to the SDE scenario \cite{benton2023nearly}. Now, based on the motivation discussed above to achieve optimal $d$ dependence by expressing \\
$\frac{d}{dt}\mathbb{E}_{q_t}[\|s_r(t,z)\|^2] = -2e^{2t}\mathbb{E}_{q_t}[\|\nabla s_r(t,z)\|^2_F]$, we further establish similar relations of the RHS terms in Eq. \ref{overall_derivative_eq_proof_sketch}. For the term comprising both $s_r$ and $\nabla s_r$ in Eq. \ref{overall_derivative_eq_proof_sketch}, we provide the generalized version of Lemma \ref{spatial_time_relation}
which considers general power $m$ in $\frac{d}{dt}\mathbb{E}_{q_t}\left[\|s_r(t,z)\|_2^m\right]$ and the term $\mathbb{E}\left[\|\nabla s_r(t,z)\|^2_2\right]$ (Lemma \ref{generalized_lemma}):
\begin{align*}
    e^{-2t}\frac{d}{dt}\mathbb{E}[\| s_r(t,z)\|_2^m] = -m\mathbb{E}[\|s_r(t,z)\|^{m-2}_2\|\nabla s_r(t,z)\|^2_F] 
    -\frac{m(m-2)}{4} \mathbb{E}\left[\|s_r(t,z)\|^{m-4}_2\left\| \left(\nabla \|s_r(t,z)\|^2_2\right)\right\|^2_2\right]
\end{align*}
We then utilise this equation to first write the second term in the RHS of our main Eq. \ref{overall_derivative_eq_proof_sketch} in terms of $\frac{d}{dt}\mathbb{E}_{q_t}[\| s_r(t,z)\|_2^q]$. To target the first term, we establish another novel relation by starting with the term $\frac{d}{dt}\mathbb{E}_{q_t}[\|\nabla s_r(t,z)\|_2^2]$ and expressing it in terms of $\int \Delta {q_t(z)}\|\nabla s_r(t,z)\|_F^2dt$, $\mathbb{E}_{q_t}[\| \Delta s_r(t,z)\|^2$ and \\
$\mathbb{E}_{q_t}[\|  \nabla \|s_r(t,z)\|^2_2\|_2^2$. Then, we rearrage and express $\mathbb{E}_{q_t}[\| \Delta s_r(t,z)\|^2$ in terms of $\int \Delta {q_t(z)}\|\nabla s_r(t,z)\|_F^2dt$, $\mathbb{E}_{q_t}\left[\|\nabla \|s_r(t,z)\|^2_2\|_2^2\right]$ and $\frac{d}{dt}\mathbb{E}_{q_t}\left[\|\nabla s_r(t,z)\|^2\right]$ (Lemma \ref{Laplacian_score}). We bound the term $\int \Delta {q_t(z)}\|\nabla s_r(t,z)\|_F^2dt$ as follows (Lemma \ref{hess_p_grad_s_norm}, $C_d$ present in the Lemma statement is $O(1)$ since $C_d \leq 12$ for $d\geq 10$):
\begin{align*}
   \int \Delta {q_t(z)}\|\nabla s_r(t,z)\|_F^2dt \lesssim  \frac{d^2}{(e^{2t}-1)^{3}}    -  \frac{e^{-2t}d}{(e^{2t}-1)}\frac{d}{dt}\mathbb{E}_{q_t}[\|s_r(t,z)\|^2] 
\end{align*}
leading to an overall bound on the $\mathbb{E}_{q_t}[\| \Delta s_r(t,z)\|^2]$ as (Lemma \ref{Laplacian_score}):
 \begin{align*}
       \mathbb{E}_{q_t}\left[\|\Delta s_r(t,z)\|_2^2\right] \lesssim \frac{d^2}{(e^{2t}-1)^{3}}    -  \frac{de^{-2t}}{(e^{2t}-1)} \frac{d}{dt}\mathbb{E}_{q_t}[\|s_r(t,z)\|^2]  - e^{-2t}\left(\frac{d}{dt}\mathbb{E}_{q_t}\left[\|\nabla s_r(t,z)\|_F^2\right] +\frac{d}{dt}  \mathbb{E}_{q_t}\left[\|s_r(t,z)\|^4_2\right]\right)
    \end{align*}
Finally, this leads to the following bound on $\mathbb{E}_{q_t}[\|  s_r'(t,z)\|_2^2]$ (Lemma \ref{overall_derivative_bound}):
\begin{align*}
        \mathbb{E}_{q_t}[\|s_r'(t,z)\|_2^2] \lesssim { {\frac{d^2e^{4t}}{(e^{2t}-1)^{3}}    -  \frac{e^{2t}d}{(e^{2t}-1)} \frac{d}{dt}\mathbb{E}_{q_t}[\|s_r(t,z)\|^2]  -  e^{2t}\left(\frac{d}{dt}\mathbb{E}_{q_t}\left[\|\nabla s_r(t,z)\|_F^2\right] + \frac{d}{dt}\mathbb{E}_{q_t}[\|s_r(t,z)\|^4]\right)} } 
\end{align*}
Integrating this and summing up across all the intervals, choosing $h_k=c \min\{t_k,1\}$ following the previous works  \cite{chen2023improved, benton2023nearly}   and scaling back to $\tilde{x}(t)$ along with accounting for the score estimation error and the initialization error leads to the following final expression for $\KL{p_{t_1}}{\hat{p}_{t_1}}$ (section \ref{final_theorem_proof}, refer to the analysis there for more details):
 \begin{align*}
         \KL{p_{t_1}}{\hat{p}_{t_1}}
         &\lesssim (d+m_2)e^{-T} + d^2c^3K + T \varepsilon^2_{score} 
    \end{align*}
    where due to the exponentially decaying step size $c \lesssim \frac{\log(\frac{1}{\delta})+T}{K}$ which results in $K = \Theta\left(\frac{d\log^{3/2}(\frac{T}{\delta})}{\varepsilon} \right)$ to achieve $\tilde{O}(\varepsilon^2)$ $\KL{p_{t_1}}{\hat{p}_{t_1}}$ error. 
}

\section{Conclusion}
{In this work we provided an improved analysis for generation process of the diffusion models under just the $L^2$-accurate score estimation and finite second moment of the data distribution assumption. We showed that by modelling the SDE based generation process as an ODE step followed by noising  and thereby targetting the discretization error along this ODE path can lead to better dependence on the step size. We also introduced a novel analysis framework for this ODE path which expresses the overall derivative of the score function in terms of spatial derivatives and establishes relations between the score and its first, second order spatial derivatives. This resulted in achieving linear dependence on $d$ for the considered ODE path, leading to a new \textit{state-of-the-art} convergence guarantee for KL divergence. Since KL upper bounds the square of the TV-distance by Pinsker's inequality, our result also provides a stronger guarantee than the best existing result for the TV distance \cite{li2024d}. 
An interesting future direction can be to investigate if the dependence on the step size can be improved further when considering this ODE step followed by noising framework, thereby enhancing the dependence on $\varepsilon$ and achieving faster convergence.}

\section*{Acknowledgment}
\noindent
This work is partially supported by NSF grant No. 2416897 and ONR grant No. N000142512318.

{\small
\bibliographystyle{plain}
\bibliography{egbib}

\begin{thebibliography}{10}

\bibitem{benton2023nearly}
Joe Benton, Valentin De~Bortoli, Arnaud Doucet, and George Deligiannidis.
\newblock Nearly $ d $-linear convergence bounds for diffusion models via stochastic localization.
\newblock {\em arXiv preprint arXiv:2308.03686}, 2023.

\bibitem{chen2023improved}
Hongrui Chen, Holden Lee, and Jianfeng Lu.
\newblock Improved analysis of score-based generative modeling: User-friendly bounds under minimal smoothness assumptions.
\newblock In {\em International Conference on Machine Learning}, pages 4735--4763. PMLR, 2023.

\bibitem{chen2023probability}
Sitan Chen, Sinho Chewi, Holden Lee, Yuanzhi Li, Jianfeng Lu, and Adil Salim.
\newblock The probability flow ode is provably fast.
\newblock {\em Advances in Neural Information Processing Systems}, 36:68552--68575, 2023.

\bibitem{chen2022sampling}
Sitan Chen, Sinho Chewi, Jerry Li, Yuanzhi Li, Adil Salim, and Anru~R Zhang.
\newblock Sampling is as easy as learning the score: theory for diffusion models with minimal data assumptions.
\newblock In {\em International Conference on Learning Representations}, 2023.

\bibitem{chen2023control}
Weifeng Chen, Yatai Ji, Jie Wu, Hefeng Wu, Pan Xie, Jiashi Li, Xin Xia, Xuefeng Xiao, and Liang Lin.
\newblock Control-a-video: Controllable text-to-video generation with diffusion models.
\newblock {\em arXiv e-prints}, pages arXiv--2305, 2023.

\bibitem{chen2022improved}
Yongxin Chen, Sinho Chewi, Adil Salim, and Andre Wibisono.
\newblock Improved analysis for a proximal algorithm for sampling.
\newblock In {\em Conference on Learning Theory}, pages 2984--3014. PMLR, 2022.

\bibitem{croitoru2023diffusion}
Florinel-Alin Croitoru, Vlad Hondru, Radu~Tudor Ionescu, and Mubarak Shah.
\newblock Diffusion models in vision: A survey.
\newblock {\em IEEE Transactions on Pattern Analysis and Machine Intelligence}, 45(9):10850--10869, 2023.

\bibitem{epstein2023diffusion}
Dave Epstein, Allan Jabri, Ben Poole, Alexei Efros, and Aleksander Holynski.
\newblock Diffusion self-guidance for controllable image generation.
\newblock {\em Advances in Neural Information Processing Systems}, 36:16222--16239, 2023.

\bibitem{gao2024convergence}
Xuefeng Gao and Lingjiong Zhu.
\newblock Convergence analysis for general probability flow odes of diffusion models in wasserstein distances.
\newblock {\em arXiv preprint arXiv:2401.17958}, 2024.

\bibitem{gruver2023protein}
Nate Gruver, Samuel Stanton, Nathan Frey, Tim~GJ Rudner, Isidro Hotzel, Julien Lafrance-Vanasse, Arvind Rajpal, Kyunghyun Cho, and Andrew~G Wilson.
\newblock Protein design with guided discrete diffusion.
\newblock {\em Advances in neural information processing systems}, 36:12489--12517, 2023.

\bibitem{guo2024diffusion}
Zhiye Guo, Jian Liu, Yanli Wang, Mengrui Chen, Duolin Wang, Dong Xu, and Jianlin Cheng.
\newblock Diffusion models in bioinformatics and computational biology.
\newblock {\em Nature reviews bioengineering}, 2(2):136--154, 2024.

\bibitem{ho2020denoising}
Jonathan Ho, Ajay Jain, and Pieter Abbeel.
\newblock Denoising diffusion probabilistic models.
\newblock {\em Advances in neural information processing systems}, 33:6840--6851, 2020.

\bibitem{holzmuller2023convergence}
David Holzm{\"u}ller and Francis Bach.
\newblock Convergence rates for non-log-concave sampling and log-partition estimation.
\newblock {\em arXiv preprint arXiv:2303.03237}, 2023.

\bibitem{huang2025convergence}
Daniel~Zhengyu Huang, Jiaoyang Huang, and Zhengjiang Lin.
\newblock Convergence analysis of probability flow ode for score-based generative models.
\newblock {\em IEEE Transactions on Information Theory}, 2025.

\bibitem{jain2025multi}
Nishant Jain, Xunpeng Huang, Yian Ma, and Tong Zhang.
\newblock Multi-step consistency models: Fast generation with theoretical guarantees.
\newblock {\em arXiv preprint arXiv:2505.01049}, 2025.

\bibitem{lee2022convergence}
Holden Lee, Jianfeng Lu, and Yixin Tan.
\newblock Convergence for score-based generative modeling with polynomial complexity.
\newblock {\em arXiv preprint arXiv:2206.06227}, 2022.

\bibitem{li2024towards}
Gen Li, Zhihan Huang, and Yuting Wei.
\newblock Towards a mathematical theory for consistency training in diffusion models.
\newblock {\em arXiv preprint arXiv:2402.07802}, 2024.

\bibitem{li2023towards}
Gen Li, Yuting Wei, Yuxin Chen, and Yuejie Chi.
\newblock Towards non-asymptotic convergence for diffusion-based generative models.
\newblock In {\em The Twelfth International Conference on Learning Representations}, 2023.

\bibitem{li2024sharp}
Gen Li, Yuting Wei, Yuejie Chi, and Yuxin Chen.
\newblock A sharp convergence theory for the probability flow odes of diffusion models.
\newblock {\em arXiv preprint arXiv:2408.02320}, 2024.

\bibitem{li2024d}
Gen Li and Yuling Yan.
\newblock $ o (d/t) $ convergence theory for diffusion probabilistic models under minimal assumptions.
\newblock {\em arXiv preprint arXiv:2409.18959}, 2024.

\bibitem{li2024unified}
Runjia Li, Qiwei Di, and Quanquan Gu.
\newblock Unified convergence analysis for score-based diffusion models with deterministic samplers.
\newblock {\em arXiv preprint arXiv:2410.14237}, 2024.

\bibitem{li2022diffusion}
Xiang Li, John Thickstun, Ishaan Gulrajani, Percy~S Liang, and Tatsunori~B Hashimoto.
\newblock Diffusion-lm improves controllable text generation.
\newblock {\em Advances in neural information processing systems}, 35:4328--4343, 2022.

\bibitem{liu2023audioldm}
Haohe Liu, Zehua Chen, Yi~Yuan, Xinhao Mei, Xubo Liu, Danilo Mandic, Wenwu Wang, and Mark~D Plumbley.
\newblock Audioldm: Text-to-audio generation with latent diffusion models.
\newblock {\em arXiv preprint arXiv:2301.12503}, 2023.

\bibitem{lugmayr2022repaint}
Andreas Lugmayr, Martin Danelljan, Andres Romero, Fisher Yu, Radu Timofte, and Luc Van~Gool.
\newblock Repaint: Inpainting using denoising diffusion probabilistic models.
\newblock In {\em Proceedings of the IEEE/CVF conference on computer vision and pattern recognition}, pages 11461--11471, 2022.

\bibitem{nichol2021glide}
Alex Nichol, Prafulla Dhariwal, Aditya Ramesh, Pranav Shyam, Pamela Mishkin, Bob McGrew, Ilya Sutskever, and Mark Chen.
\newblock Glide: Towards photorealistic image generation and editing with text-guided diffusion models.
\newblock {\em arXiv preprint arXiv:2112.10741}, 2021.

\bibitem{song2020denoising}
Jiaming Song, Chenlin Meng, and Stefano Ermon.
\newblock Denoising diffusion implicit models.
\newblock {\em arXiv preprint arXiv:2010.02502}, 2020.

\bibitem{song2023consistency}
Yang Song, Prafulla Dhariwal, Mark Chen, and Ilya Sutskever.
\newblock Consistency models.
\newblock {\em arXiv preprint arXiv:2310.02279}, 2023.

\bibitem{song2021maximum}
Yang Song, Conor Durkan, Iain Murray, and Stefano Ermon.
\newblock Maximum likelihood training of score-based diffusion models.
\newblock {\em Advances in neural information processing systems}, 34:1415--1428, 2021.

\bibitem{song2019generative}
Yang Song and Stefano Ermon.
\newblock Generative modeling by estimating gradients of the data distribution.
\newblock {\em Advances in neural information processing systems}, 32, 2019.

\bibitem{song2020score}
Yang Song, Jascha Sohl-Dickstein, Diederik~P Kingma, Abhishek Kumar, Stefano Ermon, and Ben Poole.
\newblock Score-based generative modeling through stochastic differential equations.
\newblock In {\em International Conference on Learning Representations}, 2020.

\bibitem{xu2024provably}
Xingyu Xu and Yuejie Chi.
\newblock Provably robust score-based diffusion posterior sampling for plug-and-play image reconstruction.
\newblock {\em arXiv preprint arXiv:2403.17042}, 2024.

\end{thebibliography}
}

\newpage
\tableofcontents
\appendix

\newpage

\section{Proof of Theorem \ref{main_theorem_non_smooth}}

\subsection{Bounding $\KL{p_{t_1}}{\hat{p}_{t_1}}$ as aggregation of $\mathbb{E}[\|x_{k-0.5}- \hat{x}_{k-0.5}\|^2_2]$ for each interval}
\label{wass_kl_subsec}

\noindent
We now discuss two lemmas: a) The first one converts Wasserstein type error between the empirical and true process to KL for each interval and the second one aggregates the KL across all the intervals. \\

\begin{lemma}
\label{lemma_first_kl_bw_conditionals}
 Denoting $\hat{p}_{t_{k-1}|t_k}$ be the conditional probability of
  $\hat{x}_{k-1}$  given $\hat{x}_k$, and let
  ${p}_{t_{k-1}|t_k}$ be the conditional probability of
  $x_{k-1}$ given $x_k$ using two steps of ODE and one step of noise similar to our algorithm. Then, for the updates in Eq. \ref{eq_empirical_gen_first}, Eq. \ref{eq_empirical_gen_second} (the updates of our Algorithm \ref{alg:1} for each interval given the same starting point for both is the true process at $t_k$:$x_{k}$):
  \[
    \KL{{p}_{t_{k-1}|t_k}(\cdot|{x}_{k})}{\hat{p}_{t_{k-1}|t_k}(\cdot|{x}_k)}
    =  e^{-2h_{k-1}}
    \frac{\|
      x_{k-0.5}-\hat{x}_{k-0.5}\|_2^2}
    {2 (1  -e^{-2h_{k-1}})} 
  \] 
where $h_{k}=t_{k}-t_{k-1}$ denotes the step size, $x_{k-0.5}$ corresponds to two steps of Probability Flow ODE from $x_k$ and thus, the law is same as of the forward process at time $t_k-h_{k}-h_{k-1}$.
\end{lemma}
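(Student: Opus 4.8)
The plan is to compute both conditional distributions explicitly, observe that they are Gaussians with the same covariance but different means, and then apply the closed-form KL formula for Gaussians. First I would write down the distribution of $\hat{x}_{k-1}$ given $x_k$: by Eq.~\ref{eq_empirical_gen_first} the intermediate variable $\hat{x}_{k-0.5}$ is a deterministic function of $x_k$ (namely $e^{h_k+h_{k-1}}x_k + (e^{h_k+h_{k-1}}-1)\hat{s}(t_k,x_k)$), and then Eq.~\ref{eq_empirical_gen_second} adds the scaled Gaussian $\sqrt{1-e^{-2h_{k-1}}}\,\epsilon_k$ after multiplying by $e^{-h_{k-1}}$. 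Hence $\hat{p}_{t_{k-1}|t_k}(\cdot\mid x_k) = \mathcal{N}\!\bigl(e^{-h_{k-1}}\hat{x}_{k-0.5},\,(1-e^{-2h_{k-1}})I_d\bigr)$. The ``true'' conditional is generated by the analogous recipe with the exact score replaced in — i.e. two exact Probability Flow ODE steps from $x_k$ giving the deterministic point $x_{k-0.5}$ (whose law, as noted in the lemma statement, coincides with the forward process at time $t_k-h_k-h_{k-1}$), followed by the identical noising step. So ${p}_{t_{k-1}|t_k}(\cdot\mid x_k) = \mathcal{N}\!\bigl(e^{-h_{k-1}}x_{k-0.5},\,(1-e^{-2h_{k-1}})I_d\bigr)$.

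Next I would simply invoke the standard identity $\KL{\mathcal{N}(\mu_1,\Sigma)}{\mathcal{N}(\mu_2,\Sigma)} = \tfrac12 (\mu_1-\mu_2)^\top \Sigma^{-1}(\mu_1-\mu_2)$, which holds whenever the two Gaussians share a covariance. Here $\mu_1-\mu_2 = e^{-h_{k-1}}(x_{k-0.5}-\hat{x}_{k-0.5})$ and $\Sigma = (1-e^{-2h_{k-1}})I_d$, so $\Sigma^{-1} = \tfrac{1}{1-e^{-2h_{k-1}}}I_d$, and the formula yields exactly
\[
\KL{{p}_{t_{k-1}|t_k}(\cdot\mid x_k)}{\hat{p}_{t_{k-1}|t_k}(\cdot\mid x_k)} = \frac{e^{-2h_{k-1}}\,\|x_{k-0.5}-\hat{x}_{k-0.5}\|_2^2}{2(1-e^{-2h_{k-1}})},
\]
which is the claimed expression. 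The only subtlety to be careful about is making sure that both conditionals really are \emph{exactly} Gaussian with the \emph{same} covariance — this is what makes the cross terms and log-determinant terms in the general Gaussian KL formula cancel, leaving only the Mahalanobis distance between the means. The forward-noising step in Eq.~\ref{eq_empirical_gen_second} is what guarantees this: conditioned on $x_k$, both $\hat{x}_{k-0.5}$ and $x_{k-0.5}$ are deterministic, and the noise $\epsilon_k$ is added with the same coefficient $\sqrt{1-e^{-2h_{k-1}}}$ in both processes.

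I do not anticipate a genuine obstacle here — the statement is essentially a bookkeeping lemma that isolates where the Wasserstein-type quantity $\|x_{k-0.5}-\hat{x}_{k-0.5}\|_2^2$ enters the KL. The one place requiring a little care is the precise definition of $x_{k-0.5}$: the lemma says it is ``two steps of Probability Flow ODE from $x_k$,'' so I would need to confirm that the exact reverse-ODE map over $[t_{k-1},t_k]$ (run as two Exponential-Integrator-style steps consistent with the discretization used for $\hat x$) indeed produces a point whose marginal law matches $p_{t_k - h_k - h_{k-1}}$ — this uses the defining property of the Probability Flow ODE that it transports $p_{t_k}$ to $p_{s}$ for $s<t_k$. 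With that identification in hand, the rest is the one-line Gaussian KL computation above.
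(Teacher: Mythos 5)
Your proof matches the paper's: identify both conditionals as Gaussians with the common covariance $(1-e^{-2h_{k-1}})I_d$ and means $e^{-h_{k-1}}x_{k-0.5}$ and $e^{-h_{k-1}}\hat x_{k-0.5}$, then apply the closed-form KL between equal-covariance Gaussians. One small remark: $x_{k-0.5}$ is the image of $x_k$ under the \emph{exact} Probability Flow ODE over $[t_k-h_k-h_{k-1},\,t_k]$ — not two Exponential-Integrator steps with the true score, which is the separate quantity $\tilde x_{k-0.5}$ — and while this distinction does not affect the KL identity (which only requires $x_{k-0.5}$ to be a deterministic function of $x_k$), it is what makes the marginal-law claim at the end of the lemma statement valid.
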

\noindent
\begin{proof}
For this, we know that from Algorithm~\ref{alg:1} that the conditional $\hat{p}_{t_{k-1}|t_k}(\cdot|{x}_{k})$ for the generation process is the following Gaussian: 
\[
\hat{p}_{t_{k-1}|t_k}(\cdot|{x}_{k}) \sim \mathcal{N}\left(e^{-h_{k-1}} \hat{x}_{k-0.5}, \left(1 - e^{-2h_{k-1}}\right) I_d \right)
\]
where $I_d$ is the d-dimensional identity matrix. Similarly, for the true process we can just write:
\[
{p}_{t_{k-1}|t_k}(\cdot|{x}_{k}) \sim \mathcal{N}\left(e^{-h_{k-1}} {x}_{k-0.5}, \left(1 - e^{-2h_{k-1}}\right) I_d \right)
\]
Now, since the covariance matrices are same for both, we can just use the following formulae for calculating KL between two Gaussians with different means but same variance:
\[
\KL{{p}_{t_{k-1}|t_k}(\cdot|{x}_{k})}{\hat{p}_{t_{k-1}|t_k}(\cdot|{x}_k)} = \frac{1}{2}(\mu_1-\mu_2)^\top\Sigma^{-1}(\mu_1-\mu_2)  
\]
where $\mu_1$, $\mu_2$ corresponds to the mean of the two distributions and $\Sigma$ corresponds to their covariance. For this case, we have:
\begin{align*}
\mu_1 &= e^{-h_{k-1}} \hat{x}_{k-0.5} \\
\mu_2 &= e^{-h_{k-1}} {x}_{k-0.5} \\
\Sigma &= \left(1 - e^{-2h_{k-1}} \right) I_d
\end{align*}
Merely substituting these values in the KL formulae will lead to the desired term. \\
\end{proof}

\noindent
For KL-aggregation, we have the following lemma:
\begin{lemma}
\label{lemma_second_KL_data_processing_ineq}
  For the discretization sequence $t_1,...,t_{K+1}$ and the law corresponding to the generation process in Algorithm \ref{alg:1}, we will have (where ${p}_{t_k}$ denotes the law of true process at time $t_k$):
  \begin{align*}
    \KL{{p}_{t_1}}{\hat{p}_{t_1}}
    \leq & \KL{{p}_{t_1, t_2, ..., t_{K}, t_{K+1}}}{\hat{p}_{t_1, t_2, ..., t_{K},t_{K+1}}}\\
    =& \KL{{p}_{t_{K+1}}}{\hat{p}_{t_{K+1}}} + 
   \E_{{p}_{t_1,..,t_{K}, t_{K+1}}} \left[ \sum_{k=2}^{K+1}
       \KL{{p}_{t_{k-1}|t_k}(\cdot|{x}_{k})}{\hat{p}_{t_{k-1}|t_k}(\cdot|{x}_k)}\right]
  \end{align*}
\end{lemma}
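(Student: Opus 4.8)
\textbf{Proof proposal for Lemma~\ref{lemma_second_KL_data_processing_ineq}.}

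The plan is to prove the inequality via the data processing inequality and the equality via the chain rule for KL divergence. First I would establish the inequality $\KL{p_{t_1}}{\hat p_{t_1}} \leq \KL{p_{t_1,\dots,t_{K+1}}}{\hat p_{t_1,\dots,t_{K+1}}}$. This follows because $p_{t_1}$ (resp.\ $\hat p_{t_1}$) is the marginal of the joint law $p_{t_1,\dots,t_{K+1}}$ (resp.\ $\hat p_{t_1,\dots,t_{K+1}}$) onto the first coordinate; since projecting onto a coordinate is a (deterministic) Markov kernel, the data processing inequality for KL gives that the divergence can only decrease. One subtlety worth checking: the joint law $p_{t_1,\dots,t_{K+1}}$ here is the forward OU process sampled at those times, and $\hat p_{t_1,\dots,t_{K+1}}$ is the joint law generated by Algorithm~\ref{alg:1}; both have $t_1$-marginals equal to $p_{t_1}$ and $\hat p_{t_1}$ respectively, so the projection argument applies cleanly.

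Next I would expand the joint KL using the chain rule. Write the joint density as a telescoping product of conditionals: $p_{t_1,\dots,t_{K+1}} = p_{t_{K+1}} \prod_{k=2}^{K+1} p_{t_{k-1}\mid t_k}(\cdot \mid x_k)$, using the fact that the forward OU process is Markov (so conditioning on $x_k$ makes $x_{k-1}$ independent of $x_{k+1},\dots,x_{K+1}$), and similarly $\hat p_{t_1,\dots,t_{K+1}} = \hat p_{t_{K+1}} \prod_{k=2}^{K+1} \hat p_{t_{k-1}\mid t_k}(\cdot \mid x_k)$, where the Markov property here is by construction of Algorithm~\ref{alg:1} (each $\hat x'_{k-1}$ depends on the past only through $\hat x'_k$). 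Taking $\log$ of the ratio and then the expectation under $p_{t_1,\dots,t_{K+1}}$, the log splits additively; the leading term gives $\KL{p_{t_{K+1}}}{\hat p_{t_{K+1}}}$ (after marginalizing out all coordinates except $t_{K+1}$), and each remaining term is $\E_{p_{t_1,\dots,t_{K+1}}}\big[\log \tfrac{p_{t_{k-1}\mid t_k}(x_{k-1}\mid x_k)}{\hat p_{t_{k-1}\mid t_k}(x_{k-1}\mid x_k)}\big]$. For a fixed $k$, conditioning first on $x_k$ and integrating out $x_{k-1}$ against $p_{t_{k-1}\mid t_k}(\cdot\mid x_k)$ turns the inner expectation into $\KL{p_{t_{k-1}\mid t_k}(\cdot\mid x_k)}{\hat p_{t_{k-1}\mid t_k}(\cdot\mid x_k)}$, and the remaining outer expectation over $x_k$ (equivalently over the full tuple, since the integrand depends only on $x_k$) yields exactly the claimed sum.

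The step I expect to be the main obstacle is being careful about what "the true process" means in the two places it appears — the joint $p_{t_1,\dots,t_{K+1}}$ from the forward OU process versus the conditional $p_{t_{k-1}\mid t_k}$ described in Lemma~\ref{lemma_first_kl_bw_conditionals} as "two steps of ODE and one step of noise." These must be shown to coincide: the composition of two exact probability-flow-ODE steps (which preserve marginals) from $t_k$ down to $t_k - h_k - h_{k-1} = t_{k-2}$, followed by one exact noising step of the forward OU process from $t_{k-2}$ to $t_{k-1}$, reproduces the genuine forward-process conditional $p_{t_{k-1}\mid t_k}$ — because the ODE shares all marginals with the SDE and the noising kernel is the true OU transition kernel; this is precisely why the decomposition is useful. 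Once that identification is in place, everything else is routine bookkeeping with the chain rule and Fubini. I would state the Markov property of both processes explicitly as the justification for the telescoping factorization, since that is the one place a reader might want reassurance.
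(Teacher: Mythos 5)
Your overall structure — data processing inequality for the marginalization $(x_1,\dots,x_{K+1})\mapsto x_1$, then chain rule for KL using the Markov factorization of both joints — is exactly the paper's (one-line) argument. That part is fine.

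The genuine gap is in your final paragraph, where you assert that ``two exact ODE steps $+$ one OU noising step'' from $x_k$ reproduces the forward OU conditional $p_{t_{k-1}\mid t_k}$ because the ODE shares marginals with the SDE. This is false: sharing marginals does not imply sharing conditionals. The exact probability-flow ODE flow $\Phi_{t_k\to t_{k-2}}$ gives a \emph{deterministic} map of $x_k$, so under ``two ODE $+$ one noise'' the conditional $x_{k-1}\mid x_k$ is a Gaussian centered at $e^{-h_{k-1}}\Phi_{t_k\to t_{k-2}}(x_k)$; under the forward OU process, $x(t_{k-1})\mid x(t_k)$ is a Bayes posterior that is generically non-Gaussian, and even when $p_{\mathrm{data}}$ is Gaussian the two means differ (e.g.\ take $p_{\mathrm{data}}=\delta_0$, $t_{k-1}=1$, $h_k=h_{k-1}=1/2$: the ODE gives factor $\sqrt{(1-e^{-1})(1-e^{-3})}/(1-e^{-2})\approx 0.90$ while the OU posterior gives $1$). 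The correct reading — which your argument should make explicit rather than ``prove'' a coincidence that does not hold — is that the joint ${p}_{t_1,\dots,t_{K+1}}$ appearing in this lemma is the law of the \emph{auxiliary Markov chain} obtained by sampling $x_{K+1}\sim p_{t_{K+1}}$ and then iterating the true two-ODE-steps-plus-noise transition. That chain has the same single-time marginals $p_{t_k}$ as the forward OU process (ODE flow preserves marginals, and the added noise is the true OU kernel from $t_{k-2}$ to $t_{k-1}$), so the DPI on the $t_1$-coordinate still gives $\KL{p_{t_1}}{\hat p_{t_1}}$ on the left. The chain rule then produces precisely the conditionals of Lemma~\ref{lemma_first_kl_bw_conditionals}, and since each summand depends on the trajectory only through $x_k$, taking the outer expectation against the auxiliary chain is the same as taking it against $p_{t_k}$, so nothing downstream is affected.
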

\begin{proof}
    The first inequality is just the data processing inequality and second equation is the chain rule for KL.
\end{proof}

\subsection{Analysing $\mathbb{E}[\|x_{k-0.5}- \hat{x}_{k-0.5}\|^2_2]$}

\noindent
 We begin by first decomposing the term into a discretization error component and the error due to using the estimated score instead of true score. 

\begin{lemma}
\label{overall_w_2_error_decomp}
    We have:
    \begin{align*}
    {\mathbb{E}[\|x_{k-0.5}- \hat{x}_{k-0.5}\|^2_2]} &\leq 
    2{\mathbb{E}[\|x_{k-0.5}- \tilde{x}_{k-0.5}\|^2_2]} 
    + {2(e^{h_k+h_{k-1}}-1)^2\mathbb{E}[\|s(t_k,x_{k})- \hat{s}(t_k, x_{k})\|_2^2]} \\
    \end{align*}
    where first term in the RHS corresponds to the discretization error $(T_{dis})$ and the second term is the score estimation error $(T_{est})$.
\end{lemma}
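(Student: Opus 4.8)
The plan is to compare the two one-interval updates that share the same starting point $x_k$: the exact (but discretized-via-true-score) process $\tilde x_{k-0.5}$ defined in Eq.~\eqref{discretized_true_first} and our algorithm's process $\hat x_{k-0.5}$ defined in Eq.~\eqref{eq_empirical_gen_first}. First I would write out the difference of these two quantities directly. Since both updates have exactly the same affine part $e^{h_k+h_{k-1}}x_k$ and differ only in which score is plugged in, subtraction cancels the $e^{h_k+h_{k-1}}x_k$ term and yields
\[
\tilde x_{k-0.5}-\hat x_{k-0.5}=(e^{h_k+h_{k-1}}-1)\bigl(s(t_k,x_k)-\hat s(t_k,x_k)\bigr).
\]
Taking squared norms and expectations gives
\[
\mathbb{E}\bigl[\|\tilde x_{k-0.5}-\hat x_{k-0.5}\|_2^2\bigr]=(e^{h_k+h_{k-1}}-1)^2\,\mathbb{E}\bigl[\|s(t_k,x_k)-\hat s(t_k,x_k)\|_2^2\bigr],
\]
which identifies the estimation term $T_{\text{est}}$ exactly.

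Next I would insert the intermediate process $\tilde x_{k-0.5}$ into the target quantity by the triangle inequality in $L^2(\mathbb{P})$: $\|x_{k-0.5}-\hat x_{k-0.5}\|_{L^2}\le \|x_{k-0.5}-\tilde x_{k-0.5}\|_{L^2}+\|\tilde x_{k-0.5}-\hat x_{k-0.5}\|_{L^2}$. Squaring this and applying the elementary inequality $(a+b)^2\le 2a^2+2b^2$ yields
\[
\mathbb{E}\bigl[\|x_{k-0.5}-\hat x_{k-0.5}\|_2^2\bigr]\le 2\,\mathbb{E}\bigl[\|x_{k-0.5}-\tilde x_{k-0.5}\|_2^2\bigr]+2\,\mathbb{E}\bigl[\|\tilde x_{k-0.5}-\hat x_{k-0.5}\|_2^2\bigr].
\]
Substituting the exact identity for the second term established above gives precisely the claimed bound, with the first term being $T_{\text{dis}}$ and the second $T_{\text{est}}$.

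This lemma is essentially a bookkeeping step, so I do not anticipate a genuine obstacle; the only point requiring a little care is making sure the affine coefficients in Eq.~\eqref{discretized_true_first} and Eq.~\eqref{eq_empirical_gen_first} really do match so that the cancellation is clean — this is immediate from the Exponential Integrator construction, where the only difference between the "true-score discretized" update and the "empirical-score discretized" update is the replacement of $s(t_k,x_k)$ by $\hat s(t_k,x_k)$, both evaluated at the same frozen point $x_k$. One could alternatively state the first inequality with a general splitting constant, but the factor $2$ is the natural choice and is what the downstream analysis (aggregation over intervals, combined with Assumption~\ref{assumption_score_est}) consumes.
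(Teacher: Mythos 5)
Your proposal is correct and matches the paper's own proof step for step: insert the auxiliary true-score-discretized process $\tilde x_{k-0.5}$, apply the $L^2$ triangle inequality followed by $(a+b)^2\le 2a^2+2b^2$, and observe that the affine parts cancel so $\mathbb{E}[\|\tilde x_{k-0.5}-\hat x_{k-0.5}\|_2^2]=(e^{h_k+h_{k-1}}-1)^2\,\mathbb{E}[\|s(t_k,x_k)-\hat s(t_k,x_k)\|_2^2]$. The only difference is cosmetic (you compute $T_{\text{est}}$ before stating the triangle inequality; the paper does the reverse).
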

\begin{proof}
We have:
\begin{align*}
    \sqrt{\mathbb{E}[\|x_{k-0.5}- \hat{x}_{k-0.5}\|^2_2]} \leq \underbrace{\sqrt{\mathbb{E}[\|x_{k-0.5}- \tilde{x}_{k-0.5}\|^2_2]}}_{T_{\text{dis}}} + \underbrace{\sqrt{\mathbb{E}[\|\tilde{x}_{k-0.5}- \hat{x}_{k-0.5}\|^2_2]}}_{T_{\text{est}}}
\end{align*}
where $\tilde{x}_{k}$ corresponds to the discretization of the true process for the interval $[t_{k-1},t_k]$ defined in Eq. \ref{discretized_true_first}, Eq. \ref{discretized_true_second}.
Squaring both sides and using $2ab \leq a^2 + b^2$, we will have:
\begin{align*}
    {\mathbb{E}[\|x_{k-0.5}- \hat{x}_{k-0.5}\|^2_2]} \leq {2{\mathbb{E}[\|x_{k-0.5}- \tilde{x}_{k-0.5}\|^2_2]}} + {2{\mathbb{E}[\|\tilde{x}_{k-0.5}- \hat{x}_{k-0.5}\|^2_2]}}
\end{align*}

\noindent
\textbf{Bounding $T_{\text{est}}$.} Now, utilizing the Eq. \ref{eq_empirical_gen_first}, Eq. \ref{discretized_true_first}, we have:
\begin{align*}
    \mathbb{E}[\|\tilde{x}_{k-0.5}- \hat{x}_{k-0.5}\|^2_2] = \mathbb{E}\|(e^{h_k+h_{k-1}}-1) \left(s(t_k,x_{k})- \hat{s}(t_k, x_{k})\right)\|_2^2 &= (e^{h_k+h_{k-1}}-1)^2\mathbb{E}[\|s(t_k,x_{k})- \hat{s}(t_k, x_{k})\|_2^2]
\end{align*}
\end{proof}
\noindent
We discuss the analysis (and eventually bounding it) of the discretization error term $T_{dis}$ in the subsequent subsections.

\subsection{Analysing the discretization error along the ODE path}
\label{rescaling_and_discretizaton_error_analysis}
\noindent

\paragraph{Considering a rescaled process.} 
\noindent
We consider a rescaled version of the original OU process (Eq. \ref{fwd_process}) as $z(t) = e^tx(t)$, leading to:
\begin{align}
\label{fwd_rescaled}
    z(t) = y+ \sqrt{e^{2t}-1} \cdot \eta \, ; \qquad \eta \sim \mathcal{N}(0,I_d) 
\end{align}
where $y$ corresponds to the data distribution: $z_0=x_0=y\sim p_{data}$ with the corresponding forward SDE being:
\begin{align*}
    dz(t) = x(t)de^{t} + e^tdx(t) = x(t)e^tdt + e^{t}\left(-x(t)dt + \sqrt{2}d\vw_t\right) = \sqrt{2}e^td\vw_t
\end{align*}
{We denote the law of this process at time $t$ by $q_t(\cdot)$ where $q_t$  is just a pushforward of $p_t(\cdot)$. Also, we denote the score function of this rescaled process as $s_r(t,\cdot)$ where we will have $s_r(t,z(t)) = e^{-t}s(t,e^{-t}z(t))$}.
 The \textbf{Probability Flow ODE} becomes:
\begin{align}
\label{rescaled_back_ode}
    dz(t) = -e^{2t}s_r(t,z(t))dt
\end{align}
Using the Exponential Integrator discretization for a given interval $[t_{k-1},t_k]$ here also we define $\tilde{z}_{k-0.5}$ for this interval when starting from $z_k$ :
\begin{align}
\label{rescaled_discretized_ode_process}
    \tilde{z}_{k-0.5} = {z_k} + \frac{1}{2}e^{2t_{k-2}}({e^{2(h_k+h_{k-1})}-1})s_r(t_k,z_k) 
\end{align}
where $t_{k-2}=t_{k}-h_k-h_{k-1}$
\noindent
Now, we have the following Lemma for bounding $\mathbb{E}\left[\|z_{k-0.5}-\tilde{z}_{k-0.5}\|_2^2\right]$.

\begin{lemma}
\label{discretization_error_rescaled}
    We have:
    \begin{align*}
        \mathbb{E}\left[\|z_{k-0.5}-\tilde{z}_{k-0.5}\|_2^2\right]  \leq \frac{1}{2}(h_k+h_{k-1})^3  \int_{t_{k-2}}^{t_k}e^{4t}\mathbb{E}\left[\left\|s'_r(t,z(t)) \right\|_2^2\right]dt
    \end{align*}
    where $s'_r(t,z(t))$ can be written using the Taylor Theorem as:
\[
s'_r(t,z(t))= {\frac{\partial s_r(t,z)}{\partial t}}
+ {{\frac{\partial s_r(t,z)}{\partial z}} {\frac{d z(t)}{d t}} \Big|_{z=z(t)}}
\]
\end{lemma}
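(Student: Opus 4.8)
The plan is to write both $z_{k-0.5}$ and $\tilde z_{k-0.5}$ as integrals of the rescaled Probability Flow ODE drift over $[t_{k-2},t_k]$ (with $t_{k-2}=t_k-h_k-h_{k-1}$), subtract them, rewrite the integrand by a first‑order Taylor expansion with integral remainder applied to $t\mapsto s_r(t,z(t))$, and then control the result by a double use of Cauchy--Schwarz. First I would record the exact representations: running the exact reverse flow \eqref{rescaled_back_ode} from $z_k$ at time $t_k$ down to $t_{k-2}$ gives
\[
z_{k-0.5}=z(t_{k-2})=z_k+\int_{t_{k-2}}^{t_k}e^{2t}\,s_r\big(t,z(t)\big)\,dt ,
\]
while the Exponential Integrator update \eqref{rescaled_discretized_ode_process} is precisely the same integral with the score frozen at the right endpoint, since $\int_{t_{k-2}}^{t_k}e^{2t}\,dt=\tfrac12 e^{2t_{k-2}}(e^{2(h_k+h_{k-1})}-1)$, so
\[
\tilde z_{k-0.5}=z_k+\Big(\int_{t_{k-2}}^{t_k}e^{2t}\,dt\Big)s_r(t_k,z_k)=z_k+\int_{t_{k-2}}^{t_k}e^{2t}\,s_r\big(t_k,z(t_k)\big)\,dt .
\]
Subtracting yields $z_{k-0.5}-\tilde z_{k-0.5}=\int_{t_{k-2}}^{t_k}e^{2t}\big(s_r(t,z(t))-s_r(t_k,z(t_k))\big)\,dt$.

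Next I would set $g(t):=s_r(t,z(t))$ and note that along the trajectory, by the multivariate chain rule together with $\dot z(t)=-e^{2t}s_r(t,z(t))$,
\[
g'(t)=\partial_t s_r(t,z)\big|_{z=z(t)}+\nabla s_r(t,z)\big|_{z=z(t)}\,\dot z(t)=:s_r'\big(t,z(t)\big),
\]
which is the displayed identity in the statement. Then $g(t)-g(t_k)=-\int_t^{t_k}s_r'(s,z(s))\,ds$, and after a Fubini exchange of the order of integration,
\[
z_{k-0.5}-\tilde z_{k-0.5}=-\int_{t_{k-2}}^{t_k}s_r'\big(s,z(s)\big)\Big(\int_{t_{k-2}}^{s}e^{2t}\,dt\Big)ds ,
\]
where the inner integral equals $\tfrac12(e^{2s}-e^{2t_{k-2}})\le (h_k+h_{k-1})\,e^{2s}$ (using $e^{x}-1\le x e^{x}$ for $x\ge0$; a slightly sharper estimate recovers the stated constant $\tfrac12$).

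Finally I would take norms, pull out the factor $h_k+h_{k-1}$, and apply Cauchy--Schwarz to the $s$‑integral against the weight $e^{4s}$:
\[
\|z_{k-0.5}-\tilde z_{k-0.5}\|_2\le(h_k+h_{k-1})\!\int_{t_{k-2}}^{t_k}\! e^{2s}\,\|s_r'(s,z(s))\|_2\,ds\le(h_k+h_{k-1})^{3/2}\Big(\int_{t_{k-2}}^{t_k}\! e^{4s}\,\|s_r'(s,z(s))\|_2^2\,ds\Big)^{1/2}.
\]
Squaring and taking expectation (exchanging $\mathbb{E}$ with the integral by Tonelli) gives $\mathbb{E}[\|z_{k-0.5}-\tilde z_{k-0.5}\|_2^2]\lesssim (h_k+h_{k-1})^3\int_{t_{k-2}}^{t_k}e^{4t}\,\mathbb{E}[\|s_r'(t,z(t))\|_2^2]\,dt$, which is the claim.

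\textbf{Main obstacle.} There is no deep step; the difficulty is entirely bookkeeping, concentrated in three places: (i) keeping the orientation of the flow straight, since it is integrated backward from $t_k$ down to $t_{k-2}$; (ii) carrying out the Fubini swap and bounding the resulting time weight $\tfrac12(e^{2s}-e^{2t_{k-2}})$ tightly enough to both extract the full first power of $h_k+h_{k-1}$ and leave behind an $e^{2s}$ that squares to the $e^{4t}$ appearing in the bound; and (iii) justifying the chain-rule identity that defines $s_r'$, which requires $s_r(t,\cdot)$ to be smooth — guaranteed for every $t\ge\delta>0$ by the nondegenerate Gaussian‑perturbation form \eqref{fwd_rescaled}, with $z(\cdot)$ differentiable as an ODE solution. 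All the genuinely substantive work — estimating $\mathbb{E}[\|s_r'(t,z(t))\|_2^2]$ and extracting the linear $d$-dependence — is deferred to the subsequent lemmas and is not needed here.
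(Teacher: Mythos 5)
Your proof is correct and uses the same underlying ingredients as the paper (exact-flow vs.\ frozen-score integral representation, Taylor's integral remainder for $t\mapsto s_r(t,z(t))$, then Cauchy--Schwarz against the weight $e^{2s}$), but the bookkeeping is organized differently and is in fact a bit cleaner. The paper keeps the double integral
$\int_{t_{k-2}}^{t_k} e^{2t}\!\int_t^{t_k} s_r'(u,z(u))\,du\,dt$ and applies Cauchy--Schwarz twice (outer $dt$, then inner $du$), followed by Fubini. You do Fubini first, collapsing to a single weighted integral $\int_{t_{k-2}}^{t_k} s_r'(u,z(u))\,w(u)\,du$ with $w(u)=\tfrac12(e^{2u}-e^{2t_{k-2}})$, and then apply Cauchy--Schwarz once. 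The two routes land on the same estimate; your version with the bound $w(u)\le (h_k+h_{k-1})e^{2u}$ gives constant $1$, and your ``sharper estimate'' remark is right --- using $w(u)\le (u-t_{k-2})e^{2u}$ inside the Cauchy--Schwarz gives $\int_{t_{k-2}}^{t_k}(u-t_{k-2})^2\,du=\tfrac13(h_k+h_{k-1})^3$, which is even slightly better than the paper's $\tfrac12$.

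One side note: the paper's displayed chain has a small slip where the factor $e^{2t}$ (correctly attached to the outer $dt$ after the Taylor step) is silently replaced by $e^{2u}$ inside the inner integral before Cauchy--Schwarz is applied. For a vector-valued integrand this pointwise weight inflation is not a norm-preserving or norm-monotone operation, so as written that equality is not justified; the correct version (keeping $e^{2t}$ outside, or doing Fubini first as you do) still yields the same bound. Your Fubini-first ordering avoids this issue entirely, so it is the more careful route.

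Two minor points of hygiene you already flag correctly: smoothness of $s_r(t,\cdot)$ for $t\ge\delta>0$ (needed for the chain-rule identity) follows from the Gaussian-convolution form of $q_t$ in \eqref{fwd_rescaled}, and the Tonelli swap of $\mathbb{E}$ with the time integral is legitimate because the integrand $e^{4t}\|s_r'(t,z(t))\|_2^2$ is nonnegative.
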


\begin{proof}
    Since, we have used the Exponential Integrator discretization, the ODEs for the interval $[t_{k-2},t_k]$ corresponding to $z_k, \tilde{z}_k$ (given $\tilde{z}_k=z_k$) are:

\[
dz(t) = - e^{2t}s_r(t,z(t))dt \qquad d\tilde{z}(t) = -e^{2t}s_r(t_k,z_k)dt
\]
Therefore, we have:

\begin{align*}
     \tilde{z}_{k-0.5} - {z}_{k-0.5} = \int^{t_{k-2}}_{t_k}d(\tilde{z}(t) - z(t)) & =  
 \int_{t_{k}}^{t_{k-2}} e^{2t}\left(s_r(t_{k},z_k) - s_r(t,z(t))\right)dt \\
 &= 
 \int_{t_{k}}^{t_{k-2}}  {e^{2t}\left(s_r(t_{k},z_{{k}}) - \left(\underbrace{s_r(t_k,z_{k}) + \int^t_{t_k}s'_r(u,z_u)}_{\text{Taylor's Integral Remainder}}du \right)\right)dt } \\ 
\end{align*}
where $s'_r(t,z(t))$ can be written using the Taylor Theorem as:
\[
s'_r(t,z(t))= {\frac{\partial s_r(t,z)}{\partial t}}
+ {{\frac{\partial s_r(t,z)}{\partial z}} {\frac{d z(t)}{d t}} \Big|_{z=z(t)}}
\]
Using this, we will have:
\begin{align*}
\mathbb{E}[\|\tilde{z}_{k-0.5}-z_{k-0.5}\|_2^2] &= \mathbb{E}\left[\left\|\int_{t_k}^{t_{k-2}}dt\int_{t_k}^te^{2u}s'_r(u,z(u))du \right\|_2^2 \right] \\
&\leq \mathbb{E}\left[(h_k+h_{k-1})\int_{t_{k-2}}^{t_k}dt\left\|\int_{t_k}^te^{2u}s'_r(u,z(u))  du\right\|_2^2\right] \\
&\leq \mathbb{E}\left[(h_k+h_{k-1})\int_{t_{k-2}}^{t_k}(t_k-t)\int_{t}^{t_k}\left\|e^{2u}s'_r(u,z(u))  \right\|_2^2dudt\right] \\
        &= (h_k+h_{k-1})\int_{t_{k-2}}^{t_k}(t_k-t)dt\int_{t}^{t_k}\mathbb{E}\left[\left\|e^{2u}s'_r(u,z(u)) \right\|_2^2\right] du \\
        &= (h_k+h_{k-1})\int_{t_{k-2}}^{t_k}e^{4u}\mathbb{E}\left[\left\|s'_r(u,z(u)) \right\|_2^2\right] du\int_{t_{k-2}}^{u}(t_k-t) dt\\
        &\leq \frac{(h_k+h_{k-1})^3}{2}  \int_{t_{k-2}}^{t_k}e^{4u}\mathbb{E}\left[\left\|s'_r(u,z(u)) \right\|_2^2\right]du 
\end{align*}
\end{proof}

\noindent
We now calculate and bound the spatial gradient since the partial gradient w.r.t. time can be written in terms of the spatial gradient using the Fokker Planck Equation (FPE).

\paragraph{Calculating the Jacobian $\nabla s_r(t,z(t))$ for this rescaled process:} 

\noindent
We can now observe the following for this rescaled process:
\[
q_{t}(z) = \int q_{t}(z|y)p_{data}(y)dy \propto \int  e^{-\frac{\|z-y\|^2}{2(e^{2t}-1)}} p_{data}(y)dy  
\]
which takes us to the following formulation of the score function:
\begin{lemma}
\label{score_calc}
    For the rescaled process in eq. (11), we have:
    \begin{align*}
s_r(t,z) = \nabla \log q_{t}(z) = \frac{\nabla q_{t}(z)}{q_{t}(z)} = \frac{\nabla \int e^{-\frac{\|z-y\|^2}{2(e^{2t}-1)}} p_{data}(y) dy}{\int e^{-\frac{\|z-y\|^2}{2(e^{2t}-1)}} p_{data}(y) dy} &= \frac{ \int \frac{y-z}{(e^{2t}-1)} e^{-\frac{\|z-y\|^2}{2(e^{2t}-1)}} p_{data}(y) dy}{\int   e^{-\frac{\|z-y\|^2}{2(e^{2t}-1)}} p_{data}(y) dy}\\&= \int P(y|z) \frac{y-z}{(e^{2t}-1)} dy
\\&= \mathbb{E}_{y|z}\frac{y-z}{(e^{2t}-1)}
\end{align*}
where $P(y|z)  = \frac{P(y,z)}{\int P(y,z)dy}$ and $P(y,z)=  e^{-\frac{\|z-y\|^2}{2(e^{2t}-1)}} p_{data}(y)$. 
\end{lemma}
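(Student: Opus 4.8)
The plan is to use the fact that the rescaled forward process of Eq.~(\ref{fwd_rescaled}) is, conditionally on $y$, Gaussian: since $z(t) = y + \sqrt{e^{2t}-1}\,\eta$ with $\eta \sim \mathcal{N}(0,I_d)$, the conditional density of $z(t)$ given $y$ is $q_t(z|y) = (2\pi(e^{2t}-1))^{-d/2}\exp(-\|z-y\|^2/(2(e^{2t}-1)))$, so that $q_t(z) = \int q_t(z|y)\,p_{data}(y)\,dy$ equals the displayed integral $\int \exp(-\|z-y\|^2/(2(e^{2t}-1)))\,p_{data}(y)\,dy$ up to the $z$-independent prefactor $(2\pi(e^{2t}-1))^{-d/2}$. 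Since that prefactor cancels in the ratio $\nabla q_t(z)/q_t(z)$, the first two equalities in the statement are immediate, and it remains only to differentiate the numerator.

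Next I would differentiate under the integral sign, using $\nabla_z \exp(-\|z-y\|^2/(2(e^{2t}-1))) = \frac{y-z}{e^{2t}-1}\exp(-\|z-y\|^2/(2(e^{2t}-1)))$. This gives the numerator's gradient as $\int \frac{y-z}{e^{2t}-1}\exp(-\|z-y\|^2/(2(e^{2t}-1)))\,p_{data}(y)\,dy$, which is exactly the third displayed equality after dividing by the (unchanged) denominator. Finally, recognizing that $P(y|z) = P(y,z)/\int P(y',z)\,dy'$ with $P(y,z) = \exp(-\|z-y\|^2/(2(e^{2t}-1)))\,p_{data}(y)$ is precisely the Bayes posterior of $y$ given $z$ — the Gaussian normalizations again cancelling between numerator and denominator — the quotient rewrites as $\int P(y|z)\,\frac{y-z}{e^{2t}-1}\,dy = \mathbb{E}_{y|z}\!\left[\frac{y-z}{e^{2t}-1}\right]$, which is the claim.

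The only step needing a word of justification — and it is minor — is the interchange of $\nabla_z$ with $\int dy$: for fixed $t>0$ the factor $e^{2t}-1$ is bounded away from $0$, the map $z \mapsto \|y-z\|\exp(-\|z-y\|^2/(2(e^{2t}-1)))$ is locally bounded uniformly in $y$, and $p_{data}$ is a probability measure, so dominated convergence applies on any compact neighborhood of $z$. Alternatively, one could bypass the direct computation entirely by invoking the known OU score identity together with the rescaling relation $s_r(t,z) = e^{-t}s(t,e^{-t}z)$ stated above Eq.~(\ref{rescaled_back_ode}) and substituting $x = e^{-t}z$; this is Tweedie's formula written for the rescaled process, so I do not expect any genuine obstacle.
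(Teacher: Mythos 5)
Your argument is correct and follows essentially the same route as the paper: the lemma is proved in the paper by the very chain of equalities displayed in its statement (write $q_t$ as a Gaussian mixture, differentiate under the integral, recognize the Bayes posterior $P(y\mid z)$), and you reproduce that chain while additionally justifying the interchange of $\nabla_z$ with $\int dy$. The optional detour through Tweedie's formula and the rescaling identity is a valid alternative but unnecessary given how short the direct computation is.
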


\begin{lemma}
\label{spatial_der_score}
    \textbf{Gradient of score.} We have the following expression for the Jacobian of the score ${\nabla s_r(t,z)}$ for the rescaled process $z(t)$:
    \begin{align*}
        {\nabla s_r(t,z)} = \mathrm{Var}_{y|z}\left[\frac{y-z}{e^{2t}-1}\right] - \frac{I_d}{e^{2t}-1}
    \end{align*}
    where $\mathrm{Var}$ denotes the covariance matrix.
\end{lemma}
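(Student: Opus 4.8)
The plan is to differentiate the representation of the score obtained in Lemma~\ref{score_calc}, namely $s_r(t,z)=\int P(y\mid z)\,\tfrac{y-z}{e^{2t}-1}\,dy$, with respect to the spatial variable $z$ and collect the resulting terms. The key observation is a Bayes/Tweedie-type identity for the posterior weights: writing $P(y\mid z)\propto e^{-\|z-y\|^2/(2(e^{2t}-1))}p_{data}(y)$ and differentiating $\log P(y\mid z)$ in $z$ gives $\nabla_z\log P(y\mid z)=\tfrac{y-z}{e^{2t}-1}-s_r(t,z)$, since the only $z$-dependence besides the Gaussian exponent is through the normalizing constant $q_t(z)$, whose log-gradient is exactly $s_r(t,z)$. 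Hence $\nabla_z P(y\mid z)=P(y\mid z)\bigl(\tfrac{y-z}{e^{2t}-1}-s_r(t,z)\bigr)$.

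First I would apply the product rule to $s_r(t,z)=\int P(y\mid z)\tfrac{y-z}{e^{2t}-1}\,dy$. The term in which $\nabla_z$ hits the factor $\tfrac{y-z}{e^{2t}-1}$ contributes $-\tfrac{I_d}{e^{2t}-1}$, using $\int P(y\mid z)\,dy=1$; the term in which $\nabla_z$ hits $P(y\mid z)$ contributes, by the identity above,
\[
\int P(y\mid z)\Bigl(\tfrac{y-z}{e^{2t}-1}-s_r(t,z)\Bigr)\Bigl(\tfrac{y-z}{e^{2t}-1}\Bigr)^{\top}dy
=\mathbb{E}_{y\mid z}\!\left[\Bigl(\tfrac{y-z}{e^{2t}-1}-s_r(t,z)\Bigr)\Bigl(\tfrac{y-z}{e^{2t}-1}\Bigr)^{\top}\right].
\]
Since $s_r(t,z)=\mathbb{E}_{y\mid z}\tfrac{y-z}{e^{2t}-1}$ is precisely the mean being subtracted, this expectation equals $\mathrm{Var}_{y\mid z}\bigl[\tfrac{y-z}{e^{2t}-1}\bigr]$ (via $\mathbb{E}[(A-\mathbb{E}A)A^\top]=\mathrm{Cov}(A)$). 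Adding the two contributions gives $\nabla s_r(t,z)=\mathrm{Var}_{y\mid z}\bigl[\tfrac{y-z}{e^{2t}-1}\bigr]-\tfrac{I_d}{e^{2t}-1}$, as claimed.

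An equivalent route, useful as a cross-check, is to write $s_r=\nabla\log q_t$, so that $\nabla s_r=\nabla^2\log q_t=\tfrac{\nabla^2 q_t}{q_t}-s_r s_r^\top$; computing $\nabla^2 q_t$ by differentiating the Gaussian-convolution formula for $q_t$ twice yields $\tfrac{\nabla^2 q_t}{q_t}=\mathbb{E}_{y\mid z}\bigl[\tfrac{(y-z)(y-z)^\top}{(e^{2t}-1)^2}\bigr]-\tfrac{I_d}{e^{2t}-1}$, and the difference of the conditional second-moment term and $s_r s_r^\top=\mathbb{E}_{y\mid z}[\cdot]\,\mathbb{E}_{y\mid z}[\cdot]^\top$ is again the conditional covariance. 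The only non-formal point in either derivation is the interchange of $\nabla_z$ with the $y$-integral; this is routine because the kernel $e^{-\|z-y\|^2/(2(e^{2t}-1))}$ and its first two $z$-derivatives are dominated by polynomials in $\|y-z\|$ times the kernel, which are integrable against $p_{data}$ locally uniformly in $z$ for each fixed $t>0$ (using Assumption~\ref{finite_moment}), so dominated convergence applies. I do not expect any genuine obstacle here — the content is a bookkeeping computation, and the main care is simply in tracking that the cross term assembles into a covariance rather than a raw second moment.
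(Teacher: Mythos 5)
Your proof is correct and follows essentially the same route as the paper: differentiate the posterior-expectation representation $s_r(t,z)=\mathbb{E}_{y|z}[\tfrac{y-z}{e^{2t}-1}]$ in $z$, obtain $\nabla_z P(y|z)=P(y|z)\bigl(\tfrac{y-z}{e^{2t}-1}-s_r(t,z)\bigr)$, and collect terms into a conditional covariance minus $\tfrac{I_d}{e^{2t}-1}$. Your derivation of the intermediate identity via $\nabla_z\log P(y|z)$ is a cosmetic variant of the paper's quotient-rule computation, and the added remarks on differentiating under the integral and the $\nabla^2\log q_t$ cross-check are sound but not substantively different.
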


\begin{proof}
    We begin by calculating ({the gradient of P is w.r.t. second variable $z$ in this Lemma}), 
\begin{align*}
    \nabla P(y|z) =  \frac{\nabla P(y,z) \cdot \int P(y,z)dy - P(y,z)\cdot \int \nabla P(y,z)dy}{\left(\int P (z,y) dy\right)^2}
\end{align*}
From the calculations in last Lemma (\ref{score_calc}), we have:
\begin{align*}
    \nabla P(y,z) = \frac{y-z}{e^{2t}-1}P(y,z)
\end{align*}
and therefore:
\begin{align*}
    \nabla P(y|z) =  \frac{y-z}{e^{2t}-1} \cdot \frac{P(y,z)}{\int P(y,z)dy} - \frac{P(y,z)}{\int P(y,z) dy}\cdot \frac{\int \frac{y-z}{e^{2t}-1}P(y,z)dy}{\int P(y,z)dy} = P(y|z) \left(\frac{y-z}{e^{2t}-1} - \mathbb{E}_{y|z}\left[\frac{y-z}{e^{2t}-1}\right]\right)
\end{align*}
Thus, we can calculate the $\nabla s_r(t,z)$ as follows:
\begin{align*}
    \nabla s_r(t,z) &= \int \frac{y-z}{e^{2t}-1} \nabla P(y|z)^\top dy - \frac{I_d}{e^{2t}-1}\int P(y|z)  dy \\
    &= \int P(y|z) \frac{y-z}{e^{2t}-1}\left(\frac{y-z}{e^{2t}-1} - \mathbb{E}_{y|z}\left[\frac{y-z}{e^{2t}-1}\right]\right)^\top dy - \frac{I_d}{e^{2t}-1}\\
    &= \mathbb{E}_{y|z} \left[\frac{y-z}{e^{2t}-1}\left(\frac{y-z}{e^{2t}-1} - \mathbb{E}_{y|z}\left[\frac{y-z}{e^{2t}-1}\right]\right)^\top \right] - \frac{I_d}{e^{2t}-1}\\
    &= \text{Var}_{y|z}\left[\frac{y-z}{e^{2t}-1}\right] - \frac{I_d}{e^{2t}-1}
\end{align*}
\end{proof}

\paragraph{Bounding $\mathbb{E}_{q_t}\left[\|s_r(t,z)\|^p\right]$, $\mathbb{E}_{q_t}\left[\|\nabla s_r(t,z)\|^2\right]$ and other spatial gradient terms.} Since, we know that $\frac{y-z(t)}{\sqrt{e^{2t}-1}}=\epsilon \sim \mathcal{N}(0,I_d)$, we first provide a helper lemma to bound the moment of the multivariate Gaussian distribution. Then using that and the formulae for score function, Jacobian provided in Lemma \ref{score_calc}, we bound $\mathbb{E}\left[\|s_r(t,z)\|^p\right]$, $\mathbb{E}\left[\|\nabla s_r(t,z)\|_F^2\right]$ for a general $p$.

\begin{lemma}
\label{Gaussian_integ}
    \textbf{Gaussian Moment.} We have the following result for the Gaussian random variable $\eta \sim \mathcal{N}(0,I_d)$:
    \begin{align*}
        \mathbb{E}\left[\|\eta \eta^\top\|^p_F\right] =\mathbb{E}\left[\left\|\eta\right\|^{2p}_2\right]\leq (d+2p)^p
    \end{align*}
\end{lemma}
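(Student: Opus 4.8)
The plan is to compute the $2p$-th moment of the Euclidean norm of a standard Gaussian vector $\eta \sim \mathcal{N}(0, I_d)$ and exhibit the bound $(d+2p)^p$. First I would observe the identity $\|\eta\eta^\top\|_F^2 = \Tr(\eta\eta^\top \eta\eta^\top) = (\eta^\top\eta)^2 = \|\eta\|_2^4$, so that $\|\eta\eta^\top\|_F^p = \|\eta\|_2^{2p}$, which reduces the claim to bounding $\mathbb{E}[\|\eta\|_2^{2p}]$; here I am implicitly treating $p$ as a positive integer (or at least arguing via Jensen for the general real case). The quantity $\|\eta\|_2^2 = \sum_{i=1}^d \eta_i^2$ is a $\chi^2_d$ random variable, so $\mathbb{E}[\|\eta\|_2^{2p}] = \mathbb{E}[(\chi^2_d)^p]$, whose exact value is $\prod_{j=0}^{p-1}(d+2j) = d(d+2)(d+4)\cdots(d+2p-2)$. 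Each of the $p$ factors is at most $d+2p$, so the product is at most $(d+2p)^p$, which is exactly the asserted bound.

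The cleanest way to present this without invoking the moment formula for the chi-squared distribution is an induction on $p$, or equivalently a direct recursive computation: writing $R = \|\eta\|_2^2$ and using the rotational invariance of $\eta$ (or just integrating against the Gaussian density in polar coordinates), one has $\mathbb{E}[R^p] = \mathbb{E}[R^{p-1}(\eta_1^2 + \cdots + \eta_d^2)]$, and integration by parts (the Gaussian integration-by-parts / Stein identity $\mathbb{E}[\eta_i f(\eta)] = \mathbb{E}[\partial_i f(\eta)]$) applied to $f(\eta) = \eta_i R^{p-1}$ gives $\mathbb{E}[\eta_i^2 R^{p-1}] = \mathbb{E}[R^{p-1}] + 2(p-1)\mathbb{E}[\eta_i^2 R^{p-2}]$. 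Summing over $i$ yields the recursion $\mathbb{E}[R^p] = (d + 2(p-1))\,\mathbb{E}[R^{p-1}]$, and with base case $\mathbb{E}[R^0] = 1$ this unrolls to $\mathbb{E}[R^p] = \prod_{j=0}^{p-1}(d+2j)$, from which the bound $(d+2p)^p$ follows immediately since every factor $d+2j$ with $0 \le j \le p-1$ is bounded by $d+2p$.

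I do not anticipate a genuine obstacle here — this is a routine moment computation. The only point requiring a little care is making sure the statement is interpreted for integer $p$ (for which the Frobenius-norm identity $\|\eta\eta^\top\|_F^p = \|\eta\|_2^{2p}$ and the product formula are exact); if one wants it for all real $p \ge 1$ the inequality still holds by Jensen's inequality applied to $t \mapsto t^p$ together with the integer case, but in the paper $p$ is always used as a natural number so I would not belabor this. I would write the proof in the two-or-three-line recursive form above, citing Gaussian integration by parts, and conclude by bounding each of the $p$ factors in the product by $d+2p$.

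\begin{proof}
Write $\eta \sim \mathcal{N}(0, I_d)$ and set $R = \|\eta\|_2^2 = \sum_{i=1}^d \eta_i^2$. Since $\eta\eta^\top$ is rank one with $\Tr\big((\eta\eta^\top)^2\big) = (\eta^\top\eta)^2$, we have $\|\eta\eta^\top\|_F = \|\eta\|_2^2 = R$, so $\mathbb{E}\big[\|\eta\eta^\top\|_F^p\big] = \mathbb{E}\big[\|\eta\|_2^{2p}\big] = \mathbb{E}[R^p]$.

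We compute $\mathbb{E}[R^p]$ by induction on $p$. For $p=0$, $\mathbb{E}[R^0] = 1$. For $p \ge 1$, write $R^p = R^{p-1}\sum_{i=1}^d \eta_i^2$ and apply Gaussian integration by parts, $\mathbb{E}[\eta_i f(\eta)] = \mathbb{E}[\partial_i f(\eta)]$, with $f(\eta) = \eta_i R^{p-1}$. Since $\partial_i\big(\eta_i R^{p-1}\big) = R^{p-1} + 2(p-1)\eta_i^2 R^{p-2}$, summing over $i = 1,\dots,d$ yields
\[
\mathbb{E}[R^p] = \sum_{i=1}^d \mathbb{E}\big[\eta_i^2 R^{p-1}\big] = d\,\mathbb{E}[R^{p-1}] + 2(p-1)\,\mathbb{E}[R^{p-1}] = \big(d + 2(p-1)\big)\,\mathbb{E}[R^{p-1}].
\]
Unrolling this recursion gives
\[
\mathbb{E}[R^p] = \prod_{j=0}^{p-1} (d + 2j).
\]
Each of the $p$ factors satisfies $d + 2j \le d + 2p$ for $0 \le j \le p-1$, hence $\mathbb{E}[R^p] \le (d + 2p)^p$. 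Combining with the first display,
\[
\mathbb{E}\big[\|\eta\eta^\top\|_F^p\big] = \mathbb{E}\big[\|\eta\|_2^{2p}\big] = \mathbb{E}[R^p] \le (d+2p)^p,
\]
which is the claimed bound.
\end{proof}
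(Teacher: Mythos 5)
Your proof is correct and follows the same essential reduction as the paper: both exploit the identity $\|\eta\eta^\top\|_F = \|\eta\|_2^2$ to reduce the claim to the $p$-th moment of a $\chi^2_d$ random variable, and both finish by bounding each of $p$ factors by $d+2p$. The only difference is how the moment itself is obtained. The paper cites the $\chi^2$ moment formula $\mathbb{E}[X^p]=2^{p}\,\Gamma(p+d/2)/\Gamma(d/2)$ and then invokes a (somewhat tersely stated) ``gamma function bound,'' whereas you rederive the equivalent exact product $\mathbb{E}[R^p]=\prod_{j=0}^{p-1}(d+2j)$ from scratch via the Stein/Gaussian integration-by-parts recursion $\mathbb{E}[R^p]=(d+2(p-1))\,\mathbb{E}[R^{p-1}]$. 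Since $2^{p}\,\Gamma(p+d/2)/\Gamma(d/2)=\prod_{j=0}^{p-1}(d+2j)$ for integer $p$, the two expressions are identical; your route is more self-contained and makes the final inequality $\prod_{j=0}^{p-1}(d+2j)\le(d+2p)^p$ fully explicit, which is arguably cleaner than leaving ``the gamma function bound'' unexplained. Your caveat that the argument is stated for integer $p$, with the general-$p$ case handled by Jensen, is a sensible clarification and matches how the lemma is used in the paper (only integer powers are ever invoked).
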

\begin{proof}
    We will have:
    \begin{align*}
        \|\eta \eta^\top\|^2_F = Tr\left((\eta\eta^\top)^\top(\eta\eta^\top)\right) = Tr\left(\eta\eta^\top\eta \eta^\top\right) = \eta^\top\eta Tr(\eta\eta^\top) = (\eta^\top\eta)^2
    \end{align*}
    Thus, we have $\|\eta \eta^\top\|^p_F = (\eta^\top\eta)^p = \|\eta\|^{2p}_2$. Since $\eta \sim \mathcal{N}(0,I_d)$ and thus, the vector $\eta$ has \textit{i.i.d.} normal entries, thereby:
    \begin{align*}
        \|\eta\|^2_2 = \sum_i \eta^2_i \sim \chi^2(d) \quad  \implies  \mathbb{E}[\|\eta\|^{2p}_2] = \mathbb{E}[ (X)^p] \quad \text{where} \, X\sim \chi^2(d)
    \end{align*}
    where $\chi^2(d)$ denotes the chi-squared distribution with $d$ degrees of freedom. Now, we can just use the formulae for moments of $\chi^2(d)$, leading us to:
    \begin{align*}
        \mathbb{E}\left[\|\eta\|^{2p}\right] = \mathbb{E}[X^p] = 2^{p}\cdot\frac{\Gamma (p+\frac{d}{2})}{\Gamma(\frac{d}{2})} \overset{(a) }{ \leq} 2^p \left(\frac{d}{2}+p\right)^p = (d+2p)^p
    \end{align*}
    where  for inequality $(a)$, we have just used the gamma function bound.
\end{proof}

\begin{lemma}
\label{score_bound}
    We have:
    \begin{align*}
        \mathbb{E}_{q_t}\left[\|s_r(t,z(t))\|^2\right] \leq \frac{d}{e^{2t}-1} \, ; \quad  \mathbb{E}_{q_t}\left[\|s_r(t,z(t))\|^p\right]\leq \frac{(d+p)^{p/2}}{(e^{2t}-1)^{p/2}} \, ; \quad \mathbb{E}_{q_t}\left[\|\nabla s_r(t,z)\|^2_F\right] \leq \frac{2d^2+6d}{(e^{2t}-1)^2}
    \end{align*}
\end{lemma}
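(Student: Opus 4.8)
The plan is to reduce all three inequalities to moments of a standard Gaussian, using the representations of the score and its Jacobian already recorded in Lemmas~\ref{score_calc} and~\ref{spatial_der_score}. The key device (the trick from \cite{chen2023improved}) is the observation that, under the joint law of $(y,z(t))$ induced by the rescaled forward process~(\ref{fwd_rescaled}), the vector $w:=\frac{y-z(t)}{\sqrt{e^{2t}-1}}$ is exactly $\mathcal{N}(0,I_d)$ and is independent of $y$: since $z(t)=y+\sqrt{e^{2t}-1}\,\eta$ with $\eta\sim\mathcal{N}(0,I_d)$ independent of $y$, we have $w=-\eta$. This lets us replace every occurrence of $\frac{y-z}{e^{2t}-1}$ by $\frac{1}{\sqrt{e^{2t}-1}}w$ and then integrate out the conditioning.

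For the first two bounds I would write, from Lemma~\ref{score_calc}, $s_r(t,z)=\frac{1}{\sqrt{e^{2t}-1}}\,\mathbb{E}_{y|z}[w]$, so that $\|s_r(t,z)\|^p=(e^{2t}-1)^{-p/2}\|\mathbb{E}_{y|z}[w]\|^p\le(e^{2t}-1)^{-p/2}\,\mathbb{E}_{y|z}\big[\|w\|^p\big]$ by conditional Jensen (the map $v\mapsto\|v\|^p$ is convex for $p\ge1$). Taking expectation over $z\sim q_t$ and using the tower property collapses $\mathbb{E}_{q_t}\mathbb{E}_{y|z}[\|w\|^p]$ into the unconditional Gaussian moment $\mathbb{E}[\|w\|^p]$. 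For $p=2$ this is $\mathbb{E}[\|w\|^2]=d$ exactly, giving $\mathbb{E}_{q_t}[\|s_r(t,z)\|^2]\le d/(e^{2t}-1)$; for general $p$ I would apply Lemma~\ref{Gaussian_integ} with its parameter set to $p/2$, which yields $\mathbb{E}[\|w\|^p]\le(d+p)^{p/2}$ and hence the second bound.

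For the Jacobian bound I would start from Lemma~\ref{spatial_der_score} and write $\nabla s_r(t,z)=\frac{1}{e^{2t}-1}\big(\Sigma(z)-I_d\big)$ with $\Sigma(z):=\mathrm{Var}_{y|z}[w]\succeq0$, so that $\|\nabla s_r(t,z)\|_F^2=(e^{2t}-1)^{-2}\|\Sigma(z)-I_d\|_F^2\le(e^{2t}-1)^{-2}\big(2\|\Sigma(z)\|_F^2+2d\big)$ by the triangle inequality. It then remains to show $\mathbb{E}_{q_t}\big[\|\Sigma(z)\|_F^2\big]\le d^2+2d$. I would obtain this from the Loewner sandwich $0\preceq\Sigma(z)=\mathbb{E}_{y|z}[ww^\top]-(\mathbb{E}_{y|z}w)(\mathbb{E}_{y|z}w)^\top\preceq\mathbb{E}_{y|z}[ww^\top]$, together with the elementary fact that $0\preceq A\preceq B$ implies $\|A\|_F^2=\mathrm{Tr}(A^2)\le\mathrm{Tr}(B^2)=\|B\|_F^2$ (write $B=A+C$ with $C\succeq0$ and note $\mathrm{Tr}(AC)\ge0$, $\mathrm{Tr}(C^2)\ge0$), and then Jensen for the convex map $M\mapsto\|M\|_F^2$. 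This gives $\mathbb{E}_{q_t}[\|\Sigma(z)\|_F^2]\le\mathbb{E}_{q_t}\mathbb{E}_{y|z}[\|ww^\top\|_F^2]=\mathbb{E}[\|w\|^4]$, and using the exact value $\mathbb{E}[\|w\|^4]=d(d+2)$ (the $p=2$ case of Lemma~\ref{Gaussian_integ} read as an equality via the $\chi^2(d)$ moment) we conclude $2d(d+2)+2d=2d^2+6d$.

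I expect the matrix step in the Jacobian bound to be the only place requiring care: making the passage from the Loewner order to the Frobenius-norm inequality precise, and noticing that the stated constant $2d^2+6d$ forces us to use the \emph{exact} fourth moment $d(d+2)$ rather than the looser $(d+4)^2$ that the stated form of Lemma~\ref{Gaussian_integ} would give. Everything else is routine: conditional Jensen, the tower property, and standard Gaussian/chi-squared moment identities, with all the dimension dependence cleanly isolated inside those moments.
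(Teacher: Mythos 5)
Your proof is correct and follows the paper's argument exactly: the score and Jacobian representations from Lemmas~\ref{score_calc} and~\ref{spatial_der_score}, conditional Jensen plus the tower property, and standard Gaussian/chi-squared moments. The only addition is that you make explicit the Loewner-to-Frobenius step ($0\preceq A\preceq B$ implies $\|A\|_F\le\|B\|_F$) that the paper uses silently when replacing $\mathrm{Var}_{y|z}$ by the conditional second moment, and you correctly observe that matching the stated constant $2d^2+6d$ requires the exact fourth moment $\mathbb{E}\|\eta\|^4=d(d+2)$ rather than the looser bound from Lemma~\ref{Gaussian_integ}.
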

\begin{proof}
    Similar to the \cite{chen2023improved} paper, here we also utilize the fact that $\frac{y-z(t)}{\sqrt{1-e^{-2t}}}$ is Gaussian, using Lemma \ref{score_calc} we have:
    \begin{align*}
        \mathbb{E}_{q_t}\left[\|s_r(t,z)\|^2\right] = \frac{1}{(e^{2t}-1)}\mathbb{E}_{z\sim q_t}\left[\left\|\mathbb{E}_{y|z}\left[\frac{y-z}{\sqrt{e^{2t}-1}}\right]\right\|^2\right] &\leq \frac{1}{(e^{2t}-1)}\mathbb{E}_{q_t}\mathbb{E}_{y|z}\left[\left\|\left[\frac{y-z}{\sqrt{e^{2t}-1}}\right]\right\|^2\right] \\
        &= \frac{1}{(e^{2t}-1)}\mathbb{E}_{\eta \sim \mathcal{N}(0,I_d)}\left[\|\eta\|^2\right]\\
        &= \frac{d}{e^{2t}-1}\\
    \end{align*}
    \noindent
    For a general $p\geq 2$, it becomes:
     \begin{align*}
        \mathbb{E}_{q_t}\left[\|s_r(t,z(t))\|^p\right] = \frac{1}{(e^{2t}-1)^{p/2}}\mathbb{E}_{q_t}\left[\left\|\mathbb{E}_{y|z(t)}\left[\frac{y-z}{\sqrt{(e^{2t}-1)^{p/2}}}\right]\right\|^p\right] &\leq \frac{1}{(e^{2t}-1)^{p/2}}\mathbb{E}_{q_t}\mathbb{E}_{y|z(t)}\left[\left\|\left[\frac{y-z}{\sqrt{e^{2t}-1}}\right]\right\|^p\right] \\
        &= \frac{1}{(e^{2t}-1)^{p/2}}\mathbb{E}_{\eta \sim \mathcal{N}(0,I_d)}\left[\|\eta\|^p\right]\\
        &\leq \frac{(d+p)^{p/2}}{(e^{2t}-1)^{p/2}} \tag{Lemma \ref{Gaussian_integ}}\\
    \end{align*}
    Going similarly, a \textit{naive} bound on $\mathbb{E}_{q_t}\left[\|\nabla s_r(t,z)\|^2_F\right]$ will be:
    \begin{align*}
        \mathbb{E}_{z\sim q_t}\left[\|\nabla s_r(t,z)\|^2_F\right] &= \mathbb{E}_{z\sim q_t}\left[\left\|\text{Var}_{y|z}\left[\frac{y-z}{e^{2t}-1}\right] - \frac{I_d}{e^{2t}-1}\right\|^2_F\right] \tag{Lemma \ref{spatial_der_score}} \\
        &\leq 2\mathbb{E}_{z\sim q_t}\left[\left\|\mathbb{E}_{y|z}\left[\frac{y-z}{e^{2t}-1}\right]\left[\frac{y-z}{e^{2t}-1}\right]^{\top}\right\|^2_F\right] + \frac{2d}{(e^{2t}-1)^2}\\
        & \leq 2\mathbb{E}_{z\sim p_t}\mathbb{E}_{y|z}\left[\left\|\left[\frac{y-z}{e^{2t}-1}\right]\left[\frac{y-z}{e^{2t}-1}\right]^{\top}\right\|^2_F\right] + \frac{2d}{(e^{2t}-1)^2}\\
        &=2\frac{1}{(e^{2t}-1)^2}\mathbb{E}_{\eta \sim \mathcal{N}(0,I_d)}\left[\|\eta\|^4\right] + \frac{2d}{(e^{2t}-1)^2} \\
        &= \frac{2d^2+6d}{(e^{2t}-1)^2}
    \end{align*}
\end{proof}

\subsubsection{Expressing the $\mathbb{E}_{p_t}\left[\|s'(t,x(t))\|^2\right]$ in terms of spatial derivatives}

\paragraph{Fokker Plank Equation (FPE) to relate time and spatial derivative.} Since we also need to bound $\partial_t s_r(t,z)$ to bound the $s_r'(t,z)$, we will utilise the Fokker Plank equation associated with the forward/reverse processes which relates the partial derivative w.r.t. t with the spatial derivative. The \textbf{Fokker Plank Equation} corresponding to the rescaled process $z(t)$ (Eq.\ref{fwd_rescaled}) would be:
\begin{align}
\label{fpe_rescaled}
    \partial_tq_t(z) = -\sum_{i=1}^d \partial_i \left(-e^{2t}\nabla q_t(z)\right) = e^{2t}\Delta q_t(z)
\end{align}

\noindent
Since score function is just $\frac{\nabla q_t(z(t))}{q_t}$, we provide the corresponding \textit{score-fpe} for the rescaled process to relate the $\partial_t s_r(t,z)$ with spatial derivative in the Lemma below:

\begin{lemma}
\label{score_fpe}
 We have the following Fokker Plank Equation corresponding to the score function for this rescaled process:
    \begin{align}
    \label{score_fpe_eq}
    \partial_t s_r(t,z) = e^{2t} \Delta s_r(t,z) + e^{2t}\nabla \|s_r(t,z)\|^2 = e^{2t} \Delta s_r(t,z) + 2e^{2t}\nabla s_r(t,z)^\top s_r(t,z)
\end{align}
\end{lemma}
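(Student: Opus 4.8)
### Proof Plan for Lemma \ref{score_fpe} (score Fokker--Planck equation)

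The plan is to differentiate the defining identity $s_r(t,z) = \nabla \log q_t(z) = \nabla q_t(z)/q_t(z)$ in time and then substitute the Fokker--Planck equation \eqref{fpe_rescaled} for $\partial_t q_t$. Concretely, I would start from $s_r(t,z) q_t(z) = \nabla q_t(z)$, apply $\partial_t$ to both sides, and use that spatial and temporal derivatives commute so that $\partial_t \nabla q_t = \nabla \partial_t q_t$. This gives
\[
(\partial_t s_r)\, q_t + s_r\, \partial_t q_t = \nabla(\partial_t q_t).
\]
Now invoke \eqref{fpe_rescaled}, i.e. $\partial_t q_t = e^{2t}\Delta q_t$, to replace both occurrences of $\partial_t q_t$, yielding $(\partial_t s_r)\, q_t + e^{2t} s_r \Delta q_t = e^{2t}\nabla \Delta q_t$, hence
\[
\partial_t s_r = e^{2t}\frac{\nabla \Delta q_t}{q_t} - e^{2t} s_r \frac{\Delta q_t}{q_t}.
\]
The remaining task is purely algebraic: rewrite the right-hand side in terms of $s_r$, $\nabla s_r$, and $\Delta s_r$.

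The key computational step is expressing $\nabla \Delta q_t / q_t$ and $\Delta q_t / q_t$ through $s_r$. Since $\nabla q_t = q_t s_r$, one has $\Delta q_t = \nabla\cdot(q_t s_r) = q_t\big(\nabla\cdot s_r + \|s_r\|^2\big)$, where I am using $\nabla\cdot s_r = \Delta \log q_t$ (the divergence of the score equals the Laplacian of the log-density). For the third-order term, differentiate $\Delta q_t = q_t(\nabla\cdot s_r + \|s_r\|^2)$ to get $\nabla \Delta q_t = q_t s_r (\nabla\cdot s_r + \|s_r\|^2) + q_t \nabla(\nabla\cdot s_r + \|s_r\|^2)$. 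Substituting these two expressions into the formula for $\partial_t s_r$, the terms $e^{2t} s_r(\nabla\cdot s_r + \|s_r\|^2)$ cancel exactly, leaving
\[
\partial_t s_r = e^{2t}\,\nabla\big(\nabla\cdot s_r\big) + e^{2t}\,\nabla\|s_r\|^2.
\]
Finally I would identify $\nabla(\nabla\cdot s_r)$ with $\Delta s_r$: because $s_r = \nabla\log q_t$ is itself a gradient, its Jacobian is symmetric, so $\nabla(\nabla\cdot s_r) = \Delta s_r$ componentwise (this is the standard identity $\partial_i \sum_j \partial_j \partial_j \log q = \sum_j \partial_j \partial_j \partial_i \log q$, valid by equality of mixed partials). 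This produces $\partial_t s_r = e^{2t}\Delta s_r + e^{2t}\nabla\|s_r\|^2$. The last equality in the lemma statement is then just $\nabla\|s_r\|^2 = 2(\nabla s_r)^\top s_r$, again using symmetry of $\nabla s_r$ so that $(\nabla s_r)^\top s_r = (\nabla s_r) s_r$.

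I expect the main obstacle to be bookkeeping rather than anything conceptual: one must be careful about index conventions (whether $\nabla s_r$ denotes the Jacobian with rows $\partial_j (s_r)_i$ or its transpose) and about justifying the interchange of $\partial_t$ and $\nabla$, which requires enough regularity of $q_t$ --- this is harmless here since $q_t$ is a Gaussian convolution of $p_{\mathrm{data}}$ and hence smooth with all derivatives controlled for $t>0$. The cancellation of the $e^{2t} s_r(\nabla\cdot s_r + \|s_r\|^2)$ terms is the crux that makes the identity clean, and verifying it carefully is the one place where a sign error would propagate, so I would double-check that step by also deriving it directly from the continuity/Fokker--Planck equation $\partial_t q_t + \nabla\cdot(q_t v) = 0$ with velocity field $v = -e^{2t} s_r$ as a consistency check.
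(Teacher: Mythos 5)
Your proof is correct, and the route is closely parallel to the paper's but organized at a slightly different level. The paper first writes $\partial_t\log q_t = e^{2t}\Delta q_t/q_t$ (from the FPE), expands $\Delta q_t/q_t = \Delta\log q_t + \|\nabla\log q_t\|^2$, and then takes a spatial gradient of the resulting scalar identity, interchanging $\nabla$ and $\partial_t$ at the end. You instead differentiate the vector identity $s_r q_t = \nabla q_t$ in time, substitute the FPE on both sides, divide by $q_t$, and carry out the cancellation of the $e^{2t}\,s_r(\nabla\cdot s_r + \|s_r\|^2)$ terms explicitly. Both proofs hinge on exactly the same two facts --- the identity $\Delta q_t/q_t = \nabla\cdot s_r + \|s_r\|^2$ and the symmetry of the Hessian of $\log q_t$ (so that $\nabla(\nabla\cdot s_r) = \Delta s_r$ and $(\nabla s_r)^\top = \nabla s_r$) --- so the content is the same. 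Your version trades the paper's extra pass through the scalar $\log q_t$ for one visible cancellation step; the bookkeeping is arguably easier to audit, and indeed your write-up avoids a small typo in the paper's displayed step for $\nabla\partial_t\log q_t$, where a $\nabla$ is dropped on the term $\|\nabla\log q_t\|^2$. Your remarks on justifying the interchange of $\partial_t$ and $\nabla$ via smoothness of the Gaussian-convolved density, and on the consistency check via the continuity form $\partial_t q_t + \nabla\cdot(q_t v)=0$ with $v=-e^{2t}s_r$, are both sound.
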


\begin{proof}
    \noindent
To arrive at the equation for the score function, we first derive an equation for $\partial_t \log q_t$ by considering the following term:
\begin{align*}
     e^{2t}\sum_{i=1}^d \partial_i \left(\nabla \log q_t(z)\right) = e^{2t}\sum_{i=1}^d \partial_i \left(\frac{\nabla q_t(z)}{q_t(z)} \right) &= e^{2t}\sum_{i=1}^d  \left(\frac{q_t(z)\partial_i\nabla q_t(z)- \partial_i q_t(z)\nabla q_t(z)}{p^2_t(z)} \right) \\
     &= e^{2t}\sum_{i=1}^d  \left(\frac{\partial_i\nabla q_t(z)}{q_t(z)} \right) - e^{2t}\|\nabla\log q_t(z)\|^2  \\
\end{align*}
which results in:
\begin{align*}
  \partial_t \log q_t(z) =  \frac{\partial_tq_t(z)}{q_t(z)} = e^{2t}\sum_{i=1}^d  \left(\frac{\partial_i\nabla q_t(z)}{q_t(z)} \right) = e^{2t}\sum_{i=1}^d \partial_i \left(\nabla \log q_t(z)\right) + e^{2t} \|\nabla\log q_t(z)\|^2
\end{align*}
Now, again taking a spatial gradient:
\begin{align*}
    \nabla\partial_t \log q_t(z) = e^{2t}\sum_{i=1}^d \nabla\partial_i \left(\nabla \log q_t(z)\right) + e^{2t}\|\nabla\log q_t(z)\|^2
\end{align*}
Interchanging the operators result in \textit{\textbf{score fpe} for the forward path} (on the reverse it would be negative):
\begin{align*}
    \partial_t s_r(t,z) = e^{2t} \Delta s_r(t,z) + e^{2t}\nabla \|s_r(t,z)\|^2 = e^{2t} \Delta s_r(t,z) + 2e^{2t}\nabla s_r(t,z)^\top s_r(t,z)
\end{align*}
\end{proof}

\noindent
Now  based on this lemma, we express  the overall derivative in terms of spatial derivative in the following lemma. 
\begin{lemma}
\label{overall_to_spatial}
    We have the following relation for the overall score derivative  $s_r'(t,z)$ and $\nabla s_r(t,z)$ for the rescaled process in Eq. \ref{fwd_rescaled}:
    \begin{align}
    \label{eq_overall_to_spatial}
        &\mathbb{E}_{q_t}[\|s_r'(t,z)\|^2] = {\mathbb{E}_{q_t}\left[{ e^{4t}\| \Delta s_r(t,z) \|^2_2} + e^{4t}{\left\|\nabla s_r(t,z)^\top s_r(t,z)\right\|_2^2} + 2e^{4t}(\Delta s_r(t,z))^\top\left(\nabla s_r(t,z)^\top s_r(t,z)\right)\right]}
    \end{align}
    where (recall from Section \ref{notations}) $\Delta$ denotes the Laplacian of a vector.
\end{lemma}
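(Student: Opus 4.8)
The plan is to compute the total time derivative $s_r'(t,z(t)) = \partial_t s_r(t,z) + \nabla s_r(t,z) \cdot \dot z(t)$ explicitly along the rescaled Probability Flow ODE and then take squared norms under $q_t$. First I would substitute the ODE velocity from Eq.~\ref{rescaled_back_ode}, namely $\dot z(t) = -e^{2t} s_r(t,z(t))$, into the chain-rule expression, so that the directional term becomes $\nabla s_r(t,z)\,\dot z(t) = -e^{2t}\,\nabla s_r(t,z)^\top s_r(t,z)$ (using that $\nabla s_r$ is the symmetric Jacobian/Hessian of $\log q_t$, so $\nabla s_r(t,z)\, v = \nabla s_r(t,z)^\top v$). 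Next I would invoke the score Fokker--Planck identity from Lemma~\ref{score_fpe}, $\partial_t s_r(t,z) = e^{2t}\Delta s_r(t,z) + 2e^{2t}\nabla s_r(t,z)^\top s_r(t,z)$, and add the two pieces. The cross terms $+2e^{2t}\nabla s_r^\top s_r$ from $\partial_t s_r$ and $-e^{2t}\nabla s_r^\top s_r$ from the transport term combine to leave $s_r'(t,z) = e^{2t}\Delta s_r(t,z) + e^{2t}\nabla s_r(t,z)^\top s_r(t,z)$.

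Having this clean two-term expression, I would then take the squared Euclidean norm, expand the binomial $\|a+b\|_2^2 = \|a\|_2^2 + \|b\|_2^2 + 2a^\top b$ with $a = e^{2t}\Delta s_r(t,z)$ and $b = e^{2t}\nabla s_r(t,z)^\top s_r(t,z)$, factor out $e^{4t}$, and finally take expectation over $z\sim q_t$. This directly yields
\[
\mathbb{E}_{q_t}[\|s_r'(t,z)\|^2] = \mathbb{E}_{q_t}\!\left[ e^{4t}\|\Delta s_r(t,z)\|_2^2 + e^{4t}\|\nabla s_r(t,z)^\top s_r(t,z)\|_2^2 + 2e^{4t}(\Delta s_r(t,z))^\top(\nabla s_r(t,z)^\top s_r(t,z)) \right],
\]
which is exactly Eq.~\ref{eq_overall_to_spatial}. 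The only subtlety worth spelling out is the symmetry of the Jacobian: since $s_r = \nabla\log q_t$, its Jacobian $\nabla s_r$ equals the Hessian of $\log q_t$ and is therefore symmetric, so $\nabla s_r(t,z)\,\dot z(t)$ and $\nabla s_r(t,z)^\top \dot z(t)$ coincide — this is what makes the cross terms cancel down to a single copy rather than leaving an asymmetric remainder.

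I do not expect any real obstacle here; the statement is essentially algebraic bookkeeping once Lemma~\ref{score_fpe} is in hand. The one place to be careful is sign conventions: the rescaled Probability Flow ODE~\ref{rescaled_back_ode} carries $-e^{2t} s_r$, so it is worth double-checking that the sign of the transport contribution is such that it partially (not fully, and not with the wrong sign) cancels the $2e^{2t}\nabla s_r^\top s_r$ term coming from the forward-path score-FPE. A secondary minor point is that the expectation and the pointwise identity are interchanged freely; this is harmless since we are only asserting an identity for the expectations, and integrability is not an issue at this stage (it will be controlled later via the moment bounds of Lemma~\ref{score_bound}).
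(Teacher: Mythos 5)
Your proof is correct and follows essentially the same route as the paper: chain rule for $s_r'$, substitute $\dot z=-e^{2t}s_r$ from the rescaled Probability Flow ODE, invoke the score-FPE of Lemma~\ref{score_fpe}, and expand the squared norm. The only (cosmetic) difference is ordering — you simplify to the clean vector identity $s_r'(t,z)=e^{2t}\bigl(\Delta s_r(t,z)+\nabla s_r(t,z)^\top s_r(t,z)\bigr)$ \emph{before} squaring, while the paper squares first and substitutes the FPE afterward; both give identical algebra, and your explicit remark on the symmetry of $\nabla s_r$ is a useful justification the paper leaves implicit.
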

\begin{proof}
    Now, utilising the Fokker Planck Equation (FPE) for the score function, we will have:
    \begin{align*}
        &\mathbb{E}_{q_t}[\|s_r'(t,z(t))\|^2] \\
        &= \mathbb{E}_{q_t}[s_r'(t,z)^\top s_r'(t,z)] \\ 
    &= \mathbb{E}_{q_t}\left[\left(\partial_t s_r(t,z) + \nabla s_r(t,z)^\top\left(\frac{d z}{dt}\right)\right)^\top\left(\partial_t s_r(t,z) + \nabla s_r(t,z)^\top\left(\frac{d z}{dt}\right)\right)\right] \\
    &= \mathbb{E}_{q_t}\left[\left(\partial_t s_r(t,z) - e^{2t}\nabla s_r(t,z)^\top s_r(t,z)\right)^\top\left(\partial_t s_r(t,z) - e^{2t}\nabla s_r(t,z)^\top s_r(t,z)\right)\right] \qquad \tag{reverse ODE Eq. \ref{rescaled_back_ode}}\\
     &= {\mathbb{E}_{q_t}\left[{ \|\partial_t s_r(t,z)\|^2_2} + e^{4t}{\left\|\nabla s_r(t,z)^\top s_r(t,z)\right\|_2^2} - 2e^{2t}{\partial_t s_r(t,z)^\top \left(\nabla s_r(t,z)^\top s_r(t,z)\right)}\right]} \\
     &= {\mathbb{E}_{q_t}\Bigg[{ \|e^{2t} \Delta s_r(t,z) + 2e^{2t}s_r(t,z)^\top\nabla s_r(t,z)\|^2_2} + e^{4t}{\left\|\nabla s_r(t,z)^\top s_r(t,z)\right\|_2^2}} \\
     &\qquad \qquad {-2e^{2t}{(e^{2t} \Delta s_r(t,z) + 2e^{2t}s_r(t,z)^\top\nabla s_r(t,z))^\top \left(\nabla s_r(t,z)^\top s_r(t,z)\right)}\Bigg] \tag{FPE for score, Lemma \ref{score_fpe}}} \\
        &= {\mathbb{E}_{q_t}\left[{ e^{4t}\| \Delta s_r(t,z) \|^2_2} + e^{4t}{\left\|\nabla s_r(t,z)^\top s_r(t,z)\right\|_2^2} + 2e^{4t}(\Delta s_r(t,z))^\top\left(\nabla s_r(t,z)^\top s_r(t,z)\right)\right]} \\
    \end{align*}
\end{proof}

\subsection{Bounding the required spatial derivative terms}

\noindent
Now to bound the terms $\mathbb{E}_{q_t}\left[\|\Delta s_r(t,z) \|^2_2\right]$, $\mathbb{E}\left[\|s_r(t,z)\nabla s_r(t,z) \|^2_2\right]$ appearing in Eq. \ref{eq_overall_to_spatial}, we analyze the relationship of these spatial gradient terms with $\frac{d}{dt}\mathbb{E}_{q_t}\left[\|s_r(t,z)\|^2\right]$. Since the naive bound on $\mathbb{E}_{q_t}\left[\|s_r(t,z)\|^2\right]$ is proportional to $d$ as against $d^2$ in case of $\mathbb{E}_{q_t}\left[\|\nabla s_r(t,z)\|_F^2\right]$, this can lead to improved $d-$dependence of the discretization error upon integrating this for a given time interval. We first discuss two lemmas: the first one establishes relation beween $\frac{d}{dt}\mathbb{E}_{q_t}[\|s_r(t,z)\|^2]$ and 
$\mathbb{E}_{q_t}[\|\nabla s_r(t,z)\|^2_F]$. Then extending this lemma for general power $m$ in $\frac{d}{dt}\mathbb{E}_{q_t}[\|s_r(t,z)\|^m]$ leads to the terms comprising both $\nabla s_r$ and $s_r$  from which we can bound the term $\mathbb{E}\left[\|s_r(t,z)\nabla s_r(t,z) \|^2_2\right]$. Then, utilising these lemmas and bounding terms comprising $\delta q_t$ and $\nabla s_r$ in terms of $\frac{d}{dt}\mathbb{E}_{q_t}\left[\|s_r(t,z)\|^2\right]$, we finally bound $\mathbb{E}_{q_t}\left[\|\Delta s_r(t,z) \|^2_2\right]$ by applying Integration By Parts.

\subsubsection{Establishing relation between score and its first order spatial gradient terms}
\noindent
We first analyze the term $\frac{d}{dt}\mathbb{E}_{q_t}[\|s_r(t,z)\|^2]$ and manipulate it to relate it with $\mathbb{E}_{q_t}[\|\nabla s_r(t,z)\|^2_F]$ leading to the following lemma.

\begin{lemma}
\label{spatial_time_relation}
 We have:
\begin{align*}
    \frac{d}{dt}\mathbb{E}_{q_t}[\|s_r(t,z)\|^2] = -2e^{2t}\mathbb{E}_{q_t}[\|\nabla s_r(t,z)\|^2_F]
\end{align*}
\end{lemma}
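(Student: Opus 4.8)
The plan is to differentiate $\mathbb{E}_{q_t}[\|s_r(t,z)\|^2]=\int \|s_r(t,z)\|^2 q_t(z)\,dz$ under the integral sign and then eliminate every time derivative in favour of spatial derivatives using the two Fokker--Planck identities already in hand: $\partial_t q_t = e^{2t}\Delta q_t$ (Eq.~\ref{fpe_rescaled}) and $\partial_t s_r = e^{2t}\Delta s_r + 2e^{2t}\nabla s_r^\top s_r$ (Lemma~\ref{score_fpe}). The product rule splits the derivative into $A:=\int 2\,s_r^\top(\partial_t s_r)\,q_t\,dz$ and $B:=\int \|s_r\|^2\,\partial_t q_t\,dz$. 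Differentiation under the integral, and all the integrations by parts below, are legitimate because $q_t$ is a Gaussian smoothing of $p_{data}$, so $q_t$ and all of its spatial derivatives decay faster than any polynomial and the boundary terms at infinity vanish.

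For $B$, I would substitute $\partial_t q_t=e^{2t}\Delta q_t$ and integrate by parts twice to move the Laplacian onto $\|s_r\|^2$, getting $B=e^{2t}\int q_t\,\Delta\!\left(\|s_r\|^2\right)dz$. The pointwise identity $\Delta(\|s_r\|^2)=2\|\nabla s_r\|_F^2+2\,s_r^\top\Delta s_r$ (which uses that $\nabla s_r$ is symmetric, being the Hessian of $\log q_t$) then gives $B=2e^{2t}\mathbb{E}_{q_t}[\|\nabla s_r\|_F^2]+2e^{2t}\mathbb{E}_{q_t}[s_r^\top\Delta s_r]$. For $A$, substituting Lemma~\ref{score_fpe} directly gives $A=2e^{2t}\mathbb{E}_{q_t}[s_r^\top\Delta s_r]+4e^{2t}\mathbb{E}_{q_t}[s_r^\top\nabla s_r\,s_r]$.

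The only non-elementary step is then to handle the common term $\mathbb{E}_{q_t}[s_r^\top\Delta s_r]$, which I would reduce by one more integration by parts: writing it coordinatewise as $\int q_t\sum_{i,j}(s_r)_j\,\partial_i\partial_i(s_r)_j\,dz$ and integrating by parts in $z_i$, the derivative hits both $q_t$ and $(s_r)_j$; using $\nabla q_t=q_t\,s_r$ to rewrite $\partial_i q_t=q_t (s_r)_i$ turns the $q_t$-contribution into $-\mathbb{E}_{q_t}[s_r^\top\nabla s_r\,s_r]$, so $\mathbb{E}_{q_t}[s_r^\top\Delta s_r]=-\mathbb{E}_{q_t}[\|\nabla s_r\|_F^2]-\mathbb{E}_{q_t}[s_r^\top\nabla s_r\,s_r]$. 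Adding $A+B$ and inserting this identity, the two $s_r^\top\nabla s_r\,s_r$ terms cancel and the $\|\nabla s_r\|_F^2$ terms combine to $-2e^{2t}\mathbb{E}_{q_t}[\|\nabla s_r\|_F^2]$, which is the claim.

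I expect the main obstacle to be purely bookkeeping: matching the index conventions for the Jacobian so that the cross terms cancel exactly, and keeping the signs straight when each integration by parts picks up the extra $\nabla q_t=q_t s_r$ factor. As a sanity check and an alternative route, one can perform the time change $\tau=\tfrac12(e^{2t}-1)$, under which $q$ solves the heat equation $\partial_\tau q=\Delta q$; the classical second-order (de Bruijn-type) identity $\frac{d}{d\tau}\int q\,\|\nabla\log q\|^2=-2\int q\,\|\nabla^2\log q\|_F^2$ then yields the result after multiplying by $d\tau/dt=e^{2t}$.
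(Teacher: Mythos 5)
Your proposal is correct and follows essentially the same route as the paper's proof: differentiate under the integral, substitute the two Fokker--Planck identities (Eq.~\ref{fpe_rescaled} and Lemma~\ref{score_fpe}), integrate by parts, and use $\nabla q_t = q_t\,s_r$ to collapse everything to $-2e^{2t}\,\mathbb{E}_{q_t}[\|\nabla s_r\|_F^2]$; the only organizational difference is that you move the full Laplacian onto $\|s_r\|^2$ (Green's identity) and then isolate $\mathbb{E}_{q_t}[s_r^\top\Delta s_r]$ via a further integration by parts, whereas the paper stops after one integration by parts and exploits the cancellation between $-\nabla q_t\cdot\nabla\|s_r\|^2$ and $2q_t\,s_r^\top\nabla\|s_r\|^2$. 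One small correction: the pointwise identity $\Delta\|s_r\|^2 = 2\|\nabla s_r\|_F^2 + 2\,s_r^\top\Delta s_r$ holds for any smooth vector field and does not need symmetry of $\nabla s_r$; symmetry is what you actually invoke in the subsequent step to identify $\sum_{i,j}(s_r)_i(s_r)_j\,\partial_i(s_r)_j$ with $s_r^\top(\nabla s_r)s_r$.
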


\begin{proof}
We begin by analysing the LHS term, taking the derivative inside the integral and utilise Fokker Plank Equations (FPE) (for $q_t$ and $s_r$ (Eq. \ref{fpe_rescaled}, Eq. \ref{score_fpe_eq}) for the rescaled process to convert it to spatial derivative and finally utilise Integration By Parts (IBP):
\begin{align*}
    &\frac{d}{dt}\mathbb{E}_{q_t}[\|s_r(t,z)\|^2]  \\
    &= \frac{d}{dt} \int q_t(z)\|s_r(t,z)\|^2 dz\\
     &= \int \partial _t q_t(z)\|s_r(t,z)\|^2dz +  2\int q_t(z)s_r(t,z)^\top\partial_t s_r(t,z)dz\\
      &= \int e^{2t}\Delta q_t(z)\|s_r(t,z)\|^2dz +  2\int q_t(z)s_r(t,z)^\top\left(e^{2t} \Delta s_r(t,z) + e^{2t}\nabla \|s_r(t,z)\|^2\right)dz \tag{FPE Eq. \ref{fpe_rescaled}, \ref{score_fpe_eq}}\\
      &= -e^{2t}\int \nabla q_t(z) \cdot \nabla \|s_r(t,z)\|^2dz +  2e^{2t}\int q_t(z)s_r(t,z)^\top\left(\Delta s_r(t,z) + \nabla \|s_r(t,z)\|^2\right)dz \qquad \tag{IBP}\\
      &= e^{2t}\int \nabla q_t(z) \cdot \nabla \|s_r(t,z)\|^2dz +  2e^{2t}\int q_t(z)s_r(t,z)^\top\left(\Delta s_r(t,z)\right)dz \qquad \qquad \text{( $q_t(z)s_r(t,z) =\nabla q_t(z)$)}\\
      &= e^{2t}\int \nabla q_t(z) \cdot \nabla \|s_r(t,z)\|^2dz +  2e^{2t}\int \underbrace{q_t(z)\Delta s_r(t,z)^\top}_{\text{jointly for IBP}}s_r(t,z)dz \\
      &= e^{2t}\int \nabla q_t(z) \cdot \nabla \|s_r(t,z)\|^2dz -  2e^{2t}\int {\nabla q_t(z)\cdot \nabla s_r(t,z)^\top}s_r(t,z)dz - 2e^{2t}\int  q_t(z) \|\nabla s_r(t,z)\|^2_Fdz \tag{IBP} \\
      &=- 2e^{2t}\int  q_t(z) \|\nabla s_r(t,z)\|^2_Fdz  \tag{$\nabla\|s_r(t,z)\|^2 = 2(\nabla s_r(t,z))^\top s_r(t,z)$}\\
\end{align*}
\end{proof}

\noindent
We now generalize this lemma by establishig relation between $\frac{d}{dt}\mathbb{E}_{q_t}[\|s_r(t,z)\|^m]$ for a general $m\geq 2 $ and $\mathbb{E}_{q_t}[\|\nabla s_r(t,z)\|^2_F]$. For $m>2$, the RHS should have terms comprising both $\nabla s_r(t,z), s_r(t,z)$ and thus the result can be used to bound the second part of the RHS in Eq. \ref{eq_overall_to_spatial}.

\begin{lemma}
\label{generalized_lemma}
We have the following general result for any $m\geq 2$:
\begin{align*}
    e^{-2t}\frac{d}{dt}\mathbb{E}_{q_t}[\| s_r(t,z)\|_2^m] = -m\mathbb{E}_{q_t}[\|s_r(t,z)\|^{m-2}_2\|\nabla s_r(t,z)\|^2_F] -\frac{m(m-2)}{4} \mathbb{E}_{q_t}\left[\|s_r(t,z)\|^{m-4}_2\left\| \left(\nabla \|s_r(t,z)\|^2_2\right)\right\|^2_2\right]
\end{align*}
\end{lemma}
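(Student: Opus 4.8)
The plan is to run the proof of Lemma~\ref{spatial_time_relation} verbatim but carry the general exponent $m$ through. First I would differentiate under the integral, $\frac{d}{dt}\mathbb{E}_{q_t}[\|s_r(t,z)\|^m] = \int \partial_t q_t(z)\,\|s_r(t,z)\|^m\,dz + \int q_t(z)\,\partial_t\bigl(\|s_r(t,z)\|^m\bigr)\,dz$, and then eliminate the time derivatives in favour of spatial ones via the Fokker--Planck equations: $\partial_t q_t = e^{2t}\Delta q_t$ (Eq.~\ref{fpe_rescaled}) and $\partial_t s_r = e^{2t}\Delta s_r + e^{2t}\nabla\|s_r\|^2$ (Lemma~\ref{score_fpe}), together with the chain rule $\partial_t(\|s_r\|^m) = m\|s_r\|^{m-2} s_r^\top \partial_t s_r$. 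After dividing by $e^{2t}$ this leaves three integrals: $A = \int \Delta q_t\,\|s_r\|^m\,dz$, $B = m\int q_t\|s_r\|^{m-2} s_r^\top \Delta s_r\,dz$, and $C = m\int q_t\|s_r\|^{m-2} s_r^\top \nabla\|s_r\|^2\,dz$.

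For $A$ and $C$ I would use the identity $\nabla q_t = q_t\, s_r$ (immediate from $s_r = \nabla\log q_t$). One integration by parts on $A$, together with $\nabla(\|s_r\|^m) = \tfrac{m}{2}\|s_r\|^{m-2}\nabla\|s_r\|^2$, gives $A = -\tfrac{m}{2}\int \|s_r\|^{m-2}\,\nabla q_t\cdot\nabla\|s_r\|^2\,dz = -\tfrac{m}{2}\int q_t\|s_r\|^{m-2} s_r^\top \nabla\|s_r\|^2\,dz$, while $C$ is already in that form; hence $A + C = \tfrac{m}{2}\int q_t\|s_r\|^{m-2} s_r^\top\nabla\|s_r\|^2\,dz$. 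The main work is the term $B$. Writing $s_r^\top\Delta s_r = \sum_j s_{r,j}\,\Delta s_{r,j}$, using $q_t\, s_{r,j} = \partial_j q_t$, and integrating by parts once in each spatial coordinate to move one derivative off $\Delta s_{r,j}$, I get $B = -m\sum_{i,j}\int \partial_i(\|s_r\|^{m-2})\,(\partial_j q_t)\,(\partial_i s_{r,j})\,dz \;-\; m\sum_{i,j}\int \|s_r\|^{m-2}\,(\partial_i\partial_j q_t)\,(\partial_i s_{r,j})\,dz$. In the first sum, $\sum_j (\partial_j q_t)(\partial_i s_{r,j}) = \tfrac{q_t}{2}\,\partial_i\|s_r\|^2$, which together with $\nabla(\|s_r\|^{m-2}) = \tfrac{m-2}{2}\|s_r\|^{m-4}\nabla\|s_r\|^2$ collapses it to $-\tfrac{m(m-2)}{4}\mathbb{E}_{q_t}\left[\|s_r\|^{m-4}\,\|\nabla(\|s_r\|^2)\|^2\right]$, which is exactly the second term of the claimed identity.

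For the second sum in $B$ I would substitute the Hessian identity $\partial_i\partial_j q_t = \partial_i(q_t s_{r,j}) = q_t\,(s_{r,i}s_{r,j} + \partial_i s_{r,j})$; since $\sum_{i,j}s_{r,i}s_{r,j}\,\partial_i s_{r,j} = \tfrac{1}{2}s_r^\top\nabla\|s_r\|^2$ and $\sum_{i,j}(\partial_i s_{r,j})^2 = \|\nabla s_r\|_F^2$, this sum equals $-\tfrac{m}{2}\int q_t\|s_r\|^{m-2} s_r^\top\nabla\|s_r\|^2\,dz - m\int q_t\|s_r\|^{m-2}\|\nabla s_r\|_F^2\,dz$. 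Adding $A+C$ and both pieces of $B$, the $s_r^\top\nabla\|s_r\|^2$ contributions cancel, leaving $e^{-2t}\frac{d}{dt}\mathbb{E}_{q_t}[\|s_r\|^m] = -m\,\mathbb{E}_{q_t}[\|s_r\|^{m-2}\|\nabla s_r\|_F^2] - \tfrac{m(m-2)}{4}\mathbb{E}_{q_t}[\|s_r\|^{m-4}\|\nabla(\|s_r\|^2)\|^2]$, as desired.

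The main obstacle is purely bookkeeping in term $B$: keeping track of which derivative lands where through the two rounds of integration by parts, and spotting that the Hessian identity for $q_t$ is what both isolates the Frobenius-norm term and produces a copy of $s_r^\top\nabla\|s_r\|^2$ that exactly cancels the leftover from $A+C$. The only analytic caveat is the usual one already implicit in Lemma~\ref{spatial_time_relation}: all the integrations by parts require the boundary terms at infinity to vanish and the $\|s_r\|$-weighted integrands to be integrable; note that when $m=2$ the coefficient $\tfrac{m(m-2)}{4}$ is zero, so no negative power of $\|s_r\|$ actually appears and the statement reduces precisely to Lemma~\ref{spatial_time_relation}, confirming consistency.
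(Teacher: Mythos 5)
Your proof is correct and follows essentially the same route as the paper's: differentiate under the integral, use the Fokker--Planck equations for $q_t$ and $s_r$ to trade $\partial_t$ for spatial derivatives, integrate by parts twice, and repeatedly exploit $\nabla q_t = q_t\,s_r$. The only difference is bookkeeping order — you absorb $q_t\,s_{r,j}$ into $\partial_j q_t$ before the second integration by parts and then expand the resulting Hessian $\partial_i\partial_j q_t$, whereas the paper keeps $q_t$ as an explicit factor through the IBP and substitutes $\partial_i q_t = q_t s_{r,i}$ afterward; both groupings produce the same cancellation and the same final identity.
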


\begin{proof}
Here also, we start with analyzing the LHS similar to previous lemma (as discussed in Section \ref{notations}, $\partial_i$ corresponds to partial derivative w.r.t. $i^{th}$ coordinate of $z$, $\Delta=\sum_{i}\partial_i\partial_i$ is the Laplacian, $s_r(\cdot)_i$ corresponds to $i^{th}$ element of $s_r$ which implies $\|s_r(t,z)\|^2 = \sum_{i=1}^d s^2_r(t,z)_i$) (all the variables under $\sum$ range from $1$ to $d$ if not mentioned):
\begin{align*}
   & e^{-2t}\frac{d}{dt}\mathbb{E}_{q_t}[\| s_r(t,z)\|_2^m]\\
    &\overset{(a)}{=} \int e^{-2t} \partial_t q_t(z) \|s_r(t,z)\|^m_2 dz + m\int e^{-2t} q_t(z)  \|s_r(t,z)\|^{m-2}_2 s_r(t,z)^\top\partial_t s_r
    (t,z) dz  \\
    &\overset{(b)}{=} \sum_{i=1}^d\int  \partial_i \partial_i q_t(z) \|s_r(t,z)\|^m_2 dz + m\int  q_t(z)  \|s_r(t,z)\|^{m-2}_2 \left( \sum^d_{i,j=1} s_{r}(t,z)_j \left(\partial_i \partial_i s_{r}(t,z)_j +  \partial_i s^2_{r}(t,z)_j\right)\right) dz\\
    &\overset{(c)}{=} -\frac{m}{2}\sum^d_{i,j=1}\int  \partial_i q_t(z) \|s_r(t,z)\|^{m-2}_2 \partial_i s_r^2(t,z)_jdz \\
    & \qquad \qquad \qquad + m \sum^d_{i,j=1}\int  q_t(z)  \|s_r(t,z)\|^{m-2}_2 s_r(t,z)_j \left(\partial_i \partial_i s_r(t,z)_j +  \partial_i s^2_r(t,z)_j\right) dz\\
     &\overset{(d)}{=} m\sum_{i,j}\int  q_t(z) \|s_r(t,z)\|^{m-2}
    s_r(t,z)_i s_r(t,z)_i\partial_i s_r(t,z)_jdz + m \sum_{i,j}\int  \underbrace{q_t(z)  \|s_r(t,z)\|^{m-2}_2 s_r(t,z)_j}_{I_1} \partial_i \partial_i s_r(t,z)_j dz\\
      &\overset{(e)}{=} m\sum_{i,j}\int  q_t(z) \|s_r(t,z)\|^{m-2}
    s_r(t,z)_i s_r(t,z)_i\partial_i s_r(t,z)_jdz - m \sum_{i,j}\int  { \partial_i q_t(z)  \cdot \|s_r(t,z)\|^{m-2}_2 s_r(t,z)_j}  \partial_i s_r(t,z)_j  dz\\
      & \quad  -  m \sum_{i,j}\int  {  q_t(z)  \cdot \partial_i \|s_r(t,z)\|^{m-2}_2 \cdot s_r(t,z)_j}  \partial_i s_r(t,z)_j  dz -  m \sum_{i,j}\int  {  p_t(z)  \cdot \|s_r(t,z)\|^{m-2}_2 \cdot \partial_i s_r(t,z)_j}  \partial_i s_r(t,z)_j dz \\
      &\overset{(f)}{=} -  m(m-2) \sum_{i,j,k}\int  {  p_t(z)  \cdot \|s_r(t,z)\|^{m-4}_2 \cdot s_r(t,z)_k}  \partial_i s_r(t,z)_k s_r(t,z)_j\cdot \partial_i s_r(t,z)_j    dz \\
    & \qquad \qquad \qquad -  m \sum_{i,j}\int    q_t(z)  \cdot \|s_r(t,z)\|^{m-2}_2 \cdot   \left(\partial_i s_r(t,z)_j\right)^2 dz \\
    & = -\frac{m(m-2)}{4} \mathbb{E}_{q_t}\left[\|s_r(t,z)\|^{m-4}_2\left\| \left(\nabla \|s_r(t,z)\|^2_2\right)\right\|^2_2\right]-m\mathbb{E}_{q_t}\left[\|s_r(t,z)\|^{m-2}_2\|\nabla s_r(t,z)\|^2_F\right] 
\end{align*}
where in $(a)$ we have used $\partial_t\|s_r(t,z)\|^{m}_2=m\|s_r(t,z)\|^{m-2}_2 s_r(t,z)^\top\partial_t s_r(t,z)$, $(b)$  implies the use of FPEs Eq. \ref{fpe_rescaled}, Lemma \ref{score_fpe} , $(c)$ is the application of Integration By Parts on the first term, $(d)$ uses $\partial_i q_t(z) = q_t(z)s_r(t,z)_i$ then subtract it from the second part of the second term and use $\partial_is^2_r(t,z)_j=2s_r(t,z)_j\partial_is_r(t,z)_j$, $(e)$ implies again using Integration By Parts on the second term where one term is jointly considered as $I_1$ and the other remaining. $(f)$ is derived using $\partial _t q = q_t(z)s_r(t,z)$ on second term, cancelling the first two terms and writing $\partial_i \|s_r(t,z)\|^{m-2}_2 = (m-2)\|s_r(t,z)\|^{m-4}_2\sum^d_{k=1}s_r(t,z)_k  \partial_i s_r(t,z)_k $.
\end{proof}

\noindent
Now, as discussed before, a consequence lemma of this lemma is that we can bound the second term comprising both $s_r$ and $\nabla s_r$ in our main Eq. \ref{eq_overall_to_spatial} (since the RHS contains these terms for a general m). This is stated as a lemma below.

\begin{lemma}
\label{gen_eq_rhs_terms_bound}
    Defining $X_m = \int q_t(z)\|s_r(t,z)\|^{m-2}\|\nabla s_r(t,z)\|^2_F dz$, $X'_m = \int q_t(z)\|s_r(t,z)\|^{m-4}\|\nabla \|s_r(t,z)\|^2_2\|^2_2 dz$ we can bound it as follows for any $m\geq 2$:
    \begin{align*}
        X_m \leq -\frac{1}{m}e^{-2t}\frac{d}{dt}\mathbb{E}_{q_t}[\|s_r(t,z)\|^m], \quad X'_m \leq -\frac{4}{m(m-2)}e^{-2t}\frac{d}{dt}\mathbb{E}_{q_t}[\|s_r(t,z)\|_2^m]
    \end{align*}
\end{lemma}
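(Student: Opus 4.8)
The plan is to read this off directly from Lemma \ref{generalized_lemma}, which already does all the work. That lemma is precisely the identity
\[
e^{-2t}\frac{d}{dt}\mathbb{E}_{q_t}[\|s_r(t,z)\|_2^m] \;=\; -m\, X_m \;-\; \frac{m(m-2)}{4}\, X'_m ,
\]
once one recognizes that $X_m = \mathbb{E}_{q_t}[\|s_r(t,z)\|^{m-2}_2\|\nabla s_r(t,z)\|^2_F]$ and $X'_m = \mathbb{E}_{q_t}[\|s_r(t,z)\|^{m-4}_2\|\nabla\|s_r(t,z)\|^2_2\|^2_2]$ are exactly the two quantities named in the statement. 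So the whole argument reduces to a sign check on the two terms on the right-hand side.

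First I would note that $X_m\ge 0$ and $X'_m\ge 0$: the density $q_t$ is nonnegative, and each integrand is a (possibly negative-power but still nonnegative) power of $\|s_r\|$ times a squared norm, hence nonnegative; moreover the prefactors $m$ and $\tfrac{m(m-2)}{4}$ are nonnegative for $m\ge 2$. The only mildly delicate point is that the exponent $m-4$ is negative when $2<m<4$, but $\|s_r\|^{m-4}$ is still a nonnegative quantity wherever $s_r\neq 0$, and at $m=2$ the coefficient $\tfrac{m(m-2)}{4}$ vanishes, so nothing degenerate happens. Given this, in the displayed identity I would simply drop the nonpositive term $-\tfrac{m(m-2)}{4}X'_m$ to get $e^{-2t}\frac{d}{dt}\mathbb{E}_{q_t}[\|s_r(t,z)\|_2^m]\le -m X_m$, which rearranges to the first claimed bound; and symmetrically drop $-m X_m$ to get $e^{-2t}\frac{d}{dt}\mathbb{E}_{q_t}[\|s_r(t,z)\|_2^m]\le -\tfrac{m(m-2)}{4}X'_m$, which rearranges to the second claimed bound, valid for $m>2$.

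I do not expect any genuine obstacle here: the substantive computation is entirely contained in Lemma \ref{generalized_lemma}, and this statement is merely the observation that a sum of two nonpositive terms dominates each summand individually. The only things to be careful about when writing it up are recording the admissible range of $m$ correctly (the second inequality is vacuous at $m=2$, where its right-hand side has a vanishing denominator) and keeping the sign of the exponent $m-4$ separate from the sign of $X'_m$ itself, which is always nonnegative.
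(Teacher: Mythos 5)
Your proposal is correct and follows essentially the same route as the paper: rearrange the identity of Lemma~\ref{generalized_lemma} to express $-\tfrac{1}{m}e^{-2t}\tfrac{d}{dt}\mathbb{E}_{q_t}[\|s_r\|_2^m]$ as $X_m + \tfrac{m-2}{4}X'_m$, observe that both $X_m$ and $X'_m$ are nonnegative integrals for $m\ge 2$, and drop one summand to bound the other. Your additional remarks about the vacuous $m=2$ case in the second bound and the sign of the exponent $m-4$ versus the sign of $X'_m$ are careful and correct, though the paper leaves them implicit.
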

\begin{proof}
    For this, considering the Lemma \ref{generalized_lemma}, we have:
    \begin{align*}
        \frac{-1}{m}e^{-2t}\frac{d}{dt}\mathbb{E}_{q_t}[\| s_r(t,z)\|_2^m] = \underbrace{\mathbb{E}_{q_t}[\|s_r(t,z)\|^{m-2}_2\|\nabla s_r(t,z)\|^2_F]}_{X_m} + \frac{(m-2)}{4} \underbrace{\mathbb{E}_{q_t}\left[\|s_r(t,z)\|^{m-4}_2\left\| \left(\nabla \|s_r(t,z)\|^2_2\right)\right\|^2_2\right]}_{X'_m}
    \end{align*}
    When $m\geq 2$, we will have $X_m,X'_m\geq 0$, thus, $X_m \leq -\frac{1}{m}e^{-2t}\frac{d}{dt}\mathbb{E}_{q_t}[\|s_r(t,z)\|^m]$ and \\ $X'_m  \leq -\frac{4}{m(m-2)}e^{-2t}\frac{d}{dt}\mathbb{E}_{q_t}[\|s_r(t,z)\|^m]$.
\end{proof}

\subsubsection{Bounding the second order spatial gradient of score term}
\noindent
For this, we first provide two lemmas to bound terms comprising second order spatial derivative of the law $q_t$ and first order spatial derivative of the score $s_r$.
\begin{lemma}
\label{nabla_p_by_p_calc}
    For the rescaled process $z(t)$, we have:
    \begin{align*}
        \frac{\Delta q_t(z)}{q_t(z)} = \frac{-d}{(e^{2t}-1)} + \mathbb{E}_{y|z}\left[\frac{\|y-z\|^2_2}{(e^{2t}-1)^2}\right]
    \end{align*}
\end{lemma}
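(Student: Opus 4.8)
The plan is to differentiate the Gaussian-convolution representation of $q_t$ twice and read off the result as a conditional expectation, exactly mirroring the first-derivative computations of Lemmas \ref{score_calc} and \ref{spatial_der_score}. Write $\sigma_t^2 := e^{2t}-1$, so that $q_t(z) \propto \int e^{-\|z-y\|_2^2/(2\sigma_t^2)}\, p_{data}(y)\,dy$, and recall $P(y,z) = e^{-\|z-y\|_2^2/(2\sigma_t^2)}\, p_{data}(y)$ with $P(y|z) = P(y,z)/\int P(y,z)\,dy$ and $\mathbb{E}_{y|z}[\,\cdot\,]$ the expectation under $P(y|z)$.

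First I would record the pointwise gradient identity $\nabla_z e^{-\|z-y\|_2^2/(2\sigma_t^2)} = \tfrac{y-z}{\sigma_t^2}\, e^{-\|z-y\|_2^2/(2\sigma_t^2)}$ already used in Lemma \ref{score_calc}, and apply the divergence operator once more. Since $\nabla_z \cdot \big(\tfrac{y-z}{\sigma_t^2}\big) = -d/\sigma_t^2$ and $\big\|\tfrac{y-z}{\sigma_t^2}\big\|_2^2 = \|y-z\|_2^2/\sigma_t^4$, the product rule gives
\[
\Delta_z\, e^{-\|z-y\|_2^2/(2\sigma_t^2)} = \Big(\frac{-d}{\sigma_t^2} + \frac{\|y-z\|_2^2}{\sigma_t^4}\Big)\, e^{-\|z-y\|_2^2/(2\sigma_t^2)}.
\]
Next I would differentiate under the integral sign — legitimate because $p_{data}$ has finite second moment (Assumption \ref{finite_moment}) and the Gaussian kernel together with its first two $z$-derivatives is dominated uniformly on a neighbourhood of any fixed $z$ — to obtain $\Delta q_t(z) \propto \int \big(\tfrac{-d}{\sigma_t^2} + \tfrac{\|y-z\|_2^2}{\sigma_t^4}\big) P(y,z)\,dy$. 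Dividing by $q_t(z) \propto \int P(y,z)\,dy$, the common normalising constant $(2\pi\sigma_t^2)^{-d/2}$ cancels, and the definitions of $P(y|z)$ and $\mathbb{E}_{y|z}$ yield
\[
\frac{\Delta q_t(z)}{q_t(z)} = \frac{-d}{\sigma_t^2} + \mathbb{E}_{y|z}\left[\frac{\|y-z\|_2^2}{\sigma_t^4}\right] = \frac{-d}{e^{2t}-1} + \mathbb{E}_{y|z}\left[\frac{\|y-z\|_2^2}{(e^{2t}-1)^2}\right],
\]
which is the claim.

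There is no genuine obstacle here; the only step requiring mild care is the justification for differentiating twice under the integral, which is routine given Assumption \ref{finite_moment} and the rapid decay of the Gaussian kernel. As a consistency check I would also note the purely algebraic route $\Delta q_t/q_t = \|\nabla\log q_t\|_2^2 + \Delta\log q_t = \|s_r(t,z)\|_2^2 + \operatorname{div} s_r(t,z)$; substituting $s_r(t,z) = \mathbb{E}_{y|z}[(y-z)/\sigma_t^2]$ and $\nabla s_r(t,z) = \mathrm{Var}_{y|z}[(y-z)/\sigma_t^2] - I_d/\sigma_t^2$ from Lemmas \ref{score_calc}--\ref{spatial_der_score}, and using $\Tr\big(\mathrm{Var}_{y|z}[X]\big) + \|\mathbb{E}_{y|z}[X]\|_2^2 = \mathbb{E}_{y|z}[\|X\|_2^2]$ with $X = (y-z)/\sigma_t^2$, recovers the same right-hand side. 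This identity then feeds directly into the subsequent integration-by-parts estimates on $\int \Delta q_t(z)\,\|\nabla s_r(t,z)\|_F^2\,dz$ and on $\mathbb{E}_{q_t}[\|\Delta s_r(t,z)\|_2^2]$.
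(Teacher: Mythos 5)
Your proof is correct and follows essentially the same route as the paper: both push the Laplacian through the Gaussian mixture representation of $q_t$, apply the product rule to the kernel (giving $\nabla\cdot\frac{y-z}{e^{2t}-1} = -d/(e^{2t}-1)$ and the $\|y-z\|^2/(e^{2t}-1)^2$ term), and divide by $q_t$ to produce the conditional expectation; the paper simply organises the two derivatives as $\nabla\cdot\nabla q_t$ via Lemma \ref{score_calc} rather than computing the pointwise Laplacian of the kernel in one go. Your supplementary algebraic check via $\Delta q_t/q_t = \|s_r\|_2^2 + \operatorname{div}\, s_r$ together with the variance decomposition is a clean independent confirmation reusing Lemmas \ref{score_calc}--\ref{spatial_der_score}, though the paper does not take that route.
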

\begin{proof}
Here also similar to the score function calculation in Lemma \ref{score_calc}, we have:
    \begin{align*}
        \frac{\Delta q_t(z)}{q_t(z)} = \frac{\nabla \cdot \nabla q_t(z)}{q_t(z)} &\overset{(a)}{=} \frac{ \nabla \cdot \int \frac{y-z}{(e^{2t}-1)} e^{-\frac{\|z-y\|^2}{2(e^{2t}-1)}} p_{data}(y) dy}{\int   e^{-\frac{\|z-y\|^2}{2(e^{2t}-1)}} p_{data}(y) dy} \\
        &= \frac{   \int \left(\nabla\cdot \frac{y-z}{(e^{2t}-1)}\right) e^{-\frac{\|z-y\|^2}{2(e^{2t}-1)}} p_{data}(y) dy + \int  \left(\nabla e^{-\frac{\|z-y\|^2}{2(e^{2t}-1)}} p_{data}(y) \right)  \frac{y-z}{(e^{2t}-1)} dy }{\int   e^{-\frac{\|z-y\|^2}{2(e^{2t}-1)}} p_{data}(y) dy} \\
         &= \frac{   \int \frac{-d}{(e^{2t}-1)} e^{-\frac{\|z-y\|^2}{2(e^{2t}-1)}} p_{data}(y) dy + \int e^{-\frac{\|z-y\|^2}{2(e^{2t}-1)}} p_{data}(y) \left(\frac{y-z}{(e^{2t}-1)}\right)^\top\frac{y-z}{(e^{2t}-1)}     dy }{\int   e^{-\frac{\|z-y\|^2}{2(e^{2t}-1)}} p_{data}(y) dy} \\
         &=  \frac{-d}{(e^{2t}-1)} + { \int P(y|z) \left(\frac{y-z}{(e^{2t}-1)}\right)^\top\frac{y-z}{(e^{2t}-1)}     dy } \\
         &=  \frac{-d}{(e^{2t}-1)} + \mathbb{E}_{y|z}\left[\frac{\|y-z\|^2_2}{(e^{2t}-1)^2}\right] \\
    \end{align*}
    where in $(a)$ we have taken the expression also used in Lemma \ref{score_calc} and as discussed before $y\sim p_{data}$. 
\end{proof}

\begin{lemma}
\label{hess_p_grad_s_norm}
    We have the following result for the rescaled process $z(t)$:
    \begin{align*}
         \int \Delta q_t(z) \|\nabla s_r(t,z)\|^2_F dz \leq  \frac{C_dd^2}{(e^{2t}-1)^{3}}    -  \frac{e^{-2t}de}{2(1+\frac{1}{\log d})(e^{2t}-1)}\frac{d}{dt}\mathbb{E}_{q_t}[\|s_r(t,z)\|^2]  
    \end{align*}
    where $C_d =\frac{(1+2\frac{\log d}{d}+\frac{6}{d})^{\log d+3}}{(1+\log d)}$.
\end{lemma}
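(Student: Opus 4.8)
The plan is to begin from Lemma~\ref{nabla_p_by_p_calc}, which expresses the sign‑indefinite weight $\Delta q_t(z)$ through a manifestly nonnegative one. Writing $\sigma_t^2:=e^{2t}-1$ and $A(z):=\mathbb{E}_{y|z}\!\big[\|y-z\|_2^2\big]/\sigma_t^2$, Lemma~\ref{nabla_p_by_p_calc} gives $\Delta q_t(z)/q_t(z)=(A(z)-d)/\sigma_t^2$, so that
\[
\int \Delta q_t(z)\,\|\nabla s_r(t,z)\|_F^2\,dz
= \frac{1}{\sigma_t^2}\,\mathbb{E}_{q_t}\!\big[A(z)\,\|\nabla s_r(t,z)\|_F^2\big]
-\frac{d}{\sigma_t^2}\,\mathbb{E}_{q_t}\!\big[\|\nabla s_r(t,z)\|_F^2\big].
\]
The last term is immediately handled by Lemma~\ref{spatial_time_relation}: it equals $\tfrac{d\,e^{-2t}}{2\sigma_t^2}\,\tfrac{d}{dt}\mathbb{E}_{q_t}[\|s_r(t,z)\|^2]$, which is nonpositive and already of the form of (part of) the claimed $\tfrac{d}{dt}\mathbb{E}_{q_t}[\|s_r\|^2]$ term. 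So everything reduces to bounding $\tfrac1{\sigma_t^2}\mathbb{E}_{q_t}[A(z)\|\nabla s_r(t,z)\|_F^2]$, which naively scales like $d^3/\sigma_t^4$ and must be brought down to $d^2/\sigma_t^4$.

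For this I would use two ingredients. First, since $\tfrac{y-z(t)}{\sigma_t}$ is a standard Gaussian jointly in $(y,z)$, Jensen's inequality and Lemma~\ref{Gaussian_integ} give $\mathbb{E}_{q_t}[A(z)^m]\le\mathbb{E}_{\eta\sim\mathcal N(0,I_d)}[\|\eta\|_2^{2m}]\le(d+2m)^m$ for every $m\ge1$. Second, Lemma~\ref{spatial_der_score} yields the pointwise bound $\|\nabla s_r(t,z)\|_F\le\big\|\mathrm{Var}_{y|z}[\tfrac{y-z}{\sigma_t^2}]\big\|_F+\tfrac{\sqrt d}{\sigma_t^2}\le\tfrac{A(z)+\sqrt d}{\sigma_t^2}$, using that $0\preceq\mathrm{Var}_{y|z}[W]\preceq\mathbb{E}_{y|z}[WW^\top]$, that $\|M\|_F\le\|N\|_F$ whenever $0\preceq M\preceq N$, and that $\|\mathbb{E}_{y|z}[WW^\top]\|_F\le\mathbb{E}_{y|z}[\|W\|_2^2]$ (as in Lemma~\ref{Gaussian_integ}). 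The plan is then to split at a truncation level $\tau=\Theta(d)$:
\[
\mathbb{E}_{q_t}\!\big[A(z)\|\nabla s_r\|_F^2\big]
= \mathbb{E}_{q_t}\!\big[A(z)\|\nabla s_r\|_F^2\,\mathbf{1}\{A(z)\le\tau\}\big]
+ \mathbb{E}_{q_t}\!\big[A(z)\|\nabla s_r\|_F^2\,\mathbf{1}\{A(z)>\tau\}\big].
\]
On the low event, $A(z)\le\tau$, so this piece is at most $\tau\,\mathbb{E}_{q_t}[\|\nabla s_r(t,z)\|_F^2]=-\tfrac{\tau e^{-2t}}{2}\tfrac{d}{dt}\mathbb{E}_{q_t}[\|s_r\|^2]$ by Lemma~\ref{spatial_time_relation}; dividing by $\sigma_t^2$ and adding the earlier $-\tfrac{d}{\sigma_t^2}\mathbb{E}_{q_t}[\|\nabla s_r\|_F^2]$ term produces exactly $-\tfrac{(\tau-d)e^{-2t}}{2\sigma_t^2}\tfrac{d}{dt}\mathbb{E}_{q_t}[\|s_r\|^2]$, which is $\le -\tfrac{d\,e\,e^{-2t}}{2(1+1/\log d)\sigma_t^2}\tfrac{d}{dt}\mathbb{E}_{q_t}[\|s_r\|^2]$ as soon as $\tau\ge d\big(1+\tfrac{e\log d}{\log d+1}\big)$ (since $\tfrac{d}{dt}\mathbb{E}_{q_t}[\|s_r\|^2]\le0$).

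On the high event the pointwise bound gives $A(z)\|\nabla s_r\|_F^2\le A(z)(A(z)+\sqrt d)^2/\sigma_t^4\le 4A(z)^3/\sigma_t^4$ on $\{A(z)>\tau\}$ (as $\tau\ge\sqrt d$), so after dividing by $\sigma_t^2$ this piece is at most $\tfrac{4}{\sigma_t^6}\mathbb{E}_{q_t}[A(z)^3\mathbf{1}\{A(z)>\tau\}]$, and a Markov bound at power $p=\log d$ together with the moment bound above gives $\mathbb{E}_{q_t}[A(z)^3\mathbf{1}\{A(z)>\tau\}]\le\tau^{-\log d}(d+2\log d+6)^{\log d+3}$. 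Choosing $\tau$ of the form $ed\cdot\Theta\!\big((1+\log d)^{1/\log d}\big)$ (recalling $d^{1/\log d}=e$) turns $d^{\log d+3}/\tau^{\log d}$ into $d^2/(1+\log d)$ and produces precisely $\tfrac{C_d d^2}{\sigma_t^6}$ with $C_d=(1+\tfrac{2\log d+6}{d})^{\log d+3}/(1+\log d)$, the factor $(1+\tfrac{2\log d+6}{d})^{\log d+3}$ being nothing but the $d$‑normalized Gaussian moment $(d+2\log d+6)^{\log d+3}/d^{\log d+3}$. Taking $\tau$ to be the larger of this value and $d(1+\tfrac{e\log d}{\log d+1})$ satisfies both requirements simultaneously, and summing the three contributions gives the stated inequality.

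The main obstacle is this high‑event estimate and, more precisely, the simultaneous calibration it forces: the moment order must be taken $p\asymp\log d$ (so that $\tau^{-p}$ beats the $d^{p}$ appearing in the Gaussian moment and collapses the naive $d^3$‑scaling down to $d^2$), while the \emph{same} truncation level $\tau$ must also be large enough to leave the coefficient $\tfrac{e}{1+1/\log d}$ on the $\tfrac{d}{dt}\mathbb{E}_{q_t}[\|s_r\|^2]$ term after recombination; balancing these two constraints and tracking the resulting constant down to the stated $C_d$ is the one genuinely delicate computation. The remaining steps—the two integrations by parts hidden in Lemmas~\ref{nabla_p_by_p_calc} and~\ref{spatial_time_relation}, Jensen, and the chi‑squared moment bound of Lemma~\ref{Gaussian_integ}—are routine.
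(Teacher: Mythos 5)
Your plan correctly identifies the heart of the lemma — the naive $d^3$ from $\mathbb{E}[A\,\|\nabla s_r\|_F^2]$ must be traded against a $\log d$-order moment, and the $\mathbb{E}[\|\nabla s_r\|_F^2]$ piece converted into $\frac{d}{dt}\mathbb{E}[\|s_r\|^2]$ via Lemma~\ref{spatial_time_relation} — and your truncation-at-$\tau$ plus Markov-at-power-$\log d$ is a legitimate alternative to what the paper actually does, which is a weighted Young's inequality on the whole quantity $J_t(z)\|\nabla s_r\|_F^2$ with exponents $(l,m)=(1+\log d,\,1+\tfrac1{\log d})$ and weight $c_m=d/(e^{2t}-1)^{1/m}$; the $1/(1+\log d)$ in $C_d$ comes from the $1/l$ in Young, and the geometric mean $c_m$ makes the balancing exact with no free threshold.

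However, your version has a concrete sign error that breaks the recombination. The low-event piece you derive is $-\tfrac{(\tau-d)e^{-2t}}{2\sigma_t^2}\tfrac{d}{dt}\mathbb{E}_{q_t}[\|s_r\|^2]$. Since $\tfrac{d}{dt}\mathbb{E}_{q_t}[\|s_r\|^2]\le 0$, both this quantity and the target term $-\tfrac{de\,e^{-2t}}{2(1+1/\log d)\sigma_t^2}\tfrac{d}{dt}\mathbb{E}_{q_t}[\|s_r\|^2]$ are nonnegative, and your piece is an \emph{increasing} function of $\tau$; so to dominate it by the target you need $\tau-d\le \tfrac{de}{1+1/\log d}$, i.e.\ $\tau\le d\bigl(1+\tfrac{e\log d}{\log d+1}\bigr)$, the \emph{opposite} of the $\tau\ge\cdots$ you wrote. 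With this corrected, the low event gives an upper bound on $\tau$, while the Markov tail bound forces a lower bound $\tau^{\log d}\gtrsim 4(1+\log d)\,d^{\log d+1}$, i.e.\ $\tau\gtrsim de\,(4(1+\log d))^{1/\log d}$. These two constraints are incompatible for moderate $d$: e.g.\ at $d=10$ the low-event ceiling is $\tau\lesssim 29$ while the Markov floor is $\tau\gtrsim 83$, and the gap persists up to roughly $d\sim 10^5$. The culprit is the lossy factor $4$ from $(A+\sqrt d)^2\le 4A^2$ together with the inherent slack in Markov's inequality; the paper's Young-with-weight $c_m$ has no such hard cutoff and hence no competing constraints, which is what lets it land exactly on the stated $C_d$. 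Your route can be salvaged only at the cost of a larger $C_d'$ (which would still suffice for Theorem~\ref{main_theorem_non_smooth} since all that is used is $C_d=O(1)$), but as written it does not prove the lemma with the constant claimed.
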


\begin{proof}
    We start by writing Laplacian as $\sum_{i}\partial_i\partial_i$ and decomposing the term as follows:
    \begin{align*}
     \int  \Delta q_t(z) \|\nabla s_r(t,z)\|^2_F dz &= \int  \sum_i \partial_i\partial_i q_t(z) \|\nabla s_r(t,z)\|^2_F dz \\
      &= \int  \sum_i \partial_i\partial_i q_t(z) \|\nabla s_r(t,z)\|^2_F dz \\
      &= \int   q_t(z) J_t(z) \|\nabla s_r(t,z)\|^2_F dz  \\
      &= \int   q_t(z) J_t(z) c_m^{-1}\|\nabla s_r(t,z)\|^{2/l}_F \cdot c_m \|\nabla s_r(t,z)\|^{2/m}_F  dz \\
      &\overset{(b)}{\leq}  \frac{c_m^{-l}}{l}\int   q_t(z) J^l_t(z)\|\nabla s_r(t,z)\|^{2}_F dz + \frac{c^m_m}{m}\int q_t(z) \|\nabla s_r(t,z)\|^{2}_F  dz 
    \end{align*}
    where $J_t(z) = \frac{\sum_i\partial_i\partial_i q_t(z)}{q_t(z)}$, $l$ and $m$ are constants where $l>1$, $1/l+1/m=1$ and in $(b)$, we have just used $ab \leq \frac{1}{l}a^l + \frac{1}{m}b^m$ with $a = J_t(z)c^{-1}_m, \, b=c_m$. Now, we utilise Lemma \ref{spatial_der_score} for the spatial gradient of score, Lemma \ref{nabla_p_by_p_calc} to write $J_t(z) = -\frac{d}{e^{2t}-1} + \mathbb{E}_{y|z}\left[ \frac{\|y-z\|^2_2}{(e^{2t}-1)^2}\right]$. Also, for the second term, we can just use Lemma \ref{spatial_time_relation}, leading to :
    \begin{align*}
       &=\frac{c_m^{-l}}{l}\int   q_t(z) \left(\frac{-d}{e^{2t}-1} + \mathbb{E}_{y|z} \left[\frac{\|y-z\|_2^2}{(e^{2t}-1)^2}\right]\right)^l\left\|\text{Var}_{y|z}\left[\frac{y-z}{e^{2t}-1}\right] - \frac{I_d}{e^{2t}-1}\right\|^2_F dz -\frac{e^{-2t}}{2} \frac{c^m_m}{m}\frac{d}{dt}\mathbb{E}_{q_t}[\|s_r(t,z)\|^2] \\
       &\leq\frac{c_m^{-l}}{l}\mathbb{E}_{q_t} \left[\left( \mathbb{E}_{y|z} \left[\frac{\|y-z\|_2^2}{(e^{2t}-1)^2}\right]\right)^l\left(\left(\mathbb{E}_{y|z}\left[\frac{\|y-z\|^2_2}{(e^{2t}-1)^2}\right]\right)^2 +\frac{d}{(e^{2t}-1)^2}\right) \right]  -\frac{e^{-2t}}{2} \frac{c^m_m}{m}\frac{d}{dt}\mathbb{E}_{q_t}[\|s_r(t,z)\|^2] \\
       &\leq\frac{c_m^{-l}}{l}\mathbb{E}_{q_t} \mathbb{E}_{y|z}\left[\left(  \left[\frac{\|y-z\|_2^2}{(e^{2t}-1)^2}\right]\right)^{l+2} \right] + \frac{d}{(e^{2t}-1)^2}\frac{c_m^{-l}}{l}\mathbb{E}_{q_t} \mathbb{E}_{y|z}\left[\left(  \left[\frac{\|y-z\|_2^2}{(e^{2t}-1)^2}\right]\right)^{l} \right]  -\frac{e^{-2t}}{2} \frac{c^m_m}{m}\frac{d}{dt}\mathbb{E}_{q_t}[\|s_r(t,z)\|^2]   \tag{Jensen's Inequality}\\
       &=\frac{c_m^{-l}}{l(e^{2t}-1)^{l+2}}\mathbb{E}_{\eta\sim\mathcal{N}(0,I_d)} \left[  {\|\eta\|_2}^{2l+4} + d\|\eta\|^{2l}_2 \right]   {-\frac{e^{-2t}}{2} \frac{c^m_m}{m}\frac{d}{dt}\mathbb{E}_{q_t}[\|s_r(t,z)\|^2]}    \qquad \qquad \tag{since $\frac{y-z}{\sqrt{e^{2t}-1}}\sim \mathcal{N}(0,I_d)$ } \\
       &{\leq  \frac{c_m^{-l}}{l(e^{2t}-1)^{l+2}} \cdot (d+2l+4)^{l+2}   -\frac{e^{-2t}}{2} \frac{c^m_m}{m}\frac{d}{dt}\mathbb{E}_{q_t}[\|s_r(t,z)\|^2]  \qquad \qquad \tag{Lemma \ref{Gaussian_integ} for the first term}} \\
       &{=  \frac{\left(\frac{d}{(e^{2t}-1)^{{1}/{m}}}\right)^{-l}}{l(e^{2t}-1)^{l+2}} \cdot (d+2l+4)^{l+2}  -\frac{e^{-2t}}{2} \frac{\left(\frac{d}{(e^{2t}-1)^{{1}/{m}}}\right)^m}{m}\frac{d}{dt}\mathbb{E}_{q_t}[\|s_r(t,z)\|^2]  \qquad \qquad \tag{$c_m=\frac{d}{(e^{2t}-1)^{{1}/{m}}}$}} \\
       &{=  \frac{d^{2}\cdot (1+2l/d+4/d)^{l+2}}{l(e^{2t}-1)^{l+2-l/m}}  -\frac{e^{-2t}}{2}  \frac{d^{m}}{m(e^{2t}-1)}\frac{d}{dt}\mathbb{E}_{q_t}[\|s_r(t,z)\|^2]  } \\
       &{=  \frac{d^2}{(e^{2t}-1)^{3}}\cdot \underbrace{\frac{(1+2\frac{\log d}{d}+\frac{6}{d})^{\log d+3}}{(1+\log d)}}_{C_d}  -\frac{e^{-2t}}{2}  \frac{d^{1+\frac{1}{\log d}}}{(1+\frac{1}{\log d})(e^{2t}-1)}\frac{d}{dt}\mathbb{E}_{q_t}[\|s_r(t,z)\|^2] } \tag{$l=1+\log d$, $m=1+\frac{1}{\log d}$} \\
       &{\leq  \frac{C_dd^2}{(e^{2t}-1)^{3}}    -  \frac{e^{-2t}de}{2(1+\frac{1}{\log d})(e^{2t}-1)}\frac{d}{dt}\mathbb{E}_{q_t}[\|s_r(t,z)\|^2]  } \tag{$d^{\frac{1}{\log d}}=e$} \\
    \end{align*}
    where $C_d =\frac{(1+2\frac{\log d}{d}+\frac{6}{d})^{\log d+3}}{(1+\log d)} $.
\end{proof}

\noindent 
Now, using the previous two lemmas and the Lemma \ref{gen_eq_rhs_terms_bound}, we bound the second order spatial derivative term of the score function in the following lemma.
\begin{lemma}
\label{Laplacian_score}
    We have the following bound for the second order score derivative term in Eq. \ref{eq_overall_to_spatial}:  
    \begin{align*}
       \mathbb{E}_{q_t}\left[\|\Delta s_r(t,z)\|_2^2\right] &\leq \frac{8C_d d^2}{13(e^{2t}-1)^{3}}    -  \frac{de^{-2t}}{(e^{2t}-1)} \frac{d}{dt}\mathbb{E}_{q_t}[\|s_r(t,z)\|^2]  -\frac{8}{13} e^{-2t}\frac{d}{dt}\mathbb{E}_{q_t}\left[\|\nabla s_r(t,z)\|_F^2\right] - \\
       & \qquad \qquad \qquad \frac{6e^{-2t}}{13}\frac{d}{dt}  \mathbb{E}_{q_t}\left[\|s_r(t,z)\|^4_2\right]
    \end{align*}
    where $C_d$ is defined in Lemma \ref{hess_p_grad_s_norm}.
\end{lemma}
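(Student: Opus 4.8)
The plan is to derive a differential identity for $t\mapsto\mathbb{E}_{q_t}[\|\nabla s_r(t,z)\|_F^2]$ and then solve it for $\mathbb{E}_{q_t}[\|\Delta s_r(t,z)\|_2^2]$, in the spirit of the proof of Lemma~\ref{spatial_time_relation} but one spatial order higher. Differentiating under the integral sign and interchanging $\partial_t$ with spatial derivatives gives
\[
\frac{d}{dt}\mathbb{E}_{q_t}[\|\nabla s_r\|_F^2]
= \int \partial_t q_t\,\|\nabla s_r\|_F^2\,dz
+ 2\int q_t\,\langle \nabla s_r,\ \nabla\partial_t s_r\rangle_F\,dz ,
\]
into which I would substitute the density FPE (Eq.~\ref{fpe_rescaled}) $\partial_t q_t = e^{2t}\Delta q_t$ and the spatial gradient of the score-FPE (Lemma~\ref{score_fpe}), i.e. $\nabla\partial_t s_r = e^{2t}\nabla(\Delta s_r) + e^{2t}\nabla(\nabla\|s_r\|^2)$. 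This splits the right-hand side into a density term $e^{2t}\int \Delta q_t\,\|\nabla s_r\|_F^2\,dz$ and two score terms $2e^{2t}\int q_t\langle\nabla s_r,\nabla(\Delta s_r)\rangle_F\,dz$ and $2e^{2t}\int q_t\langle\nabla s_r,\nabla(\nabla\|s_r\|^2)\rangle_F\,dz$.

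The next step is integration by parts on the two score terms, moving the outermost spatial derivative off $\nabla(\Delta s_r)$ (resp. $\nabla(\nabla\|s_r\|^2)$) and onto $q_t\nabla s_r$, using $\partial_i q_t = q_t\,s_r(t,z)_i$ exactly as in the proof of Lemma~\ref{spatial_time_relation}. The leading contribution of the first score term is the diagonal piece $-2e^{2t}\mathbb{E}_{q_t}[\|\Delta s_r\|_2^2]$, leaving a remainder of the form $\int q_t\,\langle \Delta s_r,\ \nabla s_r\cdot s_r\rangle\,dz$ — the same cross structure appearing in Eq.~\ref{eq_overall_to_spatial}; the second score term similarly produces a further $\langle\Delta s_r,\nabla s_r^\top s_r\rangle$-type cross term together with terms of the shape $\int q_t\|\nabla s_r^\top s_r\|_2^2\,dz$ (equivalently $\tfrac14\int q_t\|\nabla\|s_r\|_2^2\|_2^2\,dz$). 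Collecting everything yields an identity of the form
\[
\frac{d}{dt}\mathbb{E}_{q_t}[\|\nabla s_r\|_F^2]
= e^{2t}\!\int\! \Delta q_t\,\|\nabla s_r\|_F^2\,dz
- 2e^{2t}\,\mathbb{E}_{q_t}[\|\Delta s_r\|_2^2]
+ (\text{cross and quartic terms}).
\]

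I would then solve this identity for $\mathbb{E}_{q_t}[\|\Delta s_r\|_2^2]$. The cross terms $\int q_t\langle\Delta s_r,v\rangle\,dz$ are split by Young's inequality $|\langle\Delta s_r,v\rangle|\le\tfrac{\alpha}{2}\|\Delta s_r\|_2^2+\tfrac1{2\alpha}\|v\|_2^2$, with $\alpha$ chosen so that the $\|\Delta s_r\|_2^2$ part is absorbed into the left-hand side; this is the step that fixes the particular rational constants ($8/13$, $6/13$, and the unit coefficient on the $\tfrac{d}{dt}\mathbb{E}_{q_t}[\|s_r\|^2]$ term) in the statement, via $\tfrac1{2-\alpha_{\mathrm{tot}}}=\tfrac{8}{13}$. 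The residual quantities $\int q_t\|\nabla s_r^\top s_r\|_2^2\,dz\le\int q_t\|s_r\|_2^2\|\nabla s_r\|_F^2\,dz$ and $\int q_t\|\nabla\|s_r\|_2^2\|_2^2\,dz$ are bounded by Lemma~\ref{gen_eq_rhs_terms_bound} with $m=4$, turning them into multiples of $-e^{-2t}\tfrac{d}{dt}\mathbb{E}_{q_t}[\|s_r\|^4]$; the density term $e^{2t}\int\Delta q_t\|\nabla s_r\|_F^2\,dz$ is replaced by its bound from Lemma~\ref{hess_p_grad_s_norm}, contributing the $C_d d^2/(e^{2t}-1)^3$ term and the $\tfrac{de^{-2t}}{e^{2t}-1}\tfrac{d}{dt}\mathbb{E}_{q_t}[\|s_r\|^2]$ term (here one uses $\tfrac{8}{13}\cdot\tfrac{e}{2(1+1/\log d)}\le 1$ to get the unit coefficient). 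Dividing through by the residual coefficient $(2-\alpha_{\mathrm{tot}})e^{2t}$ of $\mathbb{E}_{q_t}[\|\Delta s_r\|_2^2]$ and regrouping gives the claimed inequality.

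The main obstacle is the second and third steps: carrying out the (double) integration by parts while honestly tracking every cross term, checking that all boundary terms vanish — which holds because $q_t$ is a smooth, sub-Gaussian density with rapidly decaying derivatives, via the Gaussian-smoothing representation $q_t(z)\propto\int e^{-\|z-y\|_2^2/(2(e^{2t}-1))}p_{\mathrm{data}}(y)\,dy$ of Lemma~\ref{score_calc} — and, most delicately, choosing the Young's-inequality weights on the two cross terms and the residual-bounding step so that after absorption the surviving coefficients line up exactly with the $8/13$, $6/13$ and unit coefficients written in the lemma. A minor additional point is justifying the differentiation under the integral and the interchange of $\partial_t$ with spatial derivatives, which again follows from that representation.
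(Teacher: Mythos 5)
Your proposal follows essentially the same route as the paper's proof: differentiate $t\mapsto\mathbb{E}_{q_t}[\|\nabla s_r\|_F^2]$, substitute the density FPE and the score-FPE, perform integration by parts (using $\partial_i q_t = q_t\,s_{r,i}$) so that $-2e^{2t}\mathbb{E}_{q_t}[\|\Delta s_r\|_2^2]$ emerges as the leading piece together with $\langle\Delta s_r,\nabla\|s_r\|^2\rangle$-type cross terms and $\|\nabla\|s_r\|^2\|^2$-type quartic terms, absorb the cross terms via a weighted Young inequality to produce the $13/8$ coefficient on $\|\Delta s_r\|_2^2$, and then bound the residual quartic term with Lemma~\ref{gen_eq_rhs_terms_bound} ($m=4$) and the $\int\Delta q_t\|\nabla s_r\|_F^2$ term with Lemma~\ref{hess_p_grad_s_norm} before solving for $\mathbb{E}_{q_t}[\|\Delta s_r\|_2^2]$. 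The paper writes the collected remainder coordinate-wise as $\sum_i(-2a_i^2-b_i^2-3a_ib_i)$ with $a_i=\Delta s_{r,i}$, $b_i=\partial_i\|s_r\|^2$ and applies $-3a_ib_i\le\tfrac{3}{8}a_i^2+6b_i^2$, which is exactly the Young-weight bookkeeping you flag as the delicate step; your outline is consistent with this and correct.
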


\noindent
\textit{Proof} The proof is just a careful utilization of the integration by parts, Fokker Plank Equation (FPE) and the reverse ODE Eq. \ref{rescaled_back_ode}. {The proof starts with manipulating the term: $\frac{d}{dt}\mathbb{E}_{q_t}\left[\|\nabla s_r(t,z)\|_F^2\right]$ 
to break it down in the target term and remaining terms from the previous two lemmas and Lemma \ref{gen_eq_rhs_terms_bound}. Then the target term term is expressed via this term and the remaining terms where we replace the bounds for the remaining terms from the mentioned lemmas.  }
It is as follows (again we use $\partial_i$ for the derivative w.r.t. $i^{th}$ coordinate and the Laplacian by $\sum_i\partial_i\partial_i$):
\begin{align*}
    & \frac{d}{dt} \int q_t(z) \|\nabla s_r(t,z)\|^2_F dz \\
    &= \int \partial_t q_t(z) \|\nabla s_r(t,z)\|^2_F dz +  \int  q_t(z)  \partial_t\sum_{i,j}\partial_j s^2_r(t,z)_i  dz \\
    &= \int e^{2t}\sum_{i=1}^d\partial_i\partial_i q_t(z) \|\nabla s_r(t,z)\|^2_F dz +  \int  q_t(z)  \left(2 \sum_{i,j=1}^d \partial_j s_r(t,z)_i \partial_j \partial_t  s_r(t,z)_i \right) dz \tag{Eq. \ref{fpe_rescaled} for first term}\\
    &\overset{(a)}{=} \int e^{2t} \sum_i \partial_i\partial_i q_t(z) \|\nabla s_r(t,z)\|^2_F dz + 2e^{2t} \int  \underbrace{q_t(z) \sum_{i,j} \partial_j s_r(t,z)_i}_{I_1} \partial_j \left(\sum_k \partial_k\partial_k s_r(t,z)_i + \sum_k \partial_k s^2_r(t,z)_i \right)  dz \\
 &\overset{(b)}{=} \int e^{2t} \sum_i \partial_i\partial_i q_t(z) \|\nabla s_r(t,z)\|^2_F dz - 2e^{2t} \sum_{i,j,k} 
 \int  \partial_j q_t(z) \partial_j s_r(t,z)_i   \left( \partial_k \partial_k s_r(t,z)_i + \partial_k s^2_r(t,z)_i \right)  dz \\ 
 & \qquad \qquad - 2e^{2t} \sum_{i,j,k} 
 \int  q_t (z)  \partial_j \partial_j s_r(t,z)_i  \left( \partial_k \partial_k s_r(t,z)_i + \partial_k s^2_r(t,z)_i \right)  dz \\
 &\overset{(c)}{=} e^{2t} \Bigg(\int  \sum_i \partial_i\partial_i q_t(z) \|\nabla s_r(t,z)\|^2_F dz - 2 \sum_{i} 
 \int  q_t(z) \sum_j s_r(t,z)_j \partial_i s_r(t,z)_j   \sum_k\left( \partial_k \partial_k s_r(t,z)_i + \partial_k s^2_r(t,z)_i \right)  dz \qquad \\ 
 &  \qquad  - 2 \sum_{i} 
 \int  q_t (z) \sum_j \partial_j \partial_j s_r(t,z)_i  \sum_k \left( \partial_k \partial_k s_r(t,z)_i + \partial_k s^2_r(t,z)_i \right)  dz \Bigg) 
\end{align*}
where $(a)$ implies use of Lemma \ref{score_fpe} for the second term, $(b)$ implies using Integration By Parts for the second term where we consider the term $I_1$ as one part and the remaining as other, $(c)$ implies using $\partial_jq_t(z) = q_t(z)s_r(t,z)_j$ and then $\partial_js_r(t,z)_i = \partial_is_r(t,z)_j $ in the second term.
Now, we consider the terms except first, 
treating $\sum_j\partial_is^2_r(t,z)_j=2\sum_j s_r(t,z)_j \partial_i s_r(t,z)_j = b_i$ and $\sum_j \partial_j \partial_j s_r(t,z)_i = a_i$, these terms can be written as:
\begin{align*}
    = \sum_i  - b_i \cdot (a_i+b_i) - 2a_i(a_i+b_i)  = \sum_i - 2a_i^2 -b_i^2 - 3a_ib_i \leq \sum_i - 2a_i^2 -b_i^2 + \frac{3}{8}a^2_i + {6}b^2_i
\end{align*}
which leads us to:
{
\begin{align*}
   e^{-2t} \frac{d}{dt} \mathbb{E}_{q_t}\left[\|\nabla s_r(t,z)\|^2_F\right] &\leq \underbrace{\int  \sum_i \partial_i\partial_i q_t(z) \|\nabla s_r(t,z)\|^2_F dz}_{T_0} - \frac{13}{8} \underbrace{\sum_{i} 
 \int  q_t(z) \left(\sum_j \partial_j \partial_j s_r(t,z)_i \right)^2   dz}_{T_1 ({\text{target term}})} \\
 & \qquad \qquad \qquad +6 \underbrace{\sum_{i} 
 \int  q_t(z) \left(\sum_j s_r(t,z)_j \partial_j s_r(t,z)_i \right)^2   dz}_{T_2} \qquad \\ 
\end{align*}
}
{Denoting the first term in the RHS as $T_0$, second or the target term as $T_1$ and third term as $T_2$, we have the following expression for our target term $T_1$ (rewriting $\sum_i\partial_i\partial_i$ as Laplacian operator):
\begin{align*}
     &\frac{13}{8}\mathbb{E}_{q_t}\left[\|\Delta s_r(t,z)\|_2^2\right] \\
     &\leq T_0+6T_2- e^{-2t}\frac{d}{dt}\mathbb{E}_{q_t}\left[\|\nabla s_r(t,z)\|_F^2\right] \\
     &= T_0+\frac{3}{2}\mathbb{E}_{q_t}\left[\|\nabla \|s_r(t,z)\|^2_2\|_2^2\right]- e^{-2t}\frac{d}{dt}\mathbb{E}_{q_t}\left[\|\nabla s_r(t,z)\|_F^2\right] \tag{rewriting $T_2$}\\
     &\overset{(a)}{\leq}  \frac{4d^2}{(e^{2t}-1)^{3}}    -  \frac{e^{-2t}de}{2(1+\frac{1}{\log d})(e^{2t}-1)}\frac{d}{dt}\mathbb{E}_{q_t}[\|s_r(t,z)\|^2]   -\frac{3e^{-2t}}{4}\frac{d}{dt}  \mathbb{E}_{q_t}\left[\|s_r(t,z)\|^4_2\right]- e^{-2t}\frac{d}{dt}\mathbb{E}_{q_t}\left[\|\nabla s_r(t,z)\|_F^2\right]  \\
\end{align*}
where in step $(a)$ we have just used Lemma \ref{hess_p_grad_s_norm} for $T_0$ term and the  observation that the third term (obtained by rewriting $T_2$) is just the $X'_q$ in Lemma $\ref{gen_eq_rhs_terms_bound}$ for $q=4$ which we bound using the Lemma \ref{gen_eq_rhs_terms_bound}. Now since $d> 1$, we have approximated the value $\frac{4e}{13(1+\frac{1}{\log d})} < 1$ (since $\frac{d}{dt}\mathbb{E}_{q_t}[\|s_r(t,z)\|^2]$ is positive, can be seen from Lemma \ref{gen_eq_rhs_terms_bound}) leading to the final bound. 
}

\subsubsection{Bounding the discretization error for each interval}
\noindent
 Utilising the lemmas discussed above for bounding the spatial derivaitve terms in Eq. \ref{eq_overall_to_spatial}, we now provide a lemma which using these bounds provides a final aggregated bound for $\mathbb{E}_{q_t}[\|s_r'(t,z)\|_2^2]$.
\begin{lemma}
\label{overall_derivative_bound}
    For the rescaled function, we have the following bound for $\mathbb{E}_{q_t}[\|s_r'(t,z)\|_2^2]$:
    \begin{align*}
        \mathbb{E}_{q_t}[\|s_r'(t,z)\|_2^2] \leq { {\frac{40C_d d^2e^{4t}}{13(e^{2t}-1)^{3}}    -  \frac{5e^{2t}de}{4(e^{2t}-1)} \frac{d}{dt}\mathbb{E}_{q_t}[\|s_r(t,z)\|^2]  - \frac{10}{13} e^{2t}\frac{d}{dt}\mathbb{E}_{q_t}\left[\|\nabla s_r(t,z)\|_F^2\right] - e^{2t}\frac{d}{dt}\mathbb{E}_{q_t}[\|s_r(t,z)\|^4]} } 
    \end{align*}
    where $C_d$ is taken from Lemma \ref{hess_p_grad_s_norm}.
\end{lemma}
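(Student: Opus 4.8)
The plan is to assemble Lemma \ref{overall_to_spatial} with the per-term bounds already proven. Start from the exact identity of Lemma \ref{overall_to_spatial},
\[
\mathbb{E}_{q_t}[\|s_r'(t,z)\|_2^2]
= e^{4t}\,\mathbb{E}_{q_t}\!\left[\|\Delta s_r(t,z)\|_2^2
  + \|\nabla s_r(t,z)^\top s_r(t,z)\|_2^2
  + 2\big(\Delta s_r(t,z)\big)^\top\!\big(\nabla s_r(t,z)^\top s_r(t,z)\big)\right].
\]
The cross term is indefinite, so the first move is to kill it with Young's inequality $2a^\top b \le \alpha\|a\|_2^2 + \alpha^{-1}\|b\|_2^2$ applied pointwise with $a=\Delta s_r(t,z)$, $b=\nabla s_r(t,z)^\top s_r(t,z)$, taking $\alpha = 4$ (this choice is what makes the leading $d^2$ constant come out to the one stated). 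This reduces the problem to bounding $(1+\alpha)e^{4t}\,\mathbb{E}_{q_t}[\|\Delta s_r(t,z)\|_2^2] + (1+\alpha^{-1})e^{4t}\,\mathbb{E}_{q_t}[\|\nabla s_r(t,z)^\top s_r(t,z)\|_2^2]$, i.e. a fixed multiple of the Laplacian-squared term plus a fixed multiple of the $\nabla s_r^\top s_r$ term.

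Next I would substitute the two ingredients. For the Laplacian term, invoke Lemma \ref{Laplacian_score} verbatim, which already writes $\mathbb{E}_{q_t}[\|\Delta s_r(t,z)\|_2^2]$ as $\tfrac{8C_d d^2}{13(e^{2t}-1)^3}$ plus negative multiples of $e^{-2t}\tfrac{d}{dt}\mathbb{E}_{q_t}[\|s_r\|^2]$, $e^{-2t}\tfrac{d}{dt}\mathbb{E}_{q_t}[\|\nabla s_r\|_F^2]$ and $e^{-2t}\tfrac{d}{dt}\mathbb{E}_{q_t}[\|s_r\|^4]$; multiplying by $(1+\alpha)e^{4t}$ turns these into exactly the $e^{4t}/(e^{2t}-1)^3$ and $e^{2t}$-weighted derivative terms appearing in the target. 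For the $\nabla s_r^\top s_r$ term, use the algebraic identity $\nabla\|s_r\|_2^2 = 2(\nabla s_r)^\top s_r$, so that $\mathbb{E}_{q_t}[\|\nabla s_r^\top s_r\|_2^2] = \tfrac14\mathbb{E}_{q_t}\big[\|\nabla\|s_r\|_2^2\|_2^2\big] = \tfrac14 X'_4$ in the notation of Lemma \ref{gen_eq_rhs_terms_bound}; that lemma with $m=4$ bounds it by $-\tfrac18 e^{-2t}\tfrac{d}{dt}\mathbb{E}_{q_t}[\|s_r\|^4]$, which after multiplication by $(1+\alpha^{-1})e^{4t}$ merges with the $\tfrac{d}{dt}\mathbb{E}_{q_t}[\|s_r\|^4]$ contribution coming from the Laplacian bound.

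The last step is to collect coefficients. Here it matters that (Lemma \ref{spatial_time_relation}, Lemma \ref{gen_eq_rhs_terms_bound}) the quantities $\tfrac{d}{dt}\mathbb{E}_{q_t}[\|s_r\|^2]$, $\tfrac{d}{dt}\mathbb{E}_{q_t}[\|s_r\|^4]$ and $\tfrac{d}{dt}\mathbb{E}_{q_t}[\|\nabla s_r\|_F^2]$ all enter with a minus sign and are themselves nonpositive, so each such summand is nonnegative; hence any upward loosening of the numerical constants — absorbing the $e/4$-type and $(1+1/\log d)^{-1}$ slack hidden in $C_d$ and in Lemma \ref{hess_p_grad_s_norm}, and merging the two $\tfrac{d}{dt}\mathbb{E}[\|s_r\|^4]$ coefficients — only weakens the inequality, which is precisely what lets the ungainly rational constants consolidate into the clean ones in the statement. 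There is no new analytic content; the only real obstacle is the bookkeeping, namely verifying that after the multiplications by $(1+\alpha)e^{4t}$ and $(1+\alpha^{-1})e^{4t}$ the $d^2$, the $\tfrac{d}{dt}\mathbb{E}[\|s_r\|^2]$, the $\tfrac{d}{dt}\mathbb{E}[\|\nabla s_r\|_F^2]$ and the $\tfrac{d}{dt}\mathbb{E}[\|s_r\|^4]$ coefficients each land at or below the values claimed in Lemma \ref{overall_derivative_bound}.
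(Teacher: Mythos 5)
Your roadmap is the same as the paper's: start from Lemma \ref{overall_to_spatial}, kill the cross term with Young's inequality, feed $\mathbb{E}_{q_t}[\|\Delta s_r\|^2]$ to Lemma \ref{Laplacian_score} and $\mathbb{E}_{q_t}[\|\nabla s_r^\top s_r\|^2]=\tfrac14\mathbb{E}_{q_t}[\|\nabla\|s_r\|^2\|^2]$ to Lemma \ref{gen_eq_rhs_terms_bound} with $m=4$, then collect. The problem is the choice $\alpha=4$ with $a=\Delta s_r$: this puts the coefficient $1+\alpha=5$ on the Laplacian term and $1+\alpha^{-1}=\tfrac54$ on the other. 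The paper does the opposite (its annotation $2a\cdot b\le\tfrac{\|a\|^2}{4}+4\|b\|^2$ gives $\tfrac54$ on $\|\Delta s_r\|^2$ and $5$ on $\|\nabla s_r^\top s_r\|^2$), and that direction is what actually makes the coefficients land below the stated ones.

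To see why your choice breaks the final step, trace the constants. With the coefficient $5$ on the Laplacian term, Lemma \ref{Laplacian_score} gives, after multiplying by $5e^{4t}$, a coefficient of $5$ (in units of $\tfrac{e^{2t}d}{e^{2t}-1}$) on $\tfrac{d}{dt}\mathbb{E}_{q_t}[\|s_r\|^2]$, a coefficient of $\tfrac{40}{13}$ on $e^{2t}\tfrac{d}{dt}\mathbb{E}_{q_t}[\|\nabla s_r\|_F^2]$, and, after adding the $X_4'$ contribution from the other term, roughly $\tfrac{30}{13}+\tfrac{5}{32}\approx 2.46$ on $e^{2t}\tfrac{d}{dt}\mathbb{E}_{q_t}[\|s_r\|^4]$. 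The lemma statement has $\tfrac{5e}{4}\approx 3.4$, $\tfrac{10}{13}\approx 0.77$, and $1$, respectively. Since these derivatives are nonpositive (Lemma \ref{spatial_time_relation}, Lemma \ref{gen_eq_rhs_terms_bound}), the summands $-c\cdot\tfrac{d}{dt}(\cdots)$ are nonnegative, so \emph{larger} $c$ means a \emph{larger} right-hand side. Your coefficients $5$, $\tfrac{40}{13}$, $\approx 2.46$ all exceed the target coefficients, so the bound you derive is strictly weaker (larger) than the lemma's right-hand side and does not imply it. Your closing remark that ``any upward loosening only weakens the inequality, which lets the constants consolidate'' goes the wrong way: upward loosening from an already-too-large bound only gets larger; to reach the stated coefficients you would need to tighten, which is not justified. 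Swapping to $\alpha=1/4$ (so the Laplacian gets $\tfrac54$) yields coefficients $\tfrac54$, $\tfrac{10}{13}$, $\approx\tfrac{185}{208}$, each at or below the stated $\tfrac{5e}{4}$, $\tfrac{10}{13}$, $1$, and the argument then closes exactly as in the paper.
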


\begin{proof}
    \begin{align*}
    &\mathbb{E}_{q_t}[\|s_r'(t,z)\|^2] \\
     &= {\mathbb{E}_{q_t}\left[{ e^{4t}\| \Delta s_r(t,z) \|^2_2} + e^{4t}{\left\|\nabla s_r(t,z)^\top s_r(t,z)\right\|_2^2} + 2e^{4t}(\Delta s_r(t,z))^\top\left(\nabla s_r(t,z)^\top s_r(t,z)\right)\right]} \qquad \text{(Lemma \ref{overall_to_spatial})} \\
     &\leq {\mathbb{E}_{q_t}\left[{ \frac{5}{4}e^{4t}\| \Delta s_r(t,z) \|^2_2} + 5e^{4t}{\left\|\nabla s_r(t,z)^\top s_r(t,z)\right\|_2^2}\right]  \tag{$2a\cdot b \leq \frac{||a||^2}{4} + 4\|b\|^2$ for $a,b\in \mathbb{R}^d$ }}\\
     &= { \frac{5}{4}e^{4t}{\mathbb{E}_{q_t}\left[\| \Delta s_r(t,z) \|^2_2\right] + \frac{5}{4}e^{4t}\mathbb{E}_{q_t}\left[{\|\nabla \|s_r(t,z)\|^2_2\|^2_2}\right]} } \\
     &\leq { \frac{5}{4}e^{4t}{\mathbb{E}_{q_t}\left[\| \Delta s_r(t,z) \|^2_2\right] - \frac{5}{16}e^{2t}\frac{d}{dt}\mathbb{E}_{q_t}[\|s_r(t,z)\|^4]} } \tag{using $X'_m$ ($m=4$) bound from Lemma \ref{gen_eq_rhs_terms_bound}}\\
     &\leq { {\frac{40 C_d d^2e^{4t}}{13(e^{2t}-1)^{3}}    -  \frac{5e^{2t}d}{4(e^{2t}-1)} \frac{d}{dt}\mathbb{E}_{q_t}[\|s_r(t,z)\|^2]  - \frac{10}{13} e^{2t}\frac{d}{dt}\mathbb{E}_{q_t}\left[\|\nabla s_r(t,z)\|_F^2\right] - \frac{185}{208}e^{2t}\frac{d}{dt}\mathbb{E}_{q_t}[\|s_r(t,z)\|^4]} } 
\end{align*}
{where the  last inequality uses Lemma \ref{Laplacian_score} for the first term.} Now since $\frac{d}{dt}\mathbb{E}_{q_t}[\|s_r(t,z)\|^4]$ will be negative from Lemma \ref{gen_eq_rhs_terms_bound}, then here also we can use $\frac{185}{208}<1$ leading to the final bound.
\end{proof}

\noindent
Now, we have the following Lemma for bounding the discretization error $z(t)$: $\mathbb{E}\left[\|z_{k-0.5}-\tilde{z}_{k-0.5}\|_2^2\right]$. 
\begin{lemma}
\label{final_discretization_error_bound}
    The discretization error for each interval $\mathbb{E}\left[\|z_{k-0.5}-\tilde{z}_{k-0.5}\|_2^2\right]$ discussed Lemma \ref{overall_w_2_error_decomp} can be bounded as (where $h'_k=h_k+h_{k-1}$ and recall $t_{k-2}=t_k-h_{k}-h_{k-1}$):
    \begin{align*}
       e^{-2t_{k-2}} \mathbb{E}\left[\|z_{k-0.5}-\tilde{z}_{k-0.5}\|_2^2\right] &\leq \frac{\left((2C_d+6)d^2 +16d\right)(h'_k)^3e^{h'_k}(e^{h'_k}-1)}{(1-e^{-2t_{k-2}})^3} \\
         &\quad -\frac{(h'_k)^3}{2}e^{h'_k}\left[e^{4t}\left(\frac{10}{13}\mathbb{E}_{q_t}\left[\|\nabla s_r(t,z)\|_F^2\right] +\mathbb{E}_{q_t}[\|s_r(t,z)\|^4] \right) \right]^{t_k}_{t_{k-2}}\\
        &\qquad \qquad - \frac{5(h'_k)^3e^{h'_k}d}{8(1-e^{-2t_{k-2}})} \left[ {e^{2t}}\mathbb{E}_{q_t}[\|s_r(t,z)\|^2] \right]^{t_k}_{t_{k-2}}
    \end{align*}
    \end{lemma}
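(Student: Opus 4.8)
The plan is to combine the two estimates already in hand --- the integral inequality of Lemma~\ref{discretization_error_rescaled}, namely $\mathbb{E}[\|z_{k-0.5}-\tilde z_{k-0.5}\|_2^2]\le \tfrac{(h'_k)^3}{2}\int_{t_{k-2}}^{t_k}e^{4t}\,\mathbb{E}_{q_t}[\|s'_r(t,z)\|_2^2]\,dt$, and the pointwise derivative bound of Lemma~\ref{overall_derivative_bound} --- and then integrate term by term. First I would multiply Lemma~\ref{discretization_error_rescaled} by $e^{-2t_{k-2}}$ and insert Lemma~\ref{overall_derivative_bound}, so that the right-hand side becomes $\tfrac{(h'_k)^3}{2}e^{-2t_{k-2}}$ times the integral over $[t_{k-2},t_k]$ of $e^{4t}$ against the four terms $\tfrac{40C_d d^2 e^{4t}}{13(e^{2t}-1)^3}$, $-\tfrac{5de\,e^{2t}}{4(e^{2t}-1)}\tfrac{d}{dt}\mathbb{E}_{q_t}[\|s_r\|^2]$, $-\tfrac{10}{13}e^{2t}\tfrac{d}{dt}\mathbb{E}_{q_t}[\|\nabla s_r\|_F^2]$, and $-e^{2t}\tfrac{d}{dt}\mathbb{E}_{q_t}[\|s_r\|^4]$. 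Each of these four integrals is then bounded separately, keeping the common prefactor $\tfrac{(h'_k)^3}{2}e^{-2t_{k-2}}$.

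For the first (``main'') term the integrand is a constant times $\tfrac{e^{8t}}{(e^{2t}-1)^3}$: since $e^{2t}-1$ is increasing I pull $\tfrac1{(e^{2t}-1)^3}$ out at $t=t_{k-2}$, compute $\int_{t_{k-2}}^{t_k}e^{8t}\,dt=\tfrac18 e^{8t_{k-2}}(e^{8h'_k}-1)$, use the identity $e^{8t_{k-2}}/(e^{2t_{k-2}}-1)^3=e^{2t_{k-2}}/(1-e^{-2t_{k-2}})^3$ to cancel the prefactor $e^{-2t_{k-2}}$, and bound $e^{8h'_k}-1\lesssim e^{h'_k}(e^{h'_k}-1)$ on the bounded step-size regime $h_k=c\min\{1,t_k\}$ of Algorithm~\ref{alg:1}; this yields the $2C_d d^2$ piece of the leading coefficient and the $(h'_k)^3e^{h'_k}(e^{h'_k}-1)/(1-e^{-2t_{k-2}})^3$ shape. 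For each of the three time-derivative terms I write it as $-\int_{t_{k-2}}^{t_k}w(t)\tfrac{d}{dt}f(t)\,dt$ with $w=e^{6t}$ (for $f=\mathbb{E}_{q_t}[\|\nabla s_r\|_F^2]$ and $f=\mathbb{E}_{q_t}[\|s_r\|^4]$) or $w=\tfrac{e^{6t}}{e^{2t}-1}$ (for $f=\mathbb{E}_{q_t}[\|s_r\|^2]$), and integrate by parts, obtaining $-[wf]_{t_{k-2}}^{t_k}+\int_{t_{k-2}}^{t_k} w'f$. After the $e^{-2t_{k-2}}$ factor the boundary term $-e^{-2t_{k-2}}[wf]_{t_{k-2}}^{t_k}$ collapses --- up to a factor $e^{O(h'_k)}$, which is a harmless constant in this regime and which I absorb into the $e^{h'_k}$ of the statement --- onto the evaluated-at-endpoints quantities $[e^{4t}(\tfrac{10}{13}\mathbb{E}_{q_t}[\|\nabla s_r\|_F^2]+\mathbb{E}_{q_t}[\|s_r\|^4])]_{t_{k-2}}^{t_k}$ and $[e^{2t}\mathbb{E}_{q_t}[\|s_r\|^2]]_{t_{k-2}}^{t_k}$ with the coefficients displayed in the lemma; for the $\mathbb{E}_{q_t}[\|s_r\|^2]$ term one first uses $\tfrac1{e^{2t}-1}\le\tfrac1{e^{2t_{k-2}}-1}$ (this is where the single power of $1-e^{-2t_{k-2}}$ in the denominator of the third term comes from) together with the sign $\tfrac{d}{dt}\mathbb{E}_{q_t}[\|s_r\|^2]\le 0$ from Lemma~\ref{spatial_time_relation}. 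The residual integrals $\int w'f$, all with $f\ge0$, are controlled by the naive moment bounds of Lemma~\ref{score_bound} ($\mathbb{E}_{q_t}[\|\nabla s_r\|_F^2]\le\tfrac{2d^2+6d}{(e^{2t}-1)^2}$, $\mathbb{E}_{q_t}[\|s_r\|^4]\le\tfrac{(d+4)^2}{(e^{2t}-1)^2}$, $\mathbb{E}_{q_t}[\|s_r\|^2]\le\tfrac d{e^{2t}-1}$); pulling the surviving $\tfrac1{(e^{2t}-1)^2}$ out at $t_{k-2}$, integrating the remaining exponential, and using $\tfrac1{(1-e^{-2t_{k-2}})^2}\le\tfrac1{(1-e^{-2t_{k-2}})^3}$, they account for the extra $6d^2+16d$ in the leading coefficient. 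Summing the four contributions gives the stated inequality.

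The hard part is not any single estimate but the exponential-in-$t$ bookkeeping: carrying the many $e^{\alpha t}$ weights through the integration-by-parts steps so that the boundary terms collapse onto precisely the brackets $[e^{4t}(\cdots)]_{t_{k-2}}^{t_k}$ and $[e^{2t}\mathbb{E}_{q_t}[\|s_r\|^2]]_{t_{k-2}}^{t_k}$ and the denominators come out as the stated powers of $1-e^{-2t_{k-2}}$, rather than a looser (though still usable) form. The one genuinely structural subtlety is that $\mathbb{E}_{q_t}[\|\nabla s_r\|_F^2]$ is not manifestly monotone in $t$, so --- unlike $\mathbb{E}_{q_t}[\|s_r\|^2]$ and $\mathbb{E}_{q_t}[\|s_r\|^4]$, which are decreasing by Lemmas~\ref{spatial_time_relation} and~\ref{generalized_lemma} --- its contribution cannot be telescoped directly and must go through integration by parts plus the crude moment bound, which is exactly why the final coefficient carries the non-bracketed $d^2$ and $d$ pieces instead of folding everything into the evaluated-at-endpoints form.
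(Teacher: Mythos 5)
Your proposal is correct and follows essentially the same route as the paper's own proof: substitute the pointwise derivative bound of Lemma~\ref{overall_derivative_bound} into the integral from Lemma~\ref{discretization_error_rescaled}, treat the constant $d^2/(e^{2t}-1)^3$ term by pulling the monotone denominator out at $t_{k-2}$, and handle the three $\tfrac{d}{dt}$-terms by integration by parts so that the boundary values produce the evaluated-at-endpoints brackets and the residual integrals are closed via the naive moment bounds of Lemma~\ref{score_bound}. The observation about $\mathbb{E}_{q_t}[\|\nabla s_r\|_F^2]$ lacking sign control, forcing it into a bracket rather than a telescoped/dropped term, matches the structural reason the lemma carries both a leading $d^2$ piece and the two brackets.

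Two bookkeeping differences are worth noting, neither a genuine gap. First, for the main term you integrate $e^{8t}$ directly and invoke $e^{8h'_k}-1\lesssim e^{h'_k}(e^{h'_k}-1)$, while the paper uses $\int e^{6t}/(e^{2t}-1)^3\,dt\le \tfrac{e^{4t_{k-2}}}{(e^{2t_{k-2}}-1)^2}\cdot\tfrac12\log\tfrac{e^{2t_k}-1}{e^{2t_{k-2}}-1}$ and $\log(1+x)\le x$; both give the stated shape but yours carries a noticeably larger absolute constant (both tacitly require $h'_k$ bounded, so this is a constant-tracking difference only and is harmless since Theorem~\ref{main_theorem_non_smooth} is stated with $\lesssim$). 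Second, there is a small internal inconsistency in how you set up the IBP for the $\mathbb{E}_{q_t}[\|s_r\|^2]$ term: you first list $w=\tfrac{e^{6t}}{e^{2t}-1}$, whose derivative changes sign for small $t$, but then correctly state that one first extracts $\tfrac1{e^{2t}-1}\le\tfrac1{e^{2t_{k-2}}-1}$ (using $\tfrac{d}{dt}\mathbb{E}_{q_t}[\|s_r\|^2]\le 0$) and only then integrates by parts against $w=e^{6t}$, whose derivative is positive. The second reading is the one that works and matches the paper (which factors the coefficient as $\tfrac{e^{2t}}{e^{2t}-1}\cdot e^{2t}$ and maximizes the first factor at $t_{k-2}$ before IBP).
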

    \begin{proof}
    Using Lemma \ref{discretization_error_rescaled} and Lemma \ref{overall_derivative_bound}, it can be bounded as
    \begin{align*}
        &e^{-2t_{k-2}}\mathbb{E}\left[\|z_{k-0.5}-\tilde{z}_{k-0.5}\|_2^2\right] \\
        &\leq e^{-2t_{k-2}} \frac{1}{2}(h_k+h_{k-1})^3  \int_{t_{k-2}}^{t_k}e^{4t}\mathbb{E}\left[\left\|s'_r(t,z(t)) \right\|_2^2\right]dt \\
        &{\leq \frac{(h_k+h_{k-1})^3}{2}e^{h_k+h_{k-1}} \int_{t_{k-2}}^{t_k}e^{2t}\mathbb{E}_{q_t}\left[\left\|s_r(t,z(t)) \right\|_2^2\right]dt}  \\
        &\leq \frac{(h_k+h_{k-1})^3}{2}e^{h_k+h_{k-1}} \\
        &\quad \int_{t_{k-2}}^{t_k}\left({ {\frac{40C_dd^2e^{6t}}{13(e^{2t}-1)^{3}}    -  \frac{5e^{4t}d}{4(e^{2t}-1)} \frac{d}{dt}\mathbb{E}_{q_t}[\|s_r(t,z)\|^2]  - e^{4t}\left(\frac{10}{13}\frac{d}{dt}\mathbb{E}_{q_t}\left[\|\nabla s_r(t,z)\|_F^2\right]+\frac{d}{dt}\mathbb{E}_{q_t}[\|s_r(t,z)\|^4]\right)} } \right)  dt  \\
         & \overset{(c)}{\leq}   \frac{20C_dd^2(h_k+h_{k-1})^3e^{h_k+h_{k-1}}(e^{h_k+h_{k-1}}-1)}{13(1-e^{-2t_{k-2}})^3} \\
         &\qquad -\frac{5(h_k+h_{k-1})^3e^{h_k+h_{k-1}}d}{8(1-e^{-2t_{k-2}})}
         \left(\left[ {e^{2t}}\mathbb{E}_{q_t}[\|s_r(t,z)\|^2] \right]^{t_k}_{t_{k-2}}-\int_{t_{k-2}}^{t_k} {2e^{2t}} \mathbb{E}_{q_t}[\|s_r(t,z)\|^2] dt\right)  \\
         & \qquad\qquad-\frac{(h_k+h_{k-1})^3}{2}e^{h_k+h_{k-1}}\left[e^{4t}\left(\frac{10}{13}\mathbb{E}_{q_t}\left[\|\nabla s_r(t,z)\|_F^2\right] +\mathbb{E}_{q_t}[\|s_r(t,z)\|^4] \right) \right]^{t_k}_{t_{k-2}} \\
         & \quad\quad +(h_k+h_{k-1})^3e^{h_k+h_{k-1}}\int_{t_{k-2}}^{t_k} 2e^{4t}\left(\frac{10}{13}\mathbb{E}_{q_t}\left[\|\nabla s_r(t,z)\|_F^2\right] + \mathbb{E}_{q_t}[\|s_r(t,z)\|^4\right)   dt  \\
         & \overset{(d)}{\leq}  \frac{20C_dd^2(h_k+h_{k-1})^3(e^{h_k+h_{k-1}}-1)}{13(1-e^{-2t_{k-2}})^3} \\
         &\quad + {(h_k+h_{k-1})^3}e^{h_k+h_{k-1}}
         \left(\frac{10d}{8(1-e^{-2t_{k-2}})}\int_{t_{k-2}}^{t_k} {e^{2t}} \frac{d}{e^{2t}-1}dt  + \int_{t_{k-2}}^{t_k} e^{4t}\left(\frac{20(2d^2+6d)}{13(e^{2t}-1)^2} + \frac{2d^2+4d}{(e^{2t}-1)^2}\right)   dt \right)  \\
         & \quad-\frac{(h_k+h_{k-1})^3}{2}e^{h_k+h_{k-1}}\left(\left[e^{4t}\left(\frac{10}{13}\mathbb{E}_{q_t}\left[\|\nabla s_r(t,z)\|_F^2\right] +\mathbb{E}_{q_t}[\|s_r(t,z)\|^4] \right) \right]^{t_k}_{t_{k-2}}   \right) 
         \\& \qquad - \frac{5(h_k+h_{k-1})^3e^{h_k+h_{k-1}}d}{8(1-e^{-2t_{k-2}})} \left[ {e^{2t}}\mathbb{E}_{q_t}[\|s_r(t,z)\|^2] \right]^{t_k}_{t_{k-2}} \\
         &\leq \frac{\left((2C_d+6)d^2 +16d\right)(h_k+h_{k-1})^3e^{h_k+h_{k-1}}(e^{h_k+h_{k-1}}-1)}{(1-e^{-2t_{k-2}})^3} \\
         &\quad -\frac{(h_k+h_{k-1})^3}{2}e^{h_k+h_{k-1}}\left[e^{4t}\left(\frac{10}{13}\mathbb{E}_{q_t}\left[\|\nabla s_r(t,z)\|_F^2\right] +\mathbb{E}_{q_t}[\|s_r(t,z)\|^4] \right) \right]^{t_k}_{t_{k-2}}\\
        &\qquad \qquad - \frac{5(h_k+h_{k-1})^3e^{h_k+h_{k-1}}d}{8(1-e^{-2t_{k-2}})} \left[ {e^{2t}}\mathbb{E}_{q_t}[\|s_r(t,z)\|^2] \right]^{t_k}_{t_{k-2}}
    \end{align*}
    where $(c)$  uses $\int\frac{e^{6t}}{(e^{2t}-1)^3}dt \leq \frac{e^{4t_{k-2}}}{e^{4t_{k-2}}-1}\int^{t_k}_{t_{k-2}}\frac{e^{2t}}{e^{2t}-1}dt$, $ \int^{t_k}_{t_{k-2}}\frac{e^{2t}}{e^{2t}-1}dt = \frac{1}{2}\log\left(\frac{e^{2t_k}-1}{e^{2t_{k-2}}-1}\right)$, $\log(1+x)\leq x$ for the first term  and implies applying the Integration By Parts to the second and third terms, where in the second term, we have considered $e^{2t}\frac{d}{dt}\mathbb{E}_{q_t}[\|s_r(t,z)\|^2]$ as one term and use the max value of the remaining term since we know from Lemma \ref{spatial_time_relation} that $e^{2t}\frac{d}{dt}\mathbb{E}_{q_t}[\|s_r(t,z)\|^2] \leq 0$, in step $(d)$we recollect the integral terms and since they have a positive contribution, just replace the term inside the integral with the upper bound from Lemma \ref{score_bound}. In the last step, similar to step $(c)$, we have used $ \int^{t_k}_{t_{k-2}}\frac{e^{2t}}{e^{2t}-1}dt = \frac{1}{2}\log\left(\frac{e^{2t_k}-1}{e^{2t_{k-2}}-1}\right)$, for $\int^{t_k}_{t_{k-2}}\frac{e^{4t}}{(e^{2t}-1)^2}dt \leq \frac{e^{2t_{k-2}}}{e^{2t_{k-2}}-1}\int^{t_k}_{t_{k-2}}\frac{e^{2t}}{e^{2t}-1}dt$ and finally $\log(1+x)\leq x$ . 

\end{proof}

\subsection{Proving Theorem \ref{main_theorem_non_smooth}}
\label{final_theorem_proof}

\noindent
We first discuss a Lemma based on standard calculus which would be utilized in the Theorem proof.

\begin{lemma}\label{utility_lemma}
Fix $c\in(0,\tfrac12)$ and set $a:=1-c\in(\tfrac12,1)$ and $b:=2-c\in(\tfrac32,2)$. For $x\in(0,1)$ define
\[
f(x):=\frac{(2-c)^3\,x^3\,e^{bx}}{e^{ax}-1}, 
\qquad
g(x):=\frac{(2-c)^3\,x^3\,e^{bx}}{(e^{ax}-1)\,\bigl(1-e^{-\frac{a^2}{c}x}\bigr)}.
\]
Then $f$ and $g$ are increasing on $(0,1)$ and there exist absolute constants $C_f,C_g>0$ (independent of $c$ and $x$) such that for all $x\in(0,1)$,
\[
0\le f(x)-f\!\big((1-c)^2x\big)\le C_f\,c\,x^2,
\qquad
0\le g(x)-g\!\big((1-c)^2x\big)\le C_g\,\frac{c}{1-e^{-\frac{(1-c)^2}{c}x}}\,x^2.
\]
\end{lemma}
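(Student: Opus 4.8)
The plan is to pass to logarithmic derivatives, which linearizes the structure of $f$ and $g$ and makes the common prefactor $(2-c)^3$ harmless. Writing $a=1-c$ and using $b=2-c=1+a$, one has $\log f(x)=3\log(2-c)+3\log x+(1+a)x-\log(e^{ax}-1)$, so that
\[
\frac{f'(x)}{f(x)}=\frac{3}{x}+1-\frac{a}{e^{ax}-1}.
\]
The single elementary fact used throughout is $\gamma/(e^{\gamma x}-1)\le 1/x$ for all $\gamma,x>0$, which is just $e^{\gamma x}\ge 1+\gamma x$. Applying it with $\gamma=a$ gives $f'/f\ge 3/x+1-1/x>0$, so $f$ is increasing on $(0,1)$. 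Since $g(x)=f(x)/(1-e^{-\beta x})$ with $\beta:=a^2/c>0$ and $\tfrac{d}{dx}\log(1-e^{-\beta x})=\beta/(e^{\beta x}-1)$, the same inequality with $\gamma=\beta$ gives $g'/g=f'/f-\beta/(e^{\beta x}-1)\ge 3/x+1-2/x>0$, so $g$ is increasing as well. Monotonicity yields the two lower bounds at once, since $(1-c)^2x<x$.

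For the upper bound on $f$, I would write $f(x)-f((1-c)^2x)=\int_{(1-c)^2x}^{x} f'(u)\,du$, note that the interval has length $\bigl(1-(1-c)^2\bigr)x=c(2-c)x\le 2cx$, and bound $f'$ pointwise on $(0,1)$. A crude estimate suffices: $(2-c)^3\le 8$, the bound $e^{au}-1\ge au$ together with $a>\tfrac12$ gives $u^3/(e^{au}-1)\le 2u^2$, and $e^{(2-c)u}\le e^2$, so $f(u)\le 16e^2 u^2$; combined with $f'/f\le 3/u+1\le 4/u$ (valid since $u<1$) this yields $f'(u)\le 64e^2 u\le 64e^2 x$ on the integration range. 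Integrating over a length-$\le 2cx$ interval gives $f(x)-f((1-c)^2x)\le 128e^2\,c\,x^2$, so $C_f=128e^2$ works.

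For $g$ the same scheme applies, with one extra ingredient to carry the weight $1/(1-e^{-\beta x})$. Pointwise, $g(u)\le 16e^2 u^2/(1-e^{-\beta u})$ and $g'/g\le 4/u$, so $g'(u)\le 64e^2 u/(1-e^{-\beta u})$. Now I would use that $r\mapsto(1-e^{-r})/r$ is decreasing on $(0,\infty)$ (after differentiation its numerator is $e^{-r}(r+1)-1$, which vanishes at $0$ and has negative derivative), which for $0<u\le x$ gives $(1-e^{-\beta x})/(1-e^{-\beta u})\le x/u$, i.e. $1/(1-e^{-\beta u})\le (x/u)/(1-e^{-\beta x})$. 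Substituting, the spurious $u$ cancels the $u$ already present, leaving $g'(u)\le 64e^2 x/(1-e^{-\beta x})$ uniformly on $[(1-c)^2x,x]$; integrating over a length-$\le 2cx$ interval gives $g(x)-g((1-c)^2x)\le 128e^2\,c\,x^2/(1-e^{-\beta x})$, so $C_g=128e^2$.

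The only genuinely delicate point is this last step: the naive bound on $g'$ comes with a factor $1/(1-e^{-\beta u})$ that must be converted into the target weight $1/(1-e^{-\beta x})$ without smuggling in a hidden $c$-dependence, which matters because $\beta=(1-c)^2/c\to\infty$ as $c\to0$. The monotonicity of $(1-e^{-r})/r$ does exactly this and keeps the conversion factor equal to $x/u$, which is then absorbed for free against the $u$ in $g'(u)$. Everything else is a routine string of $e^x\ge 1+x$ estimates, and the constants $C_f,C_g$ are of course far from sharp but irrelevant for the application.
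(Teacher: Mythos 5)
Your proof is correct and follows essentially the same route as the paper: logarithmic differentiation, the elementary bound $e^t\ge 1+t$ (in the form $\gamma/(e^{\gamma x}-1)\le 1/x$), a pointwise estimate $f(u)\le 16e^2u^2$, and a mean-value/integration argument over an interval of length $\le 2cx$. Your rewriting $f'/f=\tfrac{3}{x}+1-\tfrac{a}{e^{ax}-1}$ is an algebraic simplification of the paper's $\tfrac{3}{x}+b-\tfrac{ae^{ax}}{e^{ax}-1}$, using $\tfrac{ae^{ax}}{e^{ax}-1}=a+\tfrac{a}{e^{ax}-1}$ and $b=1+a$; the rest of the $f$-argument is identical up to the value of the absolute constant.

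The one place where you genuinely improve on the paper is the last step for $g$. To pass from $g'(\eta)\lesssim \eta/(1-e^{-\beta\eta})$ to a bound with the target weight $1/(1-e^{-\beta x})$, one needs $\eta/(1-e^{-\beta\eta})\le x/(1-e^{-\beta x})$ for $\eta\le x$. The paper justifies this only with ``$t\mapsto 1-e^{-t}$ is increasing and $\eta\le x$,'' which by itself pushes the denominator in the wrong direction and does not settle the comparison (smaller numerator \emph{and} smaller denominator). What is actually required is the monotonicity of the ratio $t\mapsto t/(1-e^{-\beta t})$, equivalently that $r\mapsto(1-e^{-r})/r$ is decreasing — exactly the fact you prove by differentiating and checking that $(r+1)e^{-r}-1$ vanishes at $0$ with negative derivative. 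Your version also keeps the conversion factor $x/u$ explicit, which cancels the spare $u$ in $g'(u)$ and makes it transparent that no hidden $c$-dependence enters even as $\beta=(1-c)^2/c\to\infty$. The constants you obtain ($128e^2$ vs.\ the paper's $160e^2$) are immaterial; both are absolute.
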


\begin{proof}
Using $\frac{d}{dx}\log(e^{\alpha x}-1)=\frac{\alpha e^{\alpha x}}{e^{\alpha x}-1}$,
\[
(\log f)'(x)=\frac{3}{x}+b-\frac{a e^{ax}}{e^{ax}-1}, 
\qquad
(\log g)'(x)=\frac{3}{x}+b-\frac{a e^{ax}}{e^{ax}-1}-\frac{\frac{a^2}{c}}{e^{\frac{a^2}{c}x}-1}.
\]
For $t>0$ we have the elementary bound $\frac{1}{e^{t}-1}\le \frac{1}{t}$; hence
\[
\frac{a e^{ax}}{e^{ax}-1}\le a+\frac1x\le 1+\frac1x, 
\qquad
\frac{\frac{a^2}{c}}{e^{\frac{a^2}{c}x}-1}\le \frac{1}{x}.
\]
Therefore, for $x\in(0,1)$,
\[
(\log f)'(x)\;\ge\;\frac{3}{x}+b-\Bigl(1+\frac1x\Bigr)
= \frac{2}{x}+(1-c)\;\ge\;\frac{2}{x}+\frac12\;>\;0,
\]
and
\[
(\log g)'(x)\;\ge\;\frac{3}{x}+b-\Bigl(1+\frac1x\Bigr)-\frac1x
= \frac{1}{x}+(1-c)\;\ge\;\frac{1}{x}+\frac12\;>\;0.
\]
Hence $f$ and $g$ are increasing on $(0,1)$. Using $e^{ax}-1\ge ax$ and $(2-c)^3\le 8$, $b\le 2$, we get for $x\in(0,1)$
\[
f(x)=\frac{(2-c)^3 x^3 e^{bx}}{e^{ax}-1}
\le \frac{8\,x^3\,e^{2}}{ax}\le 16e^{2}\,x^2.
\]
Consequently,
\[
g(x)=\frac{f(x)}{1-e^{-\frac{a^2}{c}x}}
\le \frac{16e^2\,x^2}{1-e^{-\frac{a^2}{c}x}}
\]
From above, we have:
\[
f'(x)=f(x)\,(\log f)'(x)\le f(x)\Bigl(\frac{3}{x}+b\Bigr)
\le 16e^2 x^2\Bigl(\frac{3}{x}+2\Bigr)
\le 80e^2x.
\]
Similarly,
\[
g'(x)=g(x)\,(\log g)'(x)
\le g(x)\Bigl(\frac{3}{x}+b\Bigr)
\le 16e^2\,\frac{x^2}{1-e^{-\frac{a^2}{c}x}}\Bigl(\frac{3}{x}+2\Bigr)
\le 80e^2\,\frac{x}{1-e^{-\frac{a^2}{c}x}},
\]

\noindent
Let $y:=(1-c)^2x\in(0,x)$. Then $x-y=\bigl(1-(1-c)^2\bigr)x=(2c-c^2)x\le 2cx$.
By the mean value theorem, for some $\xi\in(y,x)\subset(0,1)$,
\[
f(x)-f(y)=f'(\xi)\,(x-y)\le 80e^2\,\xi\,(2cx)\le 160e^2\,c\,x^2,
\]
which yields $f(x)-f((1-c)^2x)\le C_f\,c\,x^2$ where $C_f=160e^2$ is an absolute constant. Likewise, for some $\eta\in(y,x)$,
\[
g(x)-g(y)=g'(\eta)\,(x-y)\le 80e^2\,\frac{\eta}{1-e^{-\frac{a^2}{c}\eta}}\,(2cx)
\le 160e^2\,\frac{c}{1-e^{-\frac{a^2}{c}x}}\,x^2,
\]
since $t\mapsto 1-e^{-t}$ is increasing and $\eta\le x$. This gives
\[
g(x)-g\big((1-c)^2x\big)\le C_g\,\frac{c}{1-e^{-\frac{(1-c)^2}{c}x}}\,x^2
\]
where $C_g=160e^2$ is an absolute constant. \\
\end{proof}

\noindent
\paragraph{Proof of Theorem \ref{main_theorem_non_smooth}.}Now, using the Lemmas discussed above, we provide the proof for Theorem \ref{main_theorem_non_smooth}.


\begin{proof}
We first bound the discretization error term  for the rescaled process in Lemma \ref{final_discretization_error_bound} aggregated across all the intervals. For this, using $h'_k = h_{k} + h_{k-1}$, we bound it as following:
\begin{align*}
    &\sum^{K+1}_{k=2} \frac{e^{-2t_{k-2}}}{e^{2h_{k-1}}-1}  \mathbb{E}\left[\|z_{k-0.5}-\tilde{z}_{k-0.5}\|_2^2\right] \\
    &\overset{}{\leq}  \sum_{k=2}^{K+1}\frac{\left((2C_d+6)d^2 +16d\right)(h'_k)^3e^{h'_k}(e^{h'_k}-1)}{(e^{2h_{k-1}}-1)(1-e^{-2t_{k-2}})^3} \\
        &\quad +\sum_{k=2}^{K+1}\frac{(h'_k)^3e^{h'_k}}{2(e^{2h_{k-1}}-1)}\left[ e^{4t}\left(\frac{10}{13}\mathbb{E}_{q_t}\left[\|\nabla s_r(t,z)\|_F^2\right] +\mathbb{E}_{q_t}[\|s_r(t,z)\|^4] \right) + \frac{5de^{2t}}{4(1-e^{-2t_{k-2}})}\mathbb{E}_{q_t}\left[\|s_r(t,z)\|^2\right] \right]^{t_{k-2}}_{t_k} \\
        &= \sum_{k=2}^{K+1}\frac{\left((2C_d+6)d^2 +16d\right)(h'_k)^3e^{h'_k}(e^{h'_k}-1)}{(e^{2h_{k-1}}-1)(1-e^{-2t_{k-2}})^3} \\
        &\qquad +\frac{(h'_2)^3e^{h'_2}}{e^{2h_1}-1}\,R(t_0)
+\frac{(h'_3)^3e^{h'_3}}{e^{2h_2}-1}\,R(t_1)+\frac{(h'_2)^3e^{h'_2}}{(e^{2h_1}-1)(1-e^{-2t_0})}\,R_1(t_0)
+\frac{(h'_3)^3e^{h'_3}}{(e^{2h_2}-1)(1-e^{-2t_1})}\,R_1(t_1)\\
& \qquad +\sum_{k=2}^{K-1}\left(\frac{(h'_{k+2})^3e^{h'_{k+2}}}{e^{2h_{k+1}}-1}-\frac{(h'_k)^3e^{h'_k}}{(e^{2h_{k-1}}-1)}\right)\!R(t_k)
-\frac{(h'_{K+1})^3}{(e^{2h_{K}}-1)}\,R(t_{K+1})
-\frac{(h'_K)^3}{e^{2h_{K-1}}-1}\,R(t_{K})  \\
& \qquad +\sum_{k=2}^{K-1}\left(\frac{(h'_{k+2})^3e^{h'_{k+2}}}{(e^{2h_{k+1}}-1)(1-e^{-2t_k})}-\frac{(h'_k)^3e^{h'_k}}{(e^{2h_{k-1}}-1)(1-e^{-2t_{k-2}})}\right)\!R_1(t_k) \\
& \qquad -\frac{(h'_{K+1})^3e^{h'_{K+1}}}{(e^{2h_{K}}-1)(1-e^{-2t_{K-1}})}\,R_1(t_{K+1})
-\frac{(h'_K)^3e^{h'_K}}{(e^{2h_{K-1}}-1)(1-e^{-2t_{K-2}})}\,R_1(t_{K})  \\
\end{align*}
where $R(t)=\frac{1}{2}e^{4{t}}\left(\frac{10}{13}\mathbb{E}_{q_{t}}\left[\|\nabla s_r(t,z)\|_F^2\right] +\mathbb{E}_{q_{t}}[\|s_r({t},z)\|^4] \right) \geq 0 $,   $ R_1(t)=\frac{5de^{2t}}{8}\mathbb{E}_{q_{t}}[\|s_r(t,z)\|^2]\geq 0$ . \\

\paragraph{Selecting the step size.} Now for the mentioned choice of the step size $h_k=t_k-t_{k-1}=c\min\{1,t_k\}$, we will have $t_{k-1}=(1-c)t_k,\, h_{k-1}=(1-c)h_{k}$ when $t_k\leq 1$ and $h_k=c$ for remaining. Since $t_0=\delta$, we will have:
    \begin{align*}
        \delta = (1-c)^M \, ; \quad T-1 = c(K+1-M)
    \end{align*}
    for some $M \leq K+2$  with $t_M=1$. Thus, we will have $c \lesssim \frac{\log\left(\frac{1}{\delta}\right) +T}{K}$ and will have a very small value for the mentioned condition $K \geq d(\frac{1}{\delta} + T)$. Also, for the coefficients of terms containing $R$, for $t_k\leq 1$ we will have $t_{k-1}=(1-c){t_k}, h_{k-1}=(1-c)h_k, h_{k-2}=(1-c)^2h_{k}$ and thus, we will have: \\ $\frac{(h_{k+2}+h_{k+1})^3e^{h_{k+2}+h_{k+1}}}{e^{h_{k+1}}-1}-\frac{(h_k+h_{k-1})^3e^{h_k+h_{k-1}}}{e^{2h_{k-1}}-1} = \left(\frac{(2-c)^3h^3_{k+2}e^{(2-c)h_{k+2}}}{e^{(1-c)h_{k+2}}-1}-\frac{(2-c)^3h^3_{k}e^{(2-c)h_k}}{e^{(1-c)h_{k}}-1}\right)$ for $k$ when $t_{k+2}\leq 1$ and $0$ for the rest. This can be written as $f(h_{k+2})-f({h_k})$ where $f(x)=\frac{(2-c)^3x^3e^{(2-c)x}}{e^{({1-c})x}-1}$ would be an increasing function w.r.t. $x$ for $x<1$ in the small $c$ region ($c<0.5$). For this, we will also have $f(x)-f((1-c)^2x) \lesssim cx^2$ (Lemma \ref{utility_lemma}). Similarly for the $R_1$, we have to consider:
    $g(x)= \frac{(2-c)^3x^3e^{(2-c)x}}{(e^{(1-c)x}-1)\left(1-e^{-\frac{(1-c)^2x}{c}}\right)}$ and it will also be increasing on  $(0,1)$ for small $c$ ($c<0.5$) and $g(x)-g((1-c)^2x) \lesssim \frac{c}{1-e^{-\frac{(1-c)^2x}{c}}}x^2$ from Lemma \ref{utility_lemma}.
    Since $h_k$(>0) is an increasing sequence, we can use the upper bound for $R(t)$, $R_1(t)$ using Lemma \ref{score_bound} as:
\begin{align*}
    R(t) \leq \frac{2d^2+5d}{(1-e^{-2t})^2} \, ; \qquad  R_1(t) \leq  \frac{5d^2}{8(1-e^{-2t})}
\end{align*}
Also, the term $C_d$ from Lemma \ref{hess_p_grad_s_norm} is $C_d =\frac{(1+2\frac{\log d}{d}+\frac{6}{d})^{\log d+3}}{(1+\log d)} \leq 12$ for $d\geq 10$ and thus we will have $C_d$ as $O(1)$.
Since $R(t),R_1(t)\geq 0$, the negative terms corresponding to $R({t_{K}}), R(t_{K+1}), R1({t_{K}}), R1(t_{K+1})$ can be dropped and we will finally have:
    \begin{align*}
&\sum^{K+1}_{k=2} \frac{e^{-2t_{k-2}}}{e^{2h_{k-1}}-1}  \mathbb{E}\left[\|z_{k-0.5}-\tilde{z}_{k-0.5}\|_2^2\right] \\
        &\quad \leq \sum_{k=2}^{K+1}\frac{\left((2C_d+6)d^2 +16d\right)(h'_k)^3e^{h'_k}(e^{h'_k}-1)}{(e^{2h_{k-1}}-1)(1-e^{-2t_{k-2}})^3}  \\
        &\qquad +\frac{(h'_2)^3e^{h'_2}}{e^{2h_1}-1}\,R(t_0)
+\frac{(h'_3)^3e^{h'_3}}{e^{2h_2}-1}\,R(t_1)+\frac{(h'_2)^3e^{h'_2}}{(e^{2h_1}-1)(1-e^{-2t_0})}\,R_1(t_0)
+\frac{(h'_3)^3e^{h'_3}}{(e^{2h_2}-1)(1-e^{-2t_1})}\,R_1(t_1)\\
& \qquad +\sum_{k=2}^{K-1}\left(\frac{(h'_{k+2})^3e^{h'_{k+2}}}{e^{2h_{k+1}}-1}-\frac{(h'_k)^3e^{h'_k}}{(e^{2h_{k-1}}-1)}\right)\!R(t_k) \\
& \qquad +\sum_{k=2}^{K-1}\left(\frac{(h'_{k+2})^3e^{h'_{k+2}}}{(e^{2h_{k+1}}-1)(1-e^{-2t_k})}-\frac{(h'_k)^3e^{h'_k}}{(e^{2h_{k-1}}-1)(1-e^{-2t_{k-2}})}\right)\!R_1(t_k) \\
 &\quad \lesssim  \sum_{k=2}^{K+1}\frac{d^2h_k^3}{(1-e^{-2t_{k-2}})^3} +{h^2_2}\,\left(R(t_0) +\frac{R_1(t_0)}{1-e^{-2t_0}}  \right)+ {h^2_3}\,\left(R(t_1) +\frac{R_1(t_1)}{1-e^{-2t_1}}  \right) \\
& \qquad \qquad \;+\;\sum_{k=2}^{M}{ch_{k+2}^2}\left(R(t_k) +\frac{R_1(t_k)}{1-e^{-2t_k}} \right) + \sum_{k=M+1}^{K-1}\frac{c^3R_1(t_k)}{(1-e^{-2t_{k-2}})^2}\\
 & \quad \overset{}{\lesssim} \sum_{k=2}^M \frac{d^2c^3t_k^3}{(1-e^{-2t_{k-2}})^3} + \sum_{k=M+1}^{K+1}\frac{d^2c^3}{(1-e^{-2t_{k-2}})^3} + \frac{c^2t_2^2}{(1-e^{-2t_0})^2} + \frac{c^2t_3^2}{(1-e^{-2t_1})^2} \\
 &\quad \lesssim \sum^{K+1}_{k=2}d^2c^3
    \end{align*}
    where $t_k\leq 1$ for $k\leq M$. Now using Lemma \ref{lemma_second_KL_data_processing_ineq}, Lemma \ref{lemma_first_kl_bw_conditionals}, Lemma \ref{overall_w_2_error_decomp} and the scaling back the above bound on the aggregated error for the rescaled process $\tilde{z}$, we will have (using $u<e^{u}-1<2u$ for $u\in (0,1)$):
    \begin{align*}
        &\KL{p_{t_1}}{\hat{p}_{t_1}}\\
        &\leq  \KL{{p}_{t_{K+1}}}{\hat{p}_{t_{K+1}}} + 
   \E_{{p}_{t_1,..,t_{K+1}}} \left[ \sum_{k=2}^{K+1}
       \KL{{p}_{t_{k-1}|t_k}(\cdot|{x}_{k})}{\hat{p}_{t_{k-1}|t_k}(\cdot|{x}_k)}\right] \\
       &= \KL{{p}_{t_{K+1}}}{\hat{p}_{t_{K+1}}} + \sum_{k=2}^{K+1} {\frac{e^{-2h_{k-1}}}{1-e^{-2h_{k-1}}}} \mathbb{E}\|
      x_{k-0.5}-\tilde{x}_{k-0.5}\|_2^2 \\
      &\qquad \quad + {\frac{e^{-2h_{k-1}}}{1-e^{-2h_{k-1}}}}(e^{h_k+h_{k-1}}-1)^2\mathbb{E}[\|s(t_k,x_{k})- \hat{s}(t_k, x_{k}\|^2]\\
      &\lesssim \KL{{p}_{t_{K+1}}}{\hat{p}_{t_{K+1}}} + \sum_{k=2}^{K+1} d^2c^3 + \sum_{k=2}^{K+1}h_k \mathbb{E}[\|s(t_k,x_{k})- \hat{s}(t_k, x_{k}\|^2]
    \end{align*}    
The last term can be just bounded using Assumption \ref{assumption_score_est} and the first term is the initialization error discussed below.  \paragraph{Initialization Error.} The term $ \KL{{p}_{t_{K+1}}}{\hat{p}_{t_{K+1}}}$ is the error due to initializing the generation using the Normal distribution and can be bounded via convergence of the forward OU process after the total time $T$ \cite{chen2022sampling}:
    \begin{align*}
       \KL{{p}_{t_{K+1}}}{\hat{p}_{t_{K+1}}} \leq (d+m_2)e^{-T}
    \end{align*}
    where $m_2 = \mathbb{E}[\|x_0\|^2]$. Thus, we have the final expression as:
    \begin{align*}
         \KL{p_{t_1}}{\hat{p}_{t_1}}
         &\lesssim (d+m_2)e^{-T} + d^2c^3K + T \varepsilon^2_{score} 
    \end{align*}
\end{proof}

\end{document}